\def\eqref#1{equation~\ref{#1}}
\def\1{\bm{1}}
\DeclareMathAlphabet{\mathsfit}{\encodingdefault}{\sfdefault}{m}{sl}
\SetMathAlphabet{\mathsfit}{bold}{\encodingdefault}{\sfdefault}{bx}{n}
\newtheorem{theorem}{Theorem}
\newtheorem{proposition}{Proposition}
\newtheorem{corollary}{Corollary}
\newtheorem{lemma}{Lemma}
\newtheorem{assumption}{Assumption}
\begin{document}

%

%

\twocolumn[

\aistatstitle{Conditional Vendi Score: Prompt-Aware Diversity Evaluation for Generative AI Models and LLMs}


\aistatsauthor{
Mohammad Jalali$^{1}$
\And
Azim Ospanov$^{1}$
\And
Amin Gohari$^{2}$
\And
Farzan Farnia$^{1}$
}

\aistatsaddress{ $^{1}$Department of Computer Science and Engineering, The Chinese University of Hong Kong\\
$^{2}$Department of Information Engineering, The Chinese University of Hong Kong \\
\texttt{\{mjalali24, aospanov9, farnia\}@cse.cuhk.edu.hk, agohari@ie.cuhk.edu.hk}
} ]

\begin{abstract}
Generative models guided by text prompts are widely evaluated for fidelity and prompt alignment, yet their ability to produce outputs remains underexplored. Existing diversity metrics such as Vendi and RKE, which are based on the von Neumann and Rényi entropies of kernel matrices, were developed for unconditional models and cannot distinguish prompt-induced from model-induced variability. We address this gap by introducing \textit{Conditional-Vendi} and \textit{Conditional-RKE}, diversity measures derived from the conditional entropy of positive semidefinite matrices. These scores isolate model-induced diversity in prompt-guided generation, with Conditional-RKE enjoying an $O(1/\sqrt{n})$ convergence rate. For Conditional-Vendi, we introduce a truncated-spectrum approximation that yields scalable and consistent estimates. Experiments on text-to-image, image-captioning, and LLM tasks show that the conditional scores recover ground-truth diversity orderings and can also guide diffusion models toward more diverse samples. The codebase is available at \href{https://github.com/mjalali/conditional-vendi}{https://github.com/mjalali/conditional-vendi}.
\end{abstract}

\section{Introduction}
\label{sec:intro}
Prompt-guided generative AI systems, including large language models (LLMs) \citep{brown2020language}, text-to-image models \citep{Rombach_2022_CVPR,ramesh2022hierarchical,saharia2022imagen}, and text-to-video models \citep{ho2022videodiffusion,ho2022imagenvideo,openai2024sora}, have achieved remarkable success across a wide range of applications. In these models, sample generation is conditioned on an input text prompt, with the goal of producing outputs aligned to the prompt. This conditional generation mechanism distinguishes prompt-guided models from traditional unconditional generative models \citep{kingma2013auto,goodfellow2014generative}, which aim to mimic the overall data distribution without a guiding input. Because most evaluation metrics for generative models were originally developed in the unconditional setting, the recent literature has sought to design new measures that better capture the properties of text-conditioned models.  

Current evaluation metrics for prompt-guided models primarily emphasize \emph{fidelity}: the quality of generated outputs and their consistency with the input text. A common approach is to compute similarity in a shared embedding space between text and outputs, such as ClipScore \citep{hessel2021clipscore} for text-to-image generation, which uses CLIP embeddings \citep{radford2021learning} to quantify alignment. Such fidelity-based measures ensure that the generated content matches the semantics of the prompt but leave open the question of how to assess the \emph{diversity} of model outputs.

\begin{figure*}
    \centering
    \includegraphics[width=\linewidth]{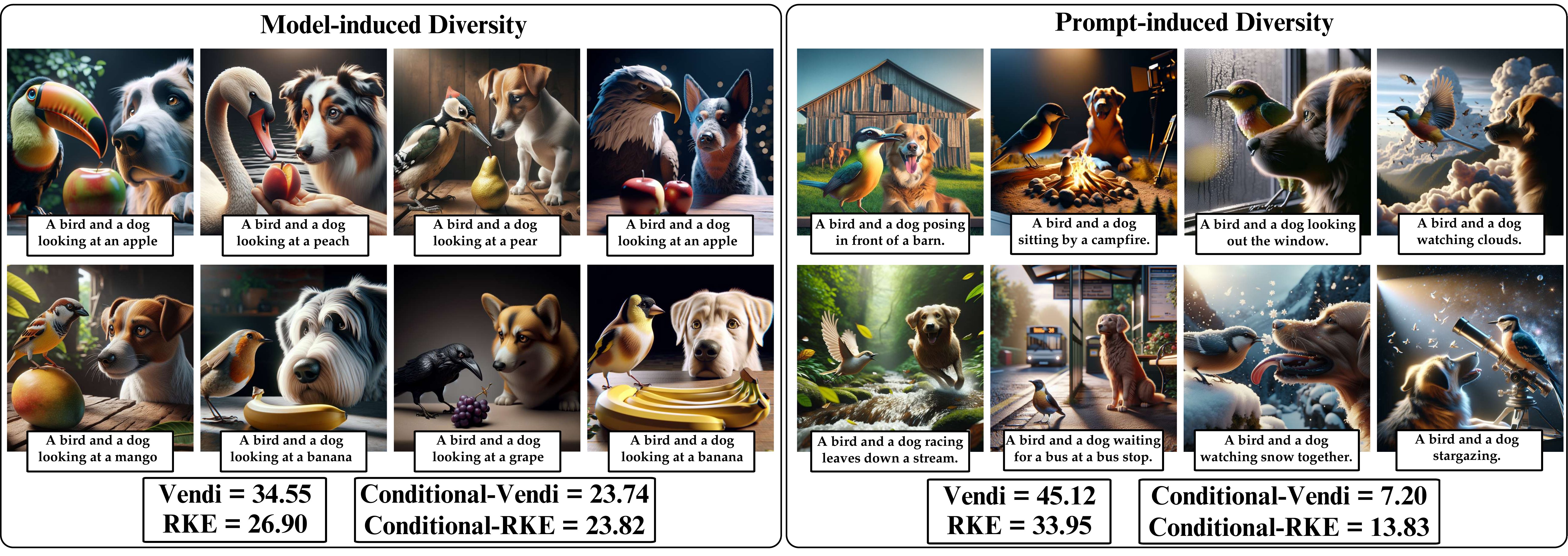}
    \caption{Illustration of \emph{prompt-induced diversity} (left), where variation comes from different prompts, and \emph{model-induced diversity} (right), where variation arises from the generator itself. Unconditional image diversity scores, Vendi and RKE, assign higher diversity to the right side, overlooking prompt effects. Our proposed Conditional-RKE and Conditional-Vendi account for prompts and assign higher diversity to the left side.}
    \label{fig:model_prompt_diversity}
\end{figure*}

Diversity has been extensively studied in unconditional generation, with metrics such as Recall \citep{sajjadi2018assessing,kynkaanniemi2019improved}, Coverage \citep{naeem2020reliable}, Vendi \citep{friedman2022vendi}, and RKE \citep{jalali2023information}. These scores are often applied directly to prompt-guided models, but they conflate two sources of variability: the \emph{prompt-induced diversity}, arising from differences across input prompts, and the \emph{model-induced diversity}, reflecting randomness in outputs for similar prompts. As illustrated in Figure~\ref{fig:model_prompt_diversity}, these two components capture fundamentally different aspects of a model’s behavior, yet existing diversity measures do not disentangle them. This can bias diversity comparisons across models and obscure whether differences are due to richer model variability or merely broader prompt coverage.  

To address this gap, we build on recent entropy-based approaches for unconditional models. The Vendi and RKE scores correspond respectively to the von Neumann entropy and order-2 Rényi entropy of positive semidefinite (PSD) matrices applied to kernel matrices \citep{friedman2022vendi,jalali2023information}. We extend this framework to the \emph{conditional entropy of PSD matrices} \citep{giraldo2014measures} and apply it to kernel matrices, yielding two prompt-aware diversity measures: the \emph{Conditional-Vendi} and \emph{Conditional-RKE} scores. These quantities decompose the kernel-entropy $H(X)$ of generated data $X$ as:
\begin{equation*}
    H\bigl(X\bigr) \, =\, H\bigl(X\, \big|\, T\bigr)\,  + \, I\bigl(X;T\bigr)
\end{equation*}
into the terms of conditional entropy $H(X|T)$ given prompts $T$ and mutual information $I(X;T)$, where $H(X|T)$ serves as a measure of model-induced diversity for a prompt-aware diversity assessment.  

We analyze the statistical behavior of these conditional scores. For Conditional-RKE, we prove an $O(n^{-1/2})$ convergence rate, enabling reliable estimation with moderate samples. For Conditional-Vendi, direct estimation is sample-inefficient due to dimension dependence, affecting its practicality for large-scale generative tasks with sample sizes bounded to several tens of thousands. To address this, we extend the eigenspectrum truncation method for unconditional Vendi score \citep{ospanov2025do} to the conditional setting, which provides scalable and consistent approximate truncated-Conditional-Vendi score.

Figure~\ref{fig:model_prompt_diversity} illustrates the use of (truncated) Conditional-Vendi and Conditional-RKE scores in comparing the diversity of two sets of “dog and bird” samples generated by DALL·E 3. In the first set, the model produces a variety of dog and bird breeds under similar prompts, whereas in the second, the same dog and bird appear in different contextual scenes across diverse prompts. While the prompt-unaware RKE and Vendi scores assign higher diversity to the second set, the prompt-aware Conditional-Vendi and Conditional-RKE scores instead favor the first set, capturing the greater breed-level variation that is orthogonal to prompt differences.

Beyond evaluation, we also leverage prompt-aware diversity scores to guide sample generation in diffusion models. Extending the unconditional Vendi guidance method of \citet{askari2024improving}, we apply Conditional-Vendi guidance to text-conditioned latent diffusion models \citep{rombach2021highresolution}, aiming to promote prompt-aware diversity in sample generation. 

Finally, we validate our framework across text-to-image, text-to-video, and language generation tasks. Using controlled experiments where ground-truth diversity rankings are available, we show that the conditional scores recover the intended rankings and remain computationally tractable at scale. We further demonstrate how Conditional-Vendi can be decomposed across different prompt modes to evaluate diversity conditioned on text categories. In summary, our contributions are as follows:
\begin{itemize}[leftmargin=*]  
    \item We study prompt-aware diversity evaluation for prompt-conditioned generative models.  
    \item We propose \emph{Conditional-Vendi} and \emph{Conditional-RKE} prompt-aware diversity scores.  
    \item We analyze the proposed scores' statistical convergence, and propose a truncation method to reduce the sample complexity of the diversity scores.  
    \item We numerically validate the scores showing correlation with ground-truth model-induced diversity.  
\end{itemize}

\section{Related Work}
\label{sec:related}
\textbf{Evaluation of deep generative models}:  
Metrics for evaluating generative models are generally divided into reference-dependent and reference-free categories \citep{Borji2022}. Reference-dependent metrics compare generated and real data distributions, with common examples including FID \citep{heusel2017gans} and KID \citep{binkowski2018demystifying,wang2023distributedKID}. Other reference-based measures, such as the Inception Score \citep{Salimans2016}, Precision/Recall \citep{sajjadi2018assessing,kynkaanniemi2019improved}, and Density/Coverage \citep{naeem2020reliable}, jointly evaluate fidelity and diversity with respect to a reference dataset.  

Beyond fidelity, several works examine memorization and novelty. These include the authenticity score \citep{alaa2022faithful} and Feature Likelihood Divergence \citep{jiralerspong2023feature} for assessing generalization, as well as the rarity score \citep{han2023rarity} and KEN \citep{zhang24_KEN,zhang2025finc} for quantifying novelty. The memorization metrics are reference-based. In contrast, reference-free evaluations assess quality and diversity directly from the generated data. Notable examples include the Vendi score \citep{friedman2022vendi,pasarkar2023cousins,ospanov2024towards} and RKE score \citep{jalali2023information} for diversity, and \citep{Nguyen2024} for evaluating the quality of generated data. We also note that the diversity-aware online evaluation of generative models has been studied in the related works \citep{hu2025eval,rezaei2024be,hu2025online,hu2025promptwise,jafari2026diversityaware}. We also note the concurrent work by \cite{Ospanov_2025_ICCV} on the prompt-aware diversity evaluation using the Schur complement of CLIP embeddings.  

\textbf{Evaluation of conditional generative models}:  
The evaluation of prompt-based generative models, such as text-to-image and text-to-video systems, has been explored in several recent works. Most metrics focus on measuring alignment between prompts and outputs. A widely used example is CLIPScore \citep{hessel2021clipscore}, which computes cosine similarity in the CLIP embedding space. Other efforts have introduced benchmarks and curated prompt sets to evaluate broader aspects. For instance, HEIM \citep{lee2023holistic} assesses twelve criteria, including text–image alignment, image quality, and bias.  Also, \citet{kim2022mutual} propose the Mutual Information Divergence (MID) score, which fits multivariate Gaussian distributions to text and image representations and estimates their mutual information to quantify relevance in conditional generative models.  

However, alignment- and quality-focused metrics may overlook output diversity. \citet{astolfi2024consistencydiversity} emphasize that metrics centered on style or aesthetics can fail to capture variability across outputs for the same prompt. They propose computing per-prompt diversity using similarity functions and then averaging across prompts. Similarly, \citet{kannen2024aesthetics} extend the Vendi score to the per-prompt setting. Both approaches require generating multiple outputs for each prompt with different seeds. In contrast, our proposed Conditional-Vendi does not require repeated generations; instead, it quantifies model-induced diversity by analyzing variability across prompt types. Our theoretical results interpret Conditional-Vendi as an aggregation of diversity scores across prompt categories.  


\vspace{-2mm}
\section{Preliminaries}
\label{sec:prelim}
\subsection{Generative distributions and notation}

We focus on a conditional generative model that produces a random output $X\in\mathcal{X}$ given an input text prompt $T\in\mathcal{T}$ according to a conditional distribution $P_{X|T}$. For a prompt $T=t$, the model outputs a sample drawn from $P_{X|T=t}$. We consider $n$ sample pairs $\{(t_i,x_i)\}_{i=1}^n$ where the prompts $t_i$ are drawn independently from $P_T$ and, conditional on $t_i$, the outputs $x_i$ are drawn from $P_{X|T=t_i}$ independently across $i$. 

\subsection{Entropy-based diversity scores for unconditional generative models}

Consider generated samples $x_1,\ldots,x_n\in\mathcal{X}$ drawn i.i.d. from the distribution $P_X$ of an unconditional generative model. For a kernel function $k:\mathcal{X}\times\mathcal{X}\to\mathbb{R}$, the (Gram) kernel matrix $K\in\mathbb{R}^{n\times n}$ is
\begin{equation}\label{Eq: Kernel matrix}
    K \,=\, \begin{bmatrix}
    k(x_1,x_1) & \cdots & k(x_1,x_n)\\
    \vdots & \ddots & \vdots\\
    k(x_n,x_1) & \cdots & k(x_n,x_n)
    \end{bmatrix}.
\end{equation}
A function $k$ is called a positive semidefinite (PSD) kernel if the matrix $K$ is PSD for every $n\in\mathbb{N}$ and any choice of $x_1,\ldots,x_n\in\mathcal{X}$. A widely used example is the Gaussian (RBF) kernel with bandwidth $\sigma>0$: $
    k(x,x') = \exp\!\bigl(-{\|x-x'\|_2^2}/{2\sigma^2}\bigr) $.

Suppose $k$ is normalized, i.e., $k(x,x)=1$ for every $x\in\mathcal{X}$. Then the eigenvalues $\lambda_1,\ldots,\lambda_n\ge 0$ of $\tfrac{1}{n}K$ sum to $1$, thus they form a probability distribution. The \emph{von Neumann entropy} of the unit-trace PSD matrix $\tfrac{1}{n}K$ is
\begin{equation}
    H\!\bigl(\tfrac{1}{n}K\bigr) \,:=\, -\mathrm{Tr}\Bigl(\tfrac{1}{n}K \log\!\bigl(\tfrac{1}{n}K\bigr)\Bigr)
    \,=\, \sum_{i=1}^n \lambda_i \log\!\frac{1}{\lambda_i}
\end{equation} \citet{friedman2022vendi} propose using the von Neumann entropy of the normalized kernel matrix to define the \emph{Vendi diversity score}:\vspace{-1mm}
\begin{equation}\label{Eq: Vendi Score}
     \mathrm{Vendi}(x_{1:n})\! :=\! \exp\!\Bigl( H\bigl(\tfrac{1}{n}K\bigr)\Bigr)
     \!=\! \exp\!\Bigl(\sum_{i=1}^n \lambda_i \log \tfrac{1}{\lambda_i}\Bigr)\vspace{-1mm}
\end{equation}
Also, \citet{jalali2023information} propose using \emph{order-2 Rényi entropy} of the normalized kernel matrix, $H_2\!\bigl(\tfrac{1}{n}K\bigr)=\log\!\bigl(1/\sum_{i=1}^n \lambda_i^2\bigr)$, to define the \emph{Rényi Kernel Entropy (RKE)}  score. Denoting the Frobenius norm by $\|\cdot\|_F$, the following holds since for symmetric $\frac{1}{n}K$, the sum of squared eigenvalues is the squared-Frobenius norm:\vspace{-1mm}
\begin{equation}\label{Eq: RKE Score}
     \mathrm{RKE}(x_{1:n}) \,:=\, \exp\!\Bigl(H_2\!\bigl(\tfrac{1}{n}K\bigr)\Bigr)
     \,=\, {\bigl\|\frac{1}{n}K\bigr\|_F^{\,-2}}
\end{equation}

\section{Conditional Vendi and RKE Prompt-Aware Diversity Scores}\vspace{-1mm}
\label{sec:algorithm}
To extend Vendi and RKE to prompt-aware diversity evaluation, we replace the (unconditional) entropy in these scores with the \emph{conditional} entropy of the kernel matrix of generated samples given the kernel matrix of the corresponding prompts. We follow the matrix-based definitions of \citet{giraldo2014measures}, where the joint entropy of two PSD matrices $A$ and $B$ is defined via the von Neumann entropy of the trace-normalized Hadamard (elementwise) product $\tfrac{1}{\mathrm{Tr}(A\odot B)}(A\odot B)$, and the conditional entropy is $H(A\!\mid\!B)=H(A,B)-H(B)$. The Schur product theorem ensures $A\odot B$ is PSD, and therefore the entropy is well-defined.

To evaluate diversity of generated output $X$ given input text prompt $T$, we consider a normalized kernel for outputs $k_{\mathcal{X}}:\mathcal{X}\times\mathcal{X}\to\mathbb{R}$ with $k_{\mathcal{X}}(x,x)=1$ for every $x\in\mathcal{X}$, and a normalized kernel for prompts $k_{\mathcal{T}}:\mathcal{T}\times\mathcal{T}\to\mathbb{R}$ with $k_{\mathcal{T}}(t,t)=1$ for every $t\in\mathcal{T}$. Given pairs $\{(t_i,x_i)\}_{i=1}^n$, define the kernel matrices
\[
K_T=\bigl[k_{\mathcal{T}}(t_i,t_j)\bigr]_{i,j=1}^n,\qquad
K_X=\bigl[k_{\mathcal{X}}(x_i,x_j)\bigr]_{i,j=1}^n.
\]
By normalization, the diagonal entries of $K_T$, $K_X$, and $K_T\odot K_X$ are all $1$, hence $\mathrm{Tr}(K_T)=\mathrm{Tr}(K_X)=\mathrm{Tr}(K_T\odot K_X)=n$. Following the definitions of conditional entropy, the matrix-based conditional entropy specialized for these kernel matrices becomes
\begin{align}\label{Eq: Conditional Entropy}
    H\!\bigl(\tfrac{1}{n}K_X \,\big\vert\, \tfrac{1}{n}K_T\bigr)
    :=&\, H\!\left(\tfrac{1}{n}K_X,\tfrac{1}{n}K_T\right)
    - H\!\left(\tfrac{1}{n}K_T\right) \\
    =&\, H\!\left(\tfrac{1}{n}(K_X\odot K_T)\right)
    - H\!\left(\tfrac{1}{n}K_T\right). \nonumber
\end{align}

\textbf{Conditional-Vendi and Information-Vendi.}
Replacing the von Neumann entropy in Vendi with the conditional entropy in \eqref{Eq: Conditional Entropy}, we define
\begin{align}
    &\mathrm{Conditional}\text{-}\mathrm{Vendi}\bigl(x_{1:n} \,\big\vert\, t_{1:n}\bigr)
     \\
     :=\: &\exp\Bigl(H\!\left(\tfrac{1}{n}(K_X\odot K_T)\right)- H\!\left(\tfrac{1}{n}K_T\right)\Bigr). \nonumber
\end{align}
We also define the matrix-based information:
\begin{align}
    &\mathrm{Information}\text{-}\mathrm{Vendi}\bigl(x_{1:n}; t_{1:n}\bigr)
     \\
    := &\exp\Bigl(H\!\left(\tfrac{1}{n}K_X\right)
    + H\!\left(\tfrac{1}{n}K_T\right)- H\!\left(\tfrac{1}{n}(K_X\odot K_T)\right)\Bigr). \nonumber
\end{align}
These yield the following decomposition of Vendi:
\begin{align}    \mathrm{Vendi}\bigl(x_{1:n}\bigr)
   \: =\;\; &\mathrm{Conditional}\text{-}\mathrm{Vendi}\bigl(x_{1:n} \,\big\vert\, t_{1:n}\bigr) \\
    \qquad &\times \mathrm{Information}\text{-}\mathrm{Vendi}\bigl(x_{1:n} \, ; \, t_{1:n}\bigr). \nonumber
\end{align}

\textbf{Conditional-RKE and Information-RKE.}
Similarly, by using the order-2 Rényi entropy $H_2(\cdot)$ in the definitions, we define the Conditional-RKE and Information-RKE scores:
\begin{align}
    &\mathrm{Conditional}\text{-}\mathrm{RKE}\bigl(x_{1:n} \,\big\vert\, t_{1:n}\bigr)
     \\
    := &\exp\Bigl(H_2\!\left(\tfrac{1}{n}(K_X\odot K_T)\right)- H_2\!\left(\tfrac{1}{n}K_T\right)\Bigr),\nonumber \\
&\mathrm{Information}\text{-}\mathrm{RKE}\bigl(x_{1:n}; t_{1:n}\bigr)
     \\
    := &\exp\Bigl(H_2\!\left(\tfrac{1}{n}K_X\right)
    + H_2\!\left(\tfrac{1}{n}K_T\right)- H_2\!\left(\tfrac{1}{n}(K_X\odot K_T)\right)\Bigr).\nonumber
\end{align}
Considering that for the order-2 entropy of a symmetric PSD $M$, we have $H_2(M)=\log\!\bigl(1/\sum_i \lambda_i(M)^2\bigr)$ and $\sum_i \lambda_i(M)^2=\|M\|_F^2$, and thus the above formulations can be written as follows using their unit diagonals:
\begin{align}\label{Eq: Cond-RKE-defintiion}
\mathrm{Conditional}\text{-}\mathrm{RKE}(x_{1:n} \,\big\vert\, t_{1:n})
&= \frac{\bigl\|K_T\bigr\|_F^{\,2}}
       {\bigl\|K_X\odot K_T\bigr\|_F^{\,2}}, \\
\mathrm{Information}\text{-}\mathrm{RKE}(x_{1:n} \,;\, t_{1:n})
&= \frac{n^2\cdot \bigl\|K_X\odot K_T\bigr\|_F^{\,2}}
       {\bigl\|K_X\bigr\|_F^{\,2}\cdot\,\bigl\|K_T\bigr\|_F^{\,2}}.\nonumber
\end{align}
Therefore, RKE also admits the product decomposition property we discussed for the Vendi score:
\begin{align*}
\mathrm{RKE}\bigl(x_{1:n}\bigr)
&= \mathrm{Conditional}\text{-}\mathrm{RKE}\bigl(x_{1:n} \,\big\vert\, t_{1:n}\bigr) \\
&\quad\times \mathrm{Information}\text{-}\mathrm{RKE}\bigl(x_{1:n}; t_{1:n}\bigr)
\end{align*}

\section{Statistical Convergence of Conditional Diversity Scores}\label{sec: concentration}
\label{sec:convergence}

\begin{figure*}[!t]
    \centering
    \includegraphics[width=\linewidth]{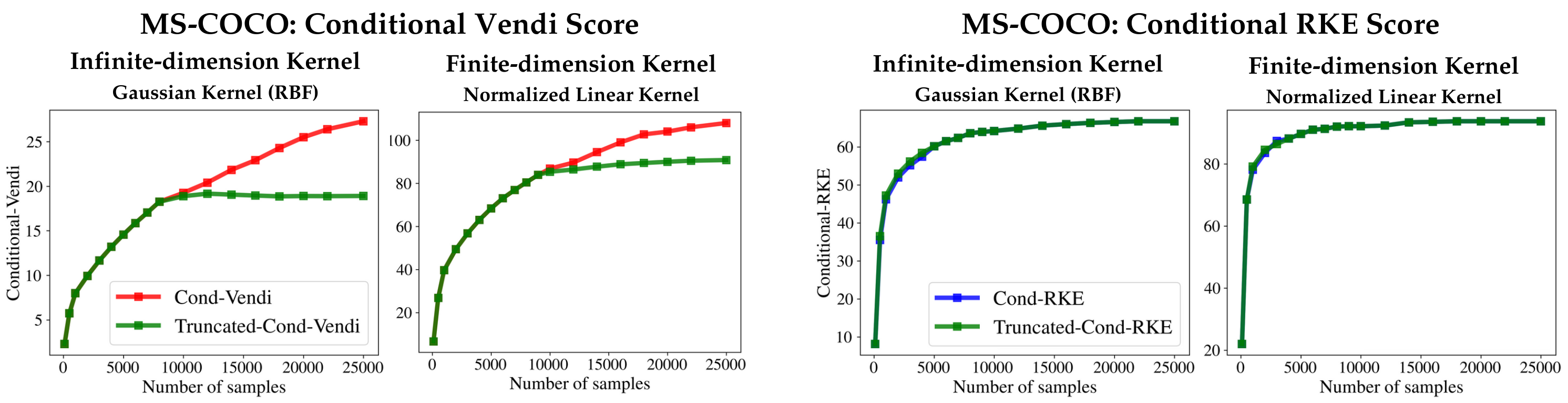}    \caption{Statistical convergence of Conditional-Vendi and RKE scores with different sample sizes on the MS-COCO validation dataset with a finite-dimensional (normalized) linear kernel and infinite-dimensional Gaussian (RBF) kernel.}
    \label{fig:convergence_mscoco_dataset}
\end{figure*}

We now establish finite-sample convergence guarantees for the conditional diversity scores introduced in the previous section. Let $(t_i, x_i)_{i=1}^n$ be i.i.d.\ samples from $P_T \times P_{X|T}$, and let $d_{\mathcal X}$ and $d_{\mathcal T}$ denote the dimensions of the kernel feature maps of $k_\mathcal{X},\, k_\mathcal{T}$. We assume normalized kernels $k_{\mathcal X},k_{\mathcal T}$ with unit self-kernel similarity scores and suppose the following spectral boundedness for the vector of eigenvalues of the population kernel covariance matrices $C_{\mathcal T},\, C_{\mathcal X,\mathcal T}$ of the population distributions $P_T$ and $P_{X|T}$ as 
\begin{align}\label{Eq: spetrcal boundedness}
&0 < m_{\mathcal T} \le \|\lambda(C_{\mathcal T})\|_2 \le M_{\mathcal T}, \nonumber \\
&0 < m_{\mathcal X,\mathcal T} \le \|\lambda(C_{\mathcal X,\mathcal T})\|_2 \le M_{\mathcal X,\mathcal T}.
\end{align}

\textbf{Convergence of Conditional-Vendi Score.}
The following theorem proves a convergence bound on the gap between the empirical and population Conditional-Vendi scores. This result extends the concentration bound of \citet{ospanov2025do} from the unconditional Vendi score to the conditional case.
We defer the theorem proofs to the Appendix, where we also present theoretical results connecting the Conditional-RKE and Conditional-Vendi scores to the component-based aggregation of the unconditional Vendi and RKE scores, given a mixture text distribution with well-separated components.
\begin{theorem}\label{thm:vendi-dim}
Under normalized kernels and spectral boundedness in \eqref{Eq: spetrcal boundedness}, for every $\delta \in (0,1)$, the following holds with probability at least $1-\delta$:
\begin{align*}
&\Bigl|\log\Bigl( \mathrm{Cond\text{-}Vendi}(x_{1:n}\vert t_{1:n})\Bigr)
- \log\Bigl(  \\
&\mathrm{Cond\text{-}Vendi}(P_{T,X})\Bigr)\Bigr|\!\le\!
\sqrt{\!\tfrac{20d_{\mathcal X}d_{\mathcal T}\!\log(4/\delta)}{n}}\!
\log\bigl( n d_{\mathcal X}d_{\mathcal T}\bigr)
\end{align*}
\end{theorem}

The factor $d_{\mathcal X}d_{\mathcal T}$ arises because the Hadamard product $K_X \odot K_T$ corresponds to a tensorized feature space of dimension $d_{\mathcal X}\times  d_{\mathcal T}$. For example, CLIP embeddings with normalized linear kernels ($k_{\text{lin}}(z,z')=\langle z,z'\rangle/\Vert z\Vert_2\Vert z'\Vert_2$) lead to $d_{\mathcal X}= d_{\mathcal T}= 512$, which yields $d_{\mathcal X}d_{\mathcal T}\approx 2.6\!\times\!10^5$. This implies the inefficient sample complexity of the Conditional-Vendi score, given the $O(n^3)$ computational cost of the eigendecomposition of the $n\times n$ kernel matrices in computing the score.

\textbf{Truncated Conditional-Vendi Score.}
To mitigate the discussed curse of dimensionality, we adopt the spectral truncation technique in \citep{ospanov2025do} for unconditional entropy measures. For an integer hyperparameter $t\in\mathbb N$, let $\lambda^{(t)}(M)$ denote the top-$t$ eigenvalues of $M$, all shifted with the same positive constant $c=\bigl(1-\sum_{i=1}^t \lambda_i(M)\bigr)/t$ to sum up to one, and then the $t$-truncated entropy of $M$ is defined:
\[
H^{(t)}(M) \;:=\; \sum_{i=1}^t \lambda^{(t)}_i(M)\,\log \frac{1}{\lambda^{(t)}_i(M)}
\]
We define the truncated Conditional-Vendi score as
\begin{align}
& \mathrm{Conditional\text{-}Vendi}^{(t)}(x_{1:n}\vert t_{1:n}) \nonumber
\\ := \:&\exp\Bigl(H^{(t)}\!\bigl(\tfrac{1}{n}(K_X\odot K_T)\bigr)
- H^{(t)}\!\bigl(\tfrac{1}{n}K_T\bigr)\Bigr).
\end{align}

\begin{theorem}\label{thm:vendi-trunc}
Under normalized kernels and spectral boundedness in (\ref{Eq: spetrcal boundedness}), for any fixed $t \in \mathbb N$ and $\delta \in (0,1)$, the following holds with probability at least $1-\delta$:
\begin{align*}
&\Bigl|\log\Bigl( \mathrm{Cond\text{-}Vendi}^{(t)}(x_{1:n}\vert t_{1:n})\Bigr)- \log\Bigl( \\
& \mathrm{Cond\text{-}Vendi}^{(t)}(P_{T,X})\Bigr) \Bigr|\le
\sqrt{\frac{20\,t\,\log(4/\delta)}{n}}\log(nt)
\end{align*}
\end{theorem}

The truncated estimator achieves an $\tilde O(\sqrt{t/n})$ rate independent of $d_{\mathcal X},d_{\mathcal T}$. In practice, choosing $t$ between $10^3$ and $10^4$ captures most of the spectral mass while enabling efficient computation via partial eigendecomposition in $O(n^2t)$ time.

\textbf{Convergence of Conditional-RKE Score.}
In contrast, Conditional-RKE avoids dimension dependence as long as the prompt and output underlying entropies are bounded. The closed form in \eqref{Eq: Cond-RKE-defintiion} depends only on Frobenius norms computable in $O(n^2)$. The following is our concentration bound for the Conditional-RKE score.
\begin{theorem}\label{thm:crke}
Under normalized kernels and spectral boundedness in \eqref{Eq: spetrcal boundedness}, for every $\delta\in(0,1)$, the following holds with probability at least $1-\delta$:
\begin{align*}
&\bigl|\mathrm{Cond\text{-}RKE}(x_{1:n}\vert t_{1:n})
- \mathrm{Cond\text{-}RKE}(P_{T,X})\bigr| \\
&\le\;
\frac{32\bigl(\frac{M_{\mathcal T}}{m_{\mathcal X,\mathcal T}}\bigr)^{3}\bigl(\frac{2}{m_{\mathcal T}} + \frac{2M_{\mathcal X,\mathcal T}}{m_{\mathcal T}^2}\bigr)}{\sqrt{n}}\Bigl(1+\sqrt{2\log\tfrac{4}{\delta}}\Bigr).
\end{align*}
\end{theorem}
Figure~\ref{fig:convergence_mscoco_dataset} illustrates the convergence of the original and truncated (with $t$=10,000) Conditional-Vendi and Conditional-RKE scores on the MS-COCO (text,image) benchmark using standard CLIP text and DINOv2 image embeddings. The truncated and Conditional-RKE scores stabilize before 15,000 samples, whereas the Conditional-Vendi does not converge within the computationally feasible size $n$=25,000.  

\section{Conditional-Vendi Guidance for Text-Guided Diffusion Models}\label{sec: maintext guidance}\vspace{-2mm}

\citet{askari2024improving} demonstrate that maximizing the Vendi score during sampling can enhance unconditional diversity in (unconditional) diffusion model sample generation. Building on our proposed formulation of Conditional-Vendi, we extend this idea to text-conditioned latent diffusion models \citep{Rombach_2022_CVPR}. Specifically, we employ the \emph{truncated Conditional-Vendi} score as a guidance objective, which encourages each new latent to increase the prompt-aware entropy of the joint kernel spectrum. This approach ensures that generated outputs diversify along dimensions relevant to prompt variability.

Let $\mathrm{Conditional\text{-}Vendi}^{(t)}(z_{1:n}\vert t_{1:n})$ denote the truncated Conditional-Vendi score in the latent space $\mathcal{Z}$. At  step $\tau$, we augment the classifier-free update \citep{Ho2022ClassifierFreeDG} with a diversity ascent step:
\begin{align}\label{eq:trunc-cv-guidance}
\boldsymbol{z}^{(n)}_{\tau-1}
\;\leftarrow&\;
\mathrm{Sampler}\!\bigl(\boldsymbol{z}^{(n)}_\tau,\,
\hat{\epsilon}_\theta(\boldsymbol{z}^{(n)}_\tau,\tau,t_n)\bigr)\\
\; &+\eta_\tau\,\nabla_{\boldsymbol{z}^{(n)}} \mathrm{Conditional\text{-}Vendi}^{(t)}(z_{1:n}\mid t_{1:n}),\nonumber
\end{align}
where $\eta_\tau>0$ is the guidance scale at iteration $\tau$. Equation~\ref{eq:trunc-cv-guidance} is the prompt-aware analogue of unconditional Vendi guidance, but uses the truncated spectrum of the joint kernel for scalability. 
The same construction can also be applied with the Conditional-RKE score as discussed further in the Appendix.

\begin{figure*}[!t]
    \centering
    \includegraphics[width=\linewidth]{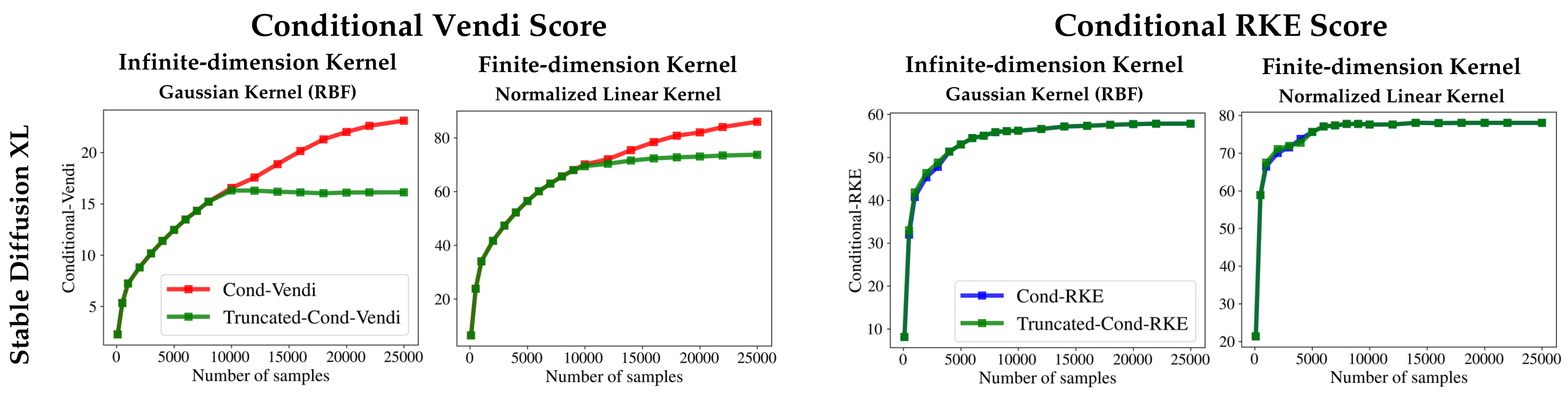}
    \caption{Statistical convergence of Conditional-Vendi scores with different sample sizes on data generated by SDXL using MS-COCO validation set prompts with finite-dimension cosine similarity and infinite-dimension Gaussian kernel. DINOv2 and CLIP embeddings are used for image and text modalities, respectively.}
    \label{fig:convergence_finite_infinite_kernels}
\end{figure*}

\section{Numerical Results}
We numerically evaluated Conditional-Vendi and Conditional-RKE scores for four types of conditional generative models: 1) text-to-image, 2) text-to-video, 3) image-captioning, and 4) Large Language Models. For text-to-image, we tested Flux \citep{flux_2024}, Stable Diffusion 2.1 \citep{Rombach_2022_CVPR}, Stable Diffusion XL \citep{podell2024sdxl}, GigaGAN~\citep{kang2023gigagan}, Kandinsky~\citep{razzhigaev2023kandinsky}, and PixArt~\citep{chen2023pixartalpha, chen2024pixartdelta}. For video, we used VideoCrafter1~\citep{chen2023videocrafter1}, Show-1~\citep{zhang2023show}, and Open-Sora~\citep{opensora}. For image-captioning, we tested BLIP~\citep{li2022blip}, GIT~\citep{wang2022git}, and GPT4o-mini~\citep{openai_gpt-4_2024}. For LLMs, we used Llama \citep{touvron2023llama2} and Gemma \citep{gemmateam2024}.


\textbf{Embeddings used in the evaluation of generative models.} Unlike standard embedding-based scores for text-to-image models such as CLIPScore \citep{hessel2021clipscore}, which require the same embedding model for the text and generated image, our proposed scores allow different feature extractors for text and generated samples. In our experiments, we followed \citep{stein2023exposing, kyn2023},  to use the DINOv2 \citep{oquab2023dinov2} embedding for image data. For text data, we used Gemini \citep{gemini} and CLIP \citep{radford2021learning}, and for video samples, following the video evaluation literature \citep{kim2024stream, Saito2020VIS, unterthiner2019fvd}, we used I3D \citep{i3d}. To select the bandwidth parameter, we followed the previous works \citep{jalali2023information, ospanov2024towards} and we set the truncation parameter $t$ to $10,000$ as suggested in \citep{ospanov2025do}. For the detailed experimental setup, we refer to the Appendix.

\begin{figure*}[t]
    \centering
    \begin{subfigure}{\linewidth}
        \centering
        \includegraphics[width=\linewidth]{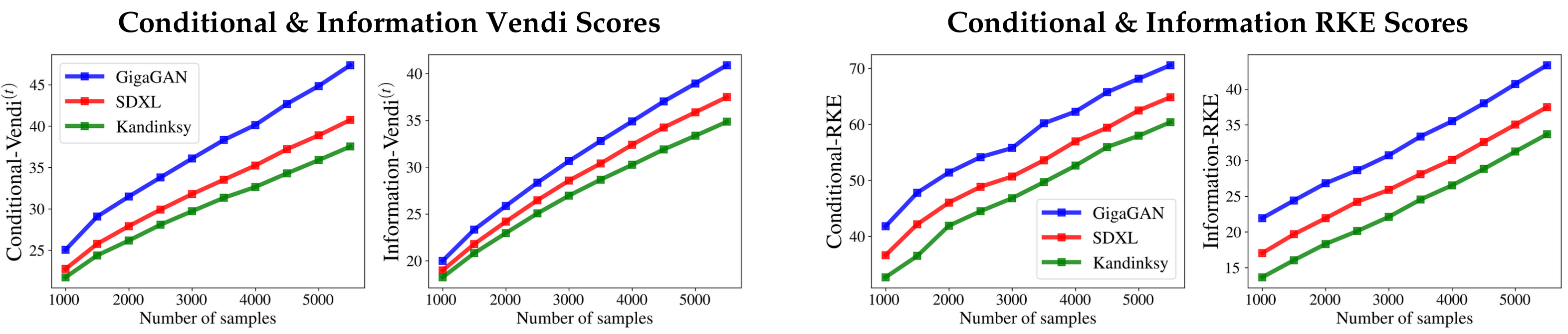}
    \end{subfigure}
    \caption{Conditional and Information Vendi and RKE score comparison across text-to-image models. We clustered MS-COCO prompts into k groups and generated images for each cluster center. Within each cluster, we paired prompts with identical images. The results show increasing diversity and stronger correlation as the number of clusters grows, indicating that clusters become more relevant and diverse with finer partitioning.}
    \label{fig:clustering-text-vendi}
\end{figure*}


\textbf{Convergence Analysis of Conditional-Vendi and Conditional-RKE.}
To assess the convergence of the Conditional-Vendi and Conditional-RKE scores, we conducted experiments for different sample sizes on samples generated with SDXL and Kandinsky using prompts from the MS-COCO 2014 validation set. We used the cosine similarity for the finite-dimensional kernel and the Gaussian kernel for the infinite-dimensional kernel.
Our results, presented in Figure~\ref{fig:convergence_finite_infinite_kernels}, show that for RKE, Conditional-RKE converged, while for Conditional-Vendi, the non-truncated score did not converge; our proposed truncated Conditional-Vendi converged with ~15000 samples. Additional results are provided in the Appendix.


\textbf{Quantifying model-induced diversity via Conditional-Vendi.}
To illustrate how Conditional-Vendi correlates with the model-induced diversity, we considered an experiment were we generated 10 types of animals using Stable Diffusion XL and used two sets of prompts generated using GPT-4o \citep{openai_gpt-4_2024} where in the first set, the types of animals were not specified, while in latter, the animal was explicityly mentioned. As shown in Figure~\ref{fig:animals-sdxl}, increasing the number of animal types led to growth of the Vendi score, regardless of whether the animal type was mentioned in the prompt or not. However, Conditional-Vendi only increased when the types of animal was not specified in the prompts and in the second case, where the types were mentioned in the prompt, Conditional-Vendi slightly increased showing that it only focused on the diversity coming from the background or other parts of images except the animal types. We provided additional experiments with different diffusion models and different concepts in the Appendix~\ref{additional_model_diversity_exp}.

\begin{figure*}[t]
    \centering
    \includegraphics[width=\linewidth]{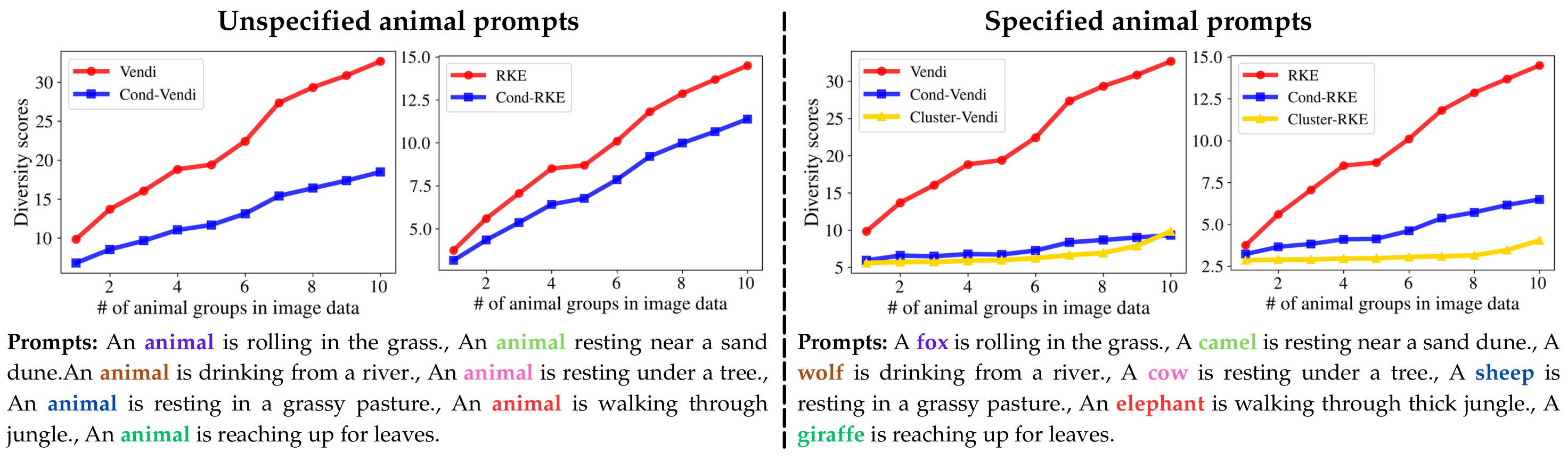}    
    \includegraphics[width=\linewidth]{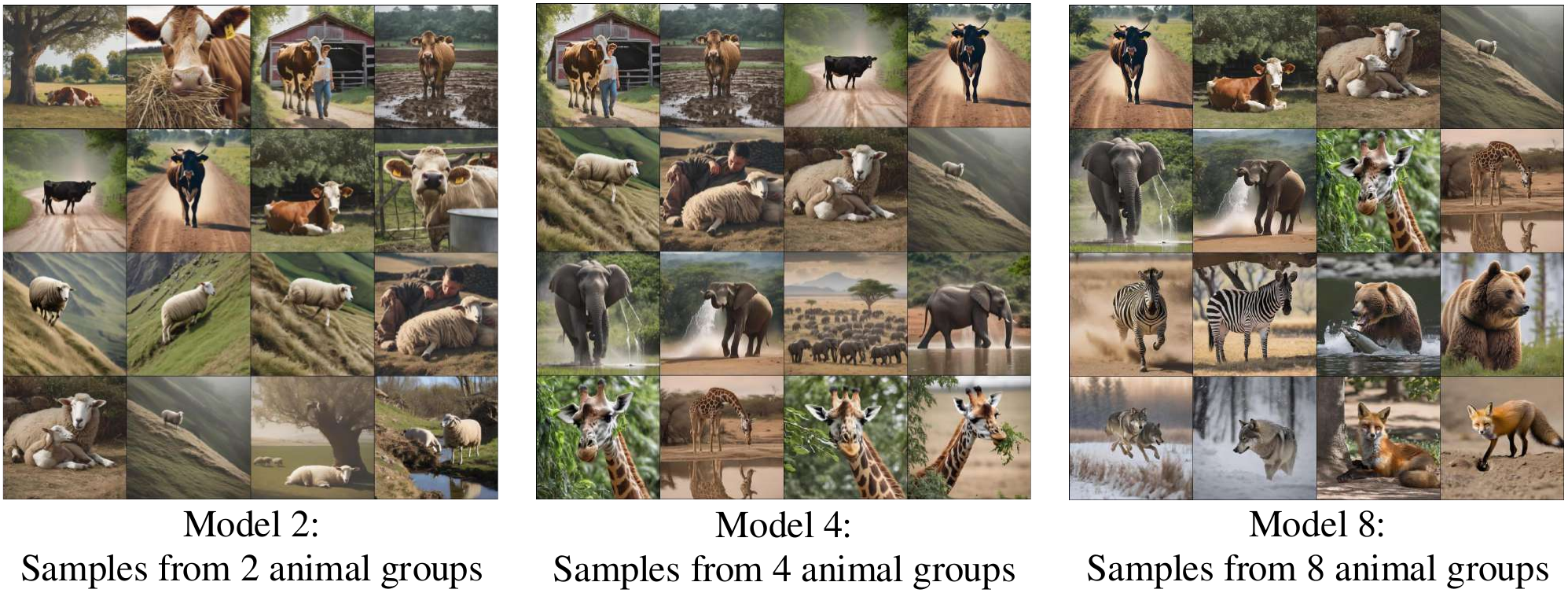}
    \caption{Evaluated Conditional-Vendi and Vendi scores on animal samples generated by SDXL. (Left Plot) We do not specify the animal types in the prompt (Right Plot) we specify the animal types in the prompt.}
    \label{fig:animals-sdxl}
\end{figure*}

\begin{figure*}
    \centering
    \includegraphics[width=\linewidth]{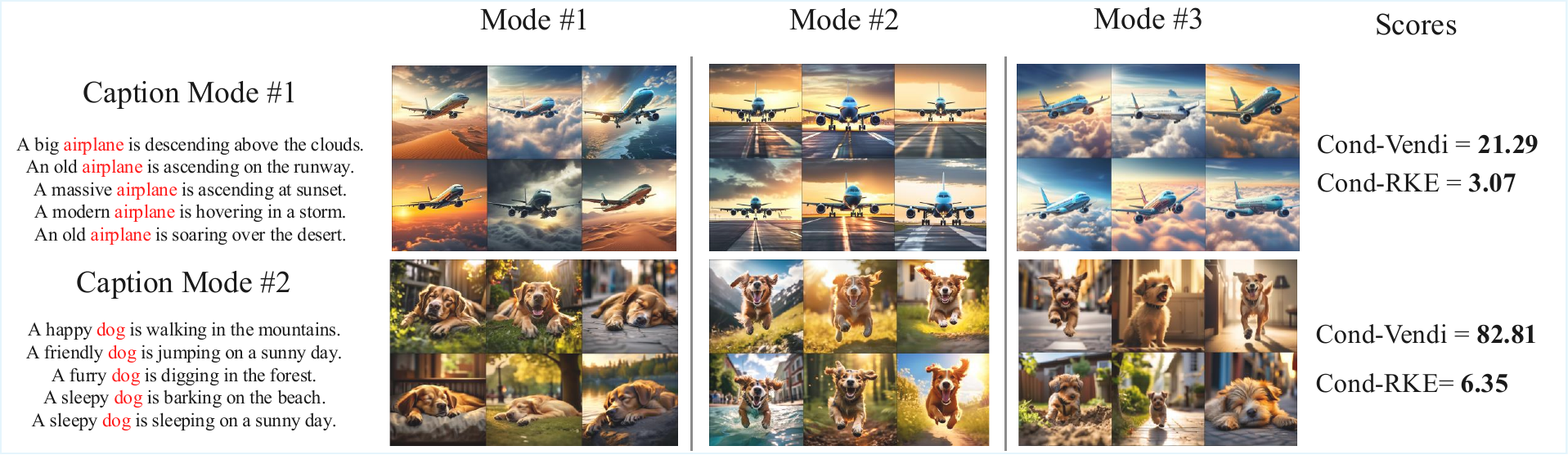}
    \caption{Quantifying image diversity of PixArt-$\alpha$–generated outputs for the top-2 prompt clusters (from 10,000 GPT-4o–generated prompts).}
    \label{fig:inequities_pixart}
\end{figure*}

\textbf{Measuring Conditional-Vendi across prompt types.}
To measure Conditional-Vendi conditioned on the prompt type, we created 10,000 prompts with different categories using GPT-4o and generated the corresponding images with the text-to-image models.
We reported Conditional-Vendi and RKE for the top 3 groups in the text data on PixArt-$\alpha$, Stable Diffusion XL
and FLUX 
text-to-image generative models.
As shown in Figure~\ref{fig:inequities_pixart}, Figure~\ref{fig:inequities_pixart_appendix}, and Figure~\ref{fig:inequities_flux}, we observed the same behavior during these experiments: the Conditional-Vendi score for "dog" prompts was significantly higher than for the "airplane" and "sofa" categories. This observation suggests that the outputs of generative models are unbalanced when presented with different groups of text prompts.

\begin{table*}[t]
\centering
\caption{Quantitative comparison of guidance methods on Stable Diffusion XL}
\resizebox{\linewidth}{!}{%
\begin{tabular}{lcccccc}
\toprule
\textbf{Guidance Method} & \textbf{CLIPScore $\uparrow$} & \textbf{KD$\times 10^2\downarrow$} & \textbf{Cond-Vendi$_\text{DINOv2
}$ $\uparrow$} & \textbf{Vendi$_\text{DINOv2
}$ $\uparrow$} & \textbf{In-batch Sim.$\times 10^2\downarrow$} \\
\midrule
Vendi$_\text{Latent
}$& 30.41 & 35.40 & 29.85 & 309.45 & 80.06 \\
Conditional-Vendi$_\text{Latent
}$& 30.47 & 29.14 & 32.45 & 325.20 & 78.02 \\
\bottomrule
\end{tabular}
} 
\label{tab:diversity_guidance}
\end{table*}

\begin{figure*}[h]
    \centering
    \includegraphics[width=\linewidth]{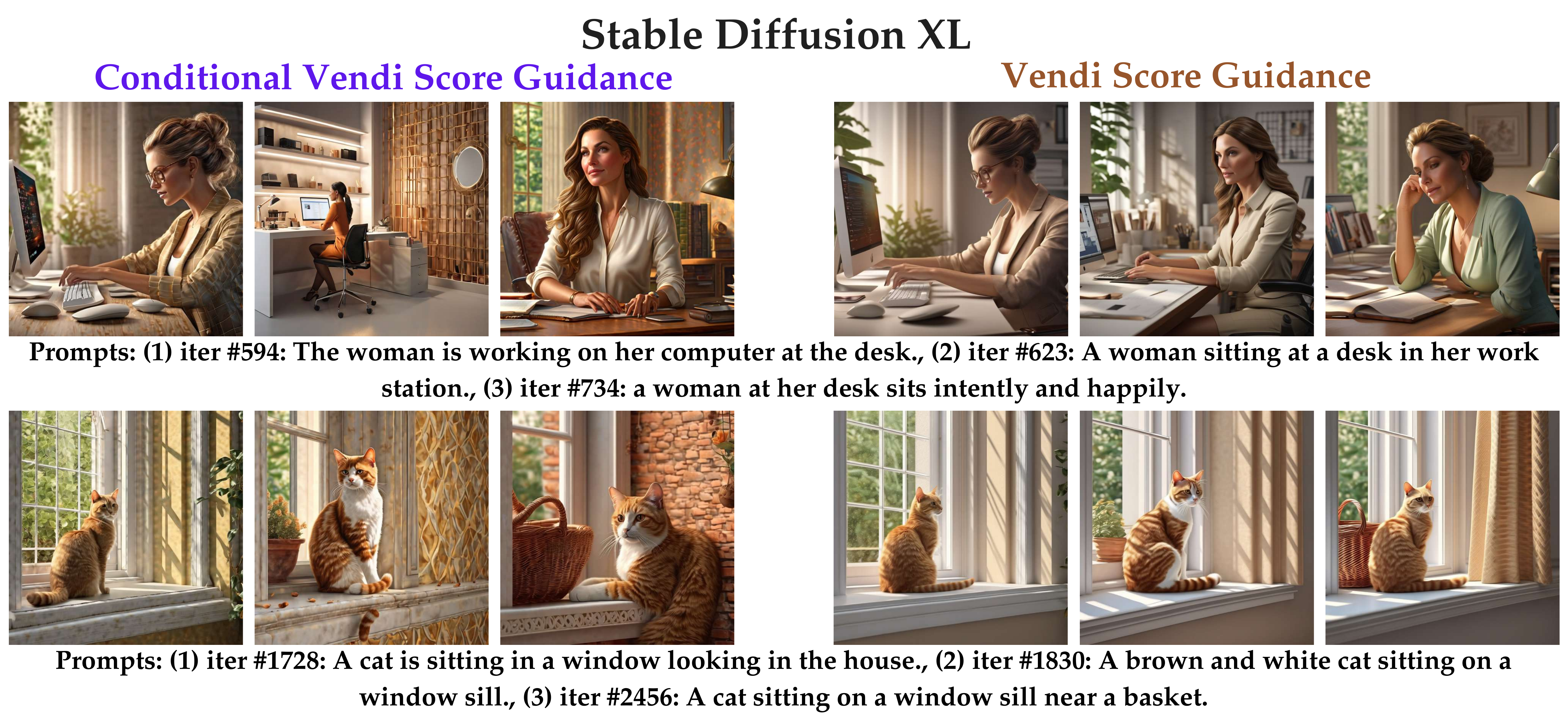}
    \caption{Qualitative comparison of Conditional-Vendi score guidance vs. Vendi score guidance using SD-XL.}
    \label{fig:qualitative_sdxl}
\end{figure*}
\begin{figure*}[h]
    \centering
    \includegraphics[width=0.98\linewidth]{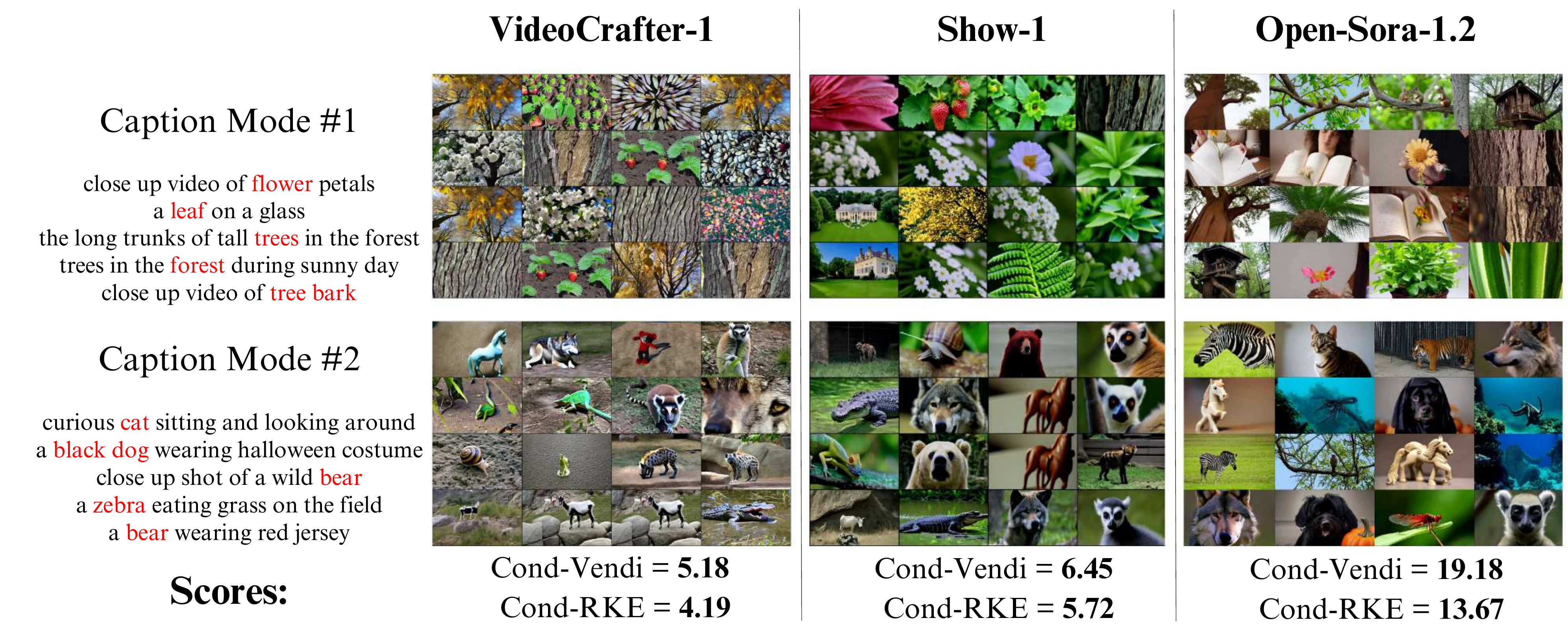}
    \caption{Measuring Conditional-Vendi and Information-Vendi for text-to-video models}
    \label{fig:text-to-video-clustering}
\end{figure*}

\textbf{Text-to-Video Model Evaluation.}
For the experiments on video data, to ensure the fairness of our evaluation, we used VBench samples \citep{huang2023vbench}, which generated samples belong to the 8 content categories. In Figure~\ref{fig:text-to-video-clustering}, we used VideoCrafter-1, Show-1, and Open-Sora-1.2. We observed that VideoCrafter videos look less diverse and, in some cases, may not correlate significantly with the captions when compared to Open-Sora. Confirming this observation, the Conditional-Vendi and Information-Vendi scores were lower for VideoCrafter than those for Open-Sora.

\textbf{Large Language Models Evaluation.}
To evaluate Conditional-Vendi and RKE on LLMs, we varied the temperature parameter and generated 20K short stories with Llama 2 for each temperature setting as shown in Table~\ref{tab:llama_temperature}. We also provided a comparison of the generated prompts in Figure~\ref{fig:llama_samples}. The dataset covered 10 genres, each with 20 distinct subjects and themes. We further tested Conditional-Vendi and Conditional-RKE scores on Gemma 3 \cite{team2025gemma} and Phi 4 Mini \cite{microsoft2025phi4mini}. As shown in Table~\ref{tab:llama_temperature}, both Conditional-Vendi and RKE increase with higher temperatures, indicating that the outputs become more diverse. Additional numerical results on other LLMs are presented in the Appendix.

\begin{table}[h]
\centering
\renewcommand{\arraystretch}{1.15}
\caption{Conditional Vendi and RKE Scores evaluated for Llama~2 with different temperature parameters.}
\resizebox{\linewidth}{!}{%
\begin{tabular}{lcccc}
\toprule
\textbf{Method} & \textbf{$T=0.4$} & \textbf{$T=0.7$} & \textbf{$T=1.0$} & \textbf{$T=1.3$}\\
\midrule
Conditional-Vendi & 45.38 & 46.73 & 48.26 & 49.60 \\
Conditional-RKE & 41.45 & 44.67 & 46.29 & 48.45 \\
\bottomrule
\end{tabular}
} 
\label{tab:llama_temperature}
\end{table}

\textbf{Image-Captioning Evaluation.}
For image captioning, we used 10 classes from the ImageNet dataset as input for BLIP-2, GIT and GPT4o-mini. In Figure~\ref{fig:image-captioning}, we compared captions for the top three groups of images: gas pump, church, and cassette player. GIT generated more diverse captions compared to BLIP, which was confirmed by the Conditional-Vendi scores. On the other hand, GPT4o-mini generated longer and more detailed captions compared to GIT, which was also reflected in the evaluated Conditional-Vendi and Information-Vendi scores.

\begin{figure*}[t]
    \centering
    \includegraphics[width=0.96\linewidth]{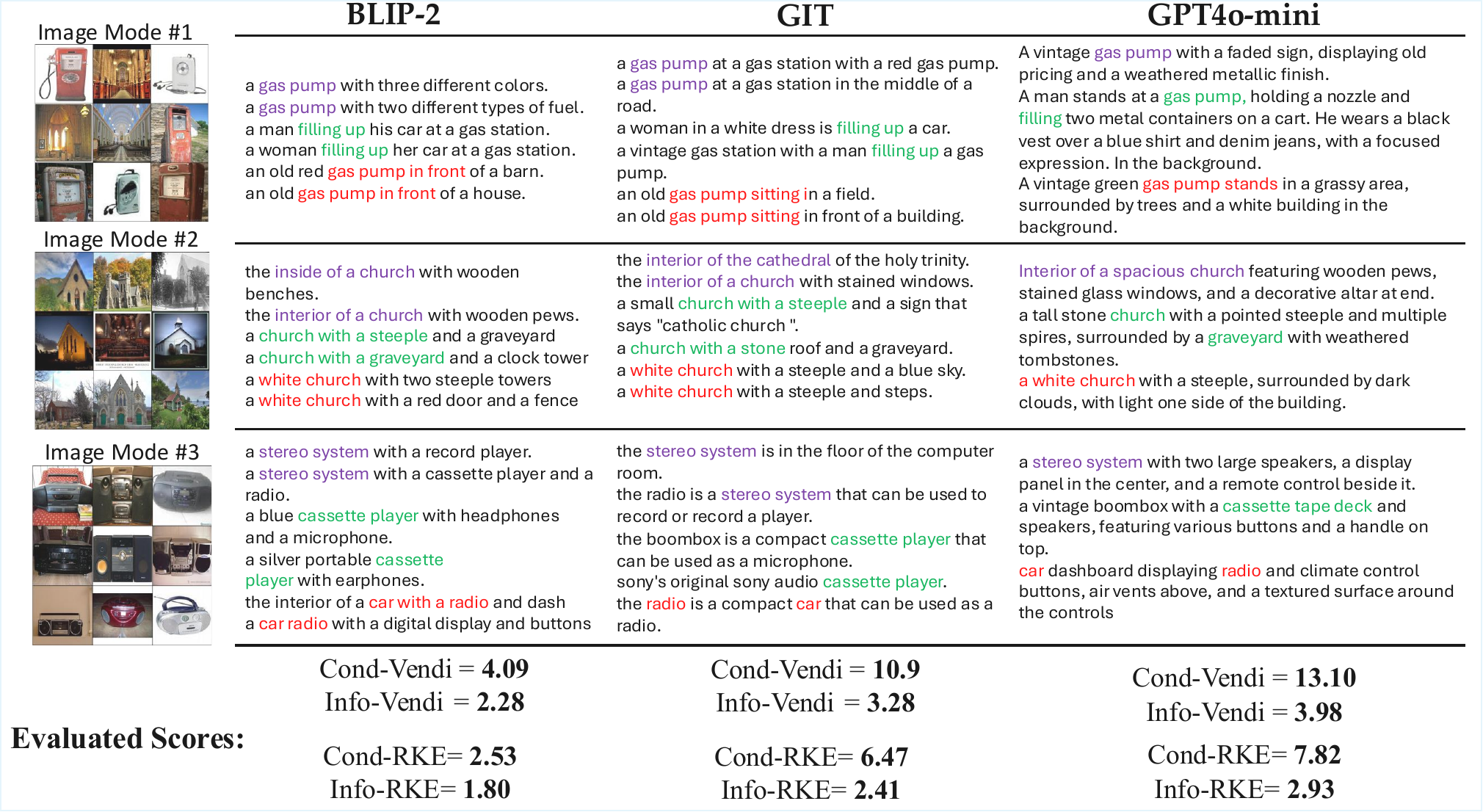}
    \caption{ Conditional-Vendi and Information-Vendi of image-captioning models for 3 image types}
    \label{fig:image-captioning}
\end{figure*}

\textbf{Conditional-Vendi Guidance for Diverse Image Generation.}
To demonstrate the advantages of using prompt-aware metrics over unconditional Vendi score, we investigated how they can improve the sample diversity in latent diffusion models. Specifically, we guided the model with both Truncated-Conditional-Vendi and the standard Vendi score, following the method introduced in \citep{askari2024improving}.

In our experiments, we applied guidance to Stable Diffusion-XL and PixArt-$\Sigma$ in the latent space rather than in the ambient space. We found that applying guidance in the latent space can improve image diversity and quality while also significantly reducing computational costs. The numerical experiments were conducted on 2$\times$NVIDIA GeForce RTX 4090 GPUs, each of which has 22.5 GB of memory.

Figure~\ref{fig:qualitative_sdxl} shows qualitative results using Stable Diffusion-XL, highlighting how prompt-aware guidance can lead to more relevant image generations. We also provide quantitative results of Vendi vs. Conditional-Vendi guidance methods on Stable Diffusion XL in Table~\ref{tab:diversity_guidance}, indicating that Conditional-Vendi guidance improved the sample diversity (Vendi and in-batch similarity) while maintaining text-image alignment in the CLIPScore and KD metrics.
For the detailed experimental setup and additional results, we refer to the Appendix.

\section{Conclusion and Limitations}
We introduced Conditional-Vendi and Conditional-RKE as prompt-aware diversity scores for generative models, extending Vendi and RKE to kernel matrices of prompt–output pairs to separate model-induced diversity from prompt-induced variation. Our theoretical analysis established finite-sample convergence guarantees and a truncated-spectrum approximation for scalability, while experiments on benchmark datasets showed that the proposed scores capture intuitive aspects of diversity that unconditional metrics overlook. A limitation of the approach would be its reliance on high-quality embeddings, which may introduce biases for an arbitrarily selected embedding, as also discussed in \citep{stein2023exposing}. Beyond evaluation, we demonstrated that Conditional-Vendi can also guide generation toward diverse outputs, and we highlight future applications in using these scores as regularizers for training or as fairness measures to assess demographic consistency in generative models.


\clearpage
\clearpage

\section*{Acknowledgments}
The work of Farzan Farnia is partially supported by a grant from the Research Grants Council of the Hong Kong Special Administrative Region, China, Project 14209920, and is partially supported by CUHK Direct Research Grants with CUHK Project No. 4055164 and 4937054.
The work of Amin Gohari is supported by the Hong Kong Research Grants Council (RGC) under grant number 14310025. The work is also supported by a grant under 1+1+1 CUHK-CUHK(SZ)-GDSTC Joint Collaboration Fund. 
Finally, the authors would like to sincerely thank the anonymous reviewers for their insightful feedback and constructive suggestions.

\bibliography{ref}
\bibliographystyle{apalike}





\section*{Checklist}

\begin{enumerate}

  \item For all models and algorithms presented, check if you include:
  \begin{enumerate}
    \item A clear description of the mathematical setting, assumptions, algorithm, and/or model. [Yes]
    \item An analysis of the properties and complexity (time, space, sample size) of any algorithm. [Yes]
    \item (Optional) Anonymized source code, with specification of all dependencies, including external libraries. [Yes]
  \end{enumerate}

  \item For any theoretical claim, check if you include:
  \begin{enumerate}
    \item Statements of the full set of assumptions of all theoretical results. [Yes]
    \item Complete proofs of all theoretical results. [Yes]
    \item Clear explanations of any assumptions. [Yes]     
  \end{enumerate}

  \item For all figures and tables that present empirical results, check if you include:
  \begin{enumerate}
    \item The code, data, and instructions needed to reproduce the main experimental results (either in the supplemental material or as a URL). [Yes]
    \item All the training details (e.g., data splits, hyperparameters, how they were chosen). [Yes]
    \item A clear definition of the specific measure or statistics and error bars (e.g., with respect to the random seed after running experiments multiple times). [Yes]
    \item A description of the computing infrastructure used. (e.g., type of GPUs, internal cluster, or cloud provider). [Yes]
  \end{enumerate}

  \item If you are using existing assets (e.g., code, data, models) or curating/releasing new assets, check if you include:
  \begin{enumerate}
    \item Citations of the creator If your work uses existing assets. [Not Applicable]
    \item The license information of the assets, if applicable. [Yes]
    \item New assets either in the supplemental material or as a URL, if applicable. [Not Applicable]
    \item Information about consent from data providers/curators. [Not Applicable]
    \item Discussion of sensible content if applicable, e.g., personally identifiable information or offensive content. [Not Applicable]
  \end{enumerate}

  \item If you used crowdsourcing or conducted research with human subjects, check if you include:
  \begin{enumerate}
    \item The full text of instructions given to participants and screenshots. [Not Applicable]
    \item Descriptions of potential participant risks, with links to Institutional Review Board (IRB) approvals if applicable. [Not Applicable]
    \item The estimated hourly wage paid to participants and the total amount spent on participant compensation. [Not Applicable]
  \end{enumerate}

\end{enumerate}

\clearpage

\onecolumn
\aistatstitle{Supplementary Materials: Conditional Vendi Score: Prompt-Aware Diversity Evaluation for Text-Guided Generative AI Models}
\appendix

\section{Proofs of Theorems in Section~\ref{sec: concentration}}
\subsection{Formal Statement of Assumptions and Theorems}

\begin{assumption}[Normalized kernels]\label{ass:kernels}
$k_{\mathcal X}$ and $k_{\mathcal T}$ are normalized  kernel functions satisfying
$k_{\mathcal X}(x,x)=k_{\mathcal T}(t,t)=1$ for all $x\in\mathcal X$ and $t\in\mathcal T$.
\end{assumption}

\begin{assumption}[Population kernel covariance norm bounds]\label{ass:pop-bounds}
There exist positive constants $m_{\mathcal{X},\mathcal{T}},M_{\mathcal{X},\mathcal{T}},m_{\mathcal{T}},M_{\mathcal{T}}$ such that, for the Hilbert-Schmidt norm of population kernel covaraince matrices $C_{\mathcal T} = \mathbb{E}_{t\sim P_T}[\phi_\mathcal{T}(t)\phi_\mathcal{T}(t)^\top]$ and $C_{\mathcal X,\mathcal T} = \mathbb{E}_{t\sim P_T,x\sim P_{X|Tt}}[(\phi_\mathcal{T}(t)\otimes \phi_\mathcal{X}(x))(\phi_\mathcal{T}(t)\otimes \phi_\mathcal{X}(x))^\top]$, 
\[
0<m_{\mathcal X,\mathcal T}\le \|C_{\mathcal X,\mathcal T}\|_{\mathrm{HS}}\le M_{\mathcal X,\mathcal T},
\qquad
0<m_{\mathcal T}\le \|C_{\mathcal T}\|_{\mathrm{HS}}\le M_{\mathcal T}.
\]
Note that the above assumption is identical to the following inequalities for the vector of the eigenvalues of the kernel covariance matrices,
denoted by $\widetilde{\boldsymbol\lambda}_{\mathcal T}$ and $\widetilde{\boldsymbol\lambda}_{\mathcal X,\mathcal T}$,
\[
0<m_{\mathcal X,\mathcal T}\le \|\widetilde{\boldsymbol\lambda}_{\mathcal X,\mathcal T}\|_2\le M_{\mathcal X,\mathcal T},
\qquad
0<m_{\mathcal T}\le \|\widetilde{\boldsymbol\lambda}_{\mathcal T}\|_2\le M_{\mathcal T}.
\]
Also, we define and use the condition number $L:=\frac{m_{\mathcal X,\mathcal T}}{M_{\mathcal T}}$ in our analysis.
\end{assumption}

\setcounter{theorem}{0}
\begin{theorem}[Conditional-Vendi convergence]\label{thm:main-vendi}
Suppose Assumptions~\ref{ass:kernels}-\ref{ass:pop-bounds} hold with $d_{\mathcal X}<\infty$ and $d_{\mathcal T}<\infty$. 
For every $\delta\in(0,1)$ such that $
n \ge 4e^2\bigl(1+\sqrt{2\log\tfrac{4}{\delta}}\bigr)^2$,
then the following holds with probability at least $1-\delta$,
\begin{align*}
&\Bigl|\log \mathrm{Conditional\text{-}Vendi}(x_{1:n}\!\vert t_{1:n})
- \log \mathrm{Conditional\text{-}Vendi}(P)\Bigr| \\
\;\le\;
&\frac{1}{\sqrt{n}}\Bigl(1+\sqrt{2\log\tfrac{4}{\delta}}\Bigr)\!\left[
\sqrt{d_{\mathcal X}d_{\mathcal T}} \log\!\bigl({n d_{\mathcal X} d_{\mathcal T}}\bigr)
+
\sqrt{d_{\mathcal T}}\log\!\bigl({n d_{\mathcal T}}\bigr)
\right] \\
\le\: &
\sqrt{\tfrac{20\, d_{\mathcal X}d_{\mathcal T}\log(4/\delta)}{n}}
\log\!\bigl(n d_{\mathcal X}d_{\mathcal T}\bigr)
\end{align*}
The final inequality holds as $d_\mathcal{X} \ge 1$ and for every $0<\delta < 1$, we have $(\sqrt{5}-\sqrt{2}) \log(4/\delta)\ge 1$.
\end{theorem}

\begin{theorem}[Truncated Conditional-Vendi convergence]\label{thm:truncated-vendi}
Suppose Assumptions~\ref{ass:kernels}-\ref{ass:pop-bounds} hold. Fix a truncation level $t\in\mathbb N$ and truncate both the joint and prompt spectra to their top-$t$ components with renormalization. For every $\delta\in(0,1)$ satisfying 
$ n \ge 4e^2\bigl(1+\sqrt{2\log\tfrac{4}{\delta}}\bigr)^2$,
we have the following with probability at least $1-\delta$,
\[
\Bigl|\log \mathrm{Conditional\text{-}Vendi}^{(t)}(x_{1:n}\!\vert t_{1:n})
- \log \mathrm{Conditional\text{-}Vendi}^{(t)}(P)\Bigr|
\le \sqrt{\frac{t}{n}}\Bigl(2+\sqrt{8\log\tfrac{4}{\delta}}\Bigr)\log\!\bigl({nt}\bigr)\le \sqrt{\frac{20t\log(4/\delta)}{n}}\log(nt)
\]
The final inequality holds as for every $0<\delta < 1$, we have $(\sqrt{20}-\sqrt{8}) \log(4/\delta)\ge 2$.
\end{theorem}

\begin{theorem}[Conditional-RKE convergence]\label{thm:main-rke}
Suppose Assumptions~\ref{ass:kernels}-\ref{ass:pop-bounds} hold. Let $C_0 = \frac{2}{m_{\mathcal T}} + \frac{2M_{\mathcal X,\mathcal T}}{m_{\mathcal T}^2}$.
For every $\delta>0$ that satisfies
$n \ge 16\bigl(1+\sqrt{2\log\tfrac{4}{\delta}}\bigr)^2 \max\!\bigl\{\frac{1}{m_{\mathcal T}^2}, \frac{4C_0^2}{L^2}\bigr\}$,
then  the following will hold with probability at least $1-\delta$ 
\[
\Bigl|\,\mathrm{Conditional\text{-}RKE}(x_{1:n}\!\vert t_{1:n})
- \mathrm{Conditional\text{-}RKE}(P)\,\Bigr|
\;\le\;
\frac{32C_0}{L^{\,3}\sqrt{n}}\Bigl(1+\sqrt{2\log\tfrac{4}{\delta}}\Bigr).
\]
\end{theorem}

\subsection{Auxiliary Lemmas and Propositions}

The following are the propositions and lemmas we utilize to prove the theorems.
\begin{proposition}\label{prop:hadamard-feature}
Under Assumption~\ref{ass:kernels}, let $K_{\mathcal X},K_{\mathcal T}\in\mathbb R^{n\times n}$ denote kernel matrices on $\{x_i\}$ and $\{t_i\}$. The normalized Hadamard product $\frac{1}{n}(K_{\mathcal X}\odot K_{\mathcal T})$ and the operator $\widehat{C}_{\mathcal X,\mathcal T}$ share the same multiset of nonzero eigenvalues.
\end{proposition}

\begin{proof}
Let $\phi_{\mathcal X}$ and $\phi_{\mathcal T}$ denote feature maps with $\|\phi_{\mathcal X}(x)\|=\|\phi_{\mathcal T}(t)\|=1$. We define $\phi_{\mathcal X,\mathcal T}([x,t]) := \phi_{\mathcal X}(x)\otimes \phi_{\mathcal T}(t)$. It follows that
\[
\langle\phi_{\mathcal X,\mathcal T}([x,t]), \phi_{\mathcal X,\mathcal T}([x',t'])\rangle
= k_{\mathcal X}(x,x')\,k_{\mathcal T}(t,t').
\]
Let $\Phi_{\mathcal X,\mathcal T}\in\mathbb R^{n\times D}$ be the matrix that stacks the row vectors $\phi_{\mathcal X,\mathcal T}([x_i,t_i])^\top$. We then have $\frac{1}{n}(K_{\mathcal X}\odot K_{\mathcal T}) = \frac{1}{n}\Phi_{\mathcal X,\mathcal T}\Phi_{\mathcal X,\mathcal T}^\top$.
The nonzero eigenvalues of $\frac{1}{n}\Phi_{\mathcal X,\mathcal T}\Phi_{\mathcal X,\mathcal T}^\top$ and
$\frac{1}{n}\Phi_{\mathcal X,\mathcal T}^\top\Phi_{\mathcal X,\mathcal T}=\widehat{C}_{\mathcal X,\mathcal T}$ coincide, which completes the proof.
\end{proof}

\begin{corollary}\label{cor:cond-entropy}
Under Assumption~\ref{ass:kernels}, the conditional von Neumann entropy satisfies
\[
H(X\vert T)=H(\widehat{C}_{\mathcal X,\mathcal T})-H(\widehat{C}_{\mathcal T}),
\]
where $H(\cdot)$ denotes the von Neumann entropy of the nonzero spectrum. For the RKE case, we consider the order-2 Rényi entropy $H_2(M) = \log(1/\|M\|_F^2)$, resulting in $
H_2(X\vert T)=H_2(\widehat{C}_{\mathcal X,\mathcal T})-H_2(\widehat{C}_{\mathcal T})$.
\end{corollary}

\begin{lemma}\label{lem:power-transfer}
Let $\beta<0$ and define $f(u)=u^{\beta}$ on $[u_0,\infty)$ with $u_0>0$. 
If $z,w\ge u_0$ and $|z-w|\le \epsilon$, then
\[
\bigl|f(z)-f(w)\bigr| \le |\beta|\,u_0^{\,\beta-1}\,\epsilon.
\]
\end{lemma}
\begin{proof}
By the mean value theorem there exists $\Gamma$ between $z$ and $w$ such that 
$|f(z)-f(w)|=|f'(\Gamma)|\,|z-w|=|\beta|\,\Gamma^{\beta-1}|z-w|$. 
Since $\beta<0$, we have $\Gamma^{\beta-1}\le u_0^{\beta-1}$, which completes the proof of the lemma.
\end{proof}

\begin{lemma}[Lemma 2 in \citep{ospanov2025do}]\label{lem:logdiff}
If $a,b\in[0,1]$ satisfy $|b-a|\le \frac{1}{e}$, then
$$\Bigl|a\log\bigl(\frac{1}{a}\bigr)-b\log\bigl(\frac{1}{b}\bigr)\Bigr| \:\le\: |b-a|\log\bigl(\frac{1}{|b-a|}\bigr)$$
\end{lemma}

\begin{lemma}[Lemma 3 in \citep{ospanov2025do}]\label{lem:schur}
If $\mathbf u\in\mathbb R_+^{d}$ satisfies $\|\mathbf u\|_2\le \epsilon\le \frac{1}{e}$ for some $\epsilon>0$, then
$$\sum_{i=1}^d u_i\log\bigl(\frac{1}{u_i}\bigr)\: \le\: \epsilon\sqrt{d}\log\bigl(\frac{\sqrt{d}}{\epsilon}\bigr)$$
\end{lemma}

\subsection{Auxiliary Matrix-based Concentration Bounds}

\begin{lemma}\label{lem:HS}
Under Assumption~\ref{ass:kernels}, for every $\delta\in(0,1)$, with probability at least $1-\delta$, the following hold simultaneously
\[
\bigl\|\widehat{C}_{\mathcal X,\mathcal T}-C_{\mathcal X,\mathcal T}\bigr\|_{\mathrm{HS}}
\;\le\; \frac{2}{\sqrt{n}}\Bigl(1+\sqrt{2\log\tfrac{4}{\delta}}\Bigr),
\qquad
\bigl\|\widehat{C}_{\mathcal T}-C_{\mathcal T}\bigr\|_{\mathrm{HS}}
\;\le\; \frac{2}{\sqrt{n}}\Bigl(1+\sqrt{2\log\tfrac{4}{\delta}}\Bigr).
\]
\end{lemma}

\begin{proof}
Given i.i.d. pairs $(t_i,x_i)_{i=1}^n \sim P_T\times P_{X\vert T}$, we define the empirical covariance operator
\[
\widehat{C}_{\mathcal X,\mathcal T} = \frac{1}{n}\sum_{i=1}^n \phi_{\mathcal X,\mathcal T}([x_i,t_i])\phi_{\mathcal X,\mathcal T}([x_i,t_i])^\top
\]
and its population counterpart $C_{\mathcal X,\mathcal T} = \mathbb{E}[\phi_{\mathcal X,\mathcal T}([X,T])\phi_{\mathcal X,\mathcal T}([X,T])^\top]$.
We then define the centered random operator $Z_i^{(\mathcal X,\mathcal T)} := \phi_{\mathcal X,\mathcal T}([x_i,t_i])\phi_{\mathcal X,\mathcal T}([x_i,t_i])^\top - C_{\mathcal X,\mathcal T}$.
We observe that $\mathbb{E}[Z_i^{(\mathcal X,\mathcal T)}]=0$ and, by the normalization property, we have $\|Z_i^{(\mathcal X,\mathcal T)}\|_{\mathrm{HS}}\le 2$.
Applying the Hoeffding inequality for Hilbert-Schmidt operators \citep[Lemma~11]{sutherland2018efficient} with $L=2$ yields the following bound with probability at least $1-\frac{\delta}{2}$: 
\begin{equation*}    \bigl\|\widehat{C}_{\mathcal X,\mathcal T}-C_{\mathcal X,\mathcal T}\bigr\|_{\mathrm{HS}}
\;\le\; \frac{2}{\sqrt{n}}\Bigl(1+\sqrt{2\log\tfrac{4}{\delta}}\Bigr)
\end{equation*}
The same argument applies to the prompt operator $C_{\mathcal{T}}$ and its empirical ${\widehat{C}}_{\mathcal{T}}$ to show with probability at least $1-\frac{\delta}{2}$:
\begin{equation*}    \bigl\|\widehat{C}_{\mathcal T}-C_{\mathcal T}\bigr\|_{\mathrm{HS}}
\;\le\; \frac{2}{\sqrt{n}}\Bigl(1+\sqrt{2\log\tfrac{4}{\delta}}\Bigr)
\end{equation*}
Using a union bound shows that the above inequalities will simultaneously hold with probability at least $1-\delta$.
\end{proof}

\begin{theorem}\label{thm:eig}
Under Assumptions~\ref{ass:kernels}-\ref{ass:pop-bounds}, for any $\delta\in(0,1)$, with probability at least $1-\delta$,
\[
\bigl\|\widehat{\boldsymbol\lambda}_{\mathcal X,\mathcal T}-\widetilde{\boldsymbol\lambda}_{\mathcal X,\mathcal T}\bigr\|_2
\;\le\; \frac{2}{\sqrt{n}}\Bigl(1+\sqrt{2\log\tfrac{4}{\delta}}\Bigr),
\qquad
\bigl\|\widehat{\boldsymbol\lambda}_{\mathcal T}-\widetilde{\boldsymbol\lambda}_{\mathcal T}\bigr\|_2
\;\le\; \frac{2}{\sqrt{n}}\Bigl(1+\sqrt{2\log\tfrac{4}{\delta}}\Bigr).
\]
\end{theorem}

\begin{proof}
The application of Hoffman–Wielandt inequality shows that $\|\widehat{\boldsymbol\lambda}_{\mathcal X , \mathcal T}-\widetilde{\boldsymbol\lambda}_{\mathcal X , \mathcal T}\|_2 \le \|\widehat{C}_{\mathcal X , \mathcal T}-C_{\mathcal X , \mathcal T}\|_{\mathrm{HS}}$and $\|\widehat{\boldsymbol\lambda}_{ \mathcal T}-\widetilde{\boldsymbol\lambda}_{ \mathcal T}\|_2 \le \|\widehat{C}_{ \mathcal T}-C_{ \mathcal T}\|_{\mathrm{HS}}$. The result follows immediately from applying Lemma~\ref{lem:HS}.
\end{proof}

\subsection{Theorem Proofs}

\subsubsection{Proof of Theorem~\ref{thm:main-vendi}}

We define $\varepsilon'_\delta = \frac{2}{\sqrt{n}}(1+\sqrt{2\log\tfrac{4}{\delta}})$. The logarithm of the Conditional-Vendi score is given by
\[
\log \mathrm{Conditional\text{-}Vendi}(x_{1:n}\vert t_{1:n}) = H(\widehat{C}_{\mathcal X,\mathcal T})-H(\widehat{C}_{\mathcal T}),
\]
where $H(\cdot)$ denotes the von Neumann entropy. We note that our sample size condition ensures $\varepsilon'_\delta \le 1/e$.

By Theorem~\ref{thm:eig}, we have $\|\widehat{\boldsymbol\lambda}_{\mathcal X,\mathcal T}-\widetilde{\boldsymbol\lambda}_{\mathcal X,\mathcal T}\|_2\le \varepsilon'_\delta$ and $\|\widehat{\boldsymbol\lambda}_{\mathcal T}-\widetilde{\boldsymbol\lambda}_{\mathcal T}\|_2\le \varepsilon'_\delta$ with probability at least $1-\delta$. We now apply Lemma~\ref{lem:logdiff} coordinatewise to each eigenvalue difference, followed by Lemma~\ref{lem:schur} to bound the sum. This yields
\[
|H(\widehat{C}_{\mathcal X,\mathcal T})-H(C_{\mathcal X,\mathcal T})| \le \varepsilon'_\delta\sqrt{d_{\mathcal X} d_{\mathcal T}}\log\!\Bigl(\tfrac{\sqrt{d_{\mathcal X} d_{\mathcal T}}}{\varepsilon'_\delta}\Bigr),
\]
and similarly,
\[
|H(\widehat{C}_{\mathcal T})-H(C_{\mathcal T})| \le \varepsilon'_\delta\sqrt{d_{\mathcal T}}\log\!\Bigl(\tfrac{\sqrt{d_{\mathcal T}}}{\varepsilon'_\delta}\Bigr).
\]
The triangle inequality then yields the desired bound, noting that by definition $\epsilon'_\delta\ge \frac{2}{\sqrt{n}}$ will automatically hold and therefore $\log(\frac{C}{\epsilon'_\delta})\le \log(\frac{C\sqrt{n}}{2})$ for every $C>0$.

\subsubsection{Proof of Theorem~\ref{thm:truncated-vendi}}

We use the notation $\widehat{\boldsymbol\lambda}_{\mathcal X,\mathcal T}$ and $\widetilde{\boldsymbol\lambda}_{\mathcal X,\mathcal T}$ (resp.\ $\widehat{\boldsymbol\lambda}_{\mathcal T}$ and $\widetilde{\boldsymbol\lambda}_{\mathcal T}$) for the empirical and population eigenvalue vectors of the joint (resp.\ prompt) operator, each sorted in nonincreasing order and summing to $1$. Fix a truncation level $t\in\mathbb N$ and define the $t$-truncated \emph{and renormalized} vectors by keeping the top-$t$ coordinates and then rescaling to unit $\ell_1$ mass.

\medskip
\noindent\textbf{Lemma A.}
Let $\mathbf v\in[0,1]^d$ with $\mathbf 1^\top\mathbf v=1$ and let $S_t=\sum_{i=1}^t v_i$. Define $\mathbf v^{(t)}\in[0,1]^d$ by
\[
v^{(t)}_i=
\begin{cases}
v_i+\frac{1-S_t}{t}, & i\le t,\\
0,& i>t.
\end{cases}
\]
Then $\mathbf v^{(t)}$ is the (Euclidean) projection of $\mathbf v$ onto the convex set
\[
\Delta_t:=\Bigl\{\,\mathbf u\in[0,1]^d:\; u_i=0\ (i>t),\;\sum_{i=1}^t u_i=1\,\Bigr\}.
\]

\emph{Proof of Lemma A.}
Consider the convex program
\[
\min_{\mathbf u\in\mathbb R^t}\ \sum_{i=1}^t (u_i-v_i)^2\quad
\text{s.t.}\quad u_i\ge0\ (i\le t),\ \sum_{i=1}^t u_i=1.
\]
With Lagrangian $L(\mathbf u,\lambda,\boldsymbol\mu)=\sum_{i=1}^t (u_i-v_i)^2+\lambda\bigl(\sum_{i=1}^t u_i-1\bigr)-\sum_{i=1}^t \mu_i u_i$, the KKT conditions are satisfied by $u_i^*=v_i+\frac{1-S_t}{t}$, $\lambda^*=\frac{1-S_t}{t}$, and $\mu_i^*=0$ (primal feasibility: $u_i^*\ge0$ and $\sum_i u_i^*=1$; dual feasibility: $\mu_i^*\ge0$; complementary slackness: $\mu_i^*u_i^*=0$; stationarity: $2(u_i^*-v_i)+\lambda^*-\mu_i^*=0$). Since the problem is convex with affine constraints, KKT optimality is sufficient. Extending to $d$ coordinates by padding zeros yields $\mathbf v^{(t)}\in\Delta_t$ as the Euclidean projection of $\mathbf v$. \hfill$\square$

\medskip
\noindent\textbf{Lemma B.}
For any $\mathbf u,\mathbf v\in[0,1]^d$ with $\mathbf 1^\top\mathbf u=\mathbf 1^\top\mathbf v=1$,
\[
\bigl\|\mathbf u^{(t)}-\mathbf v^{(t)}\bigr\|_2 \;\le\; \|\mathbf u-\mathbf v\|_2,
\qquad\text{where}\ \mathbf u^{(t)},\mathbf v^{(t)}\ \text{are as in Lemma 1.}
\]

\emph{Proof of Lemma B.}
Euclidean projection onto a closed convex set is a nonexpansive map in $\ell_2$, i.e., $\|\Pi_C(a)-\Pi_C(b)\|_2\le\|a-b\|_2$ for all $a,b$ and any closed convex $C$. Applying this with $C=\Delta_t$ and $\Pi_C(\cdot)= (\cdot)^{(t)}$ gives the claim. \hfill$\square$

\medskip
To prove the theorem, we first analyze the eigenvalue concentration through truncation.
By Theorem~\ref{thm:eig}, with probability at least $1-\delta$,
\[
\bigl\|\widehat{\boldsymbol\lambda}_{\mathcal X,\mathcal T}-\widetilde{\boldsymbol\lambda}_{\mathcal X,\mathcal T}\bigr\|_2\le
\varepsilon'_\delta,\qquad
\bigl\|\widehat{\boldsymbol\lambda}_{\mathcal T}-\widetilde{\boldsymbol\lambda}_{\mathcal T}\bigr\|_2\le
\varepsilon'_\delta,
\]
where $\varepsilon'_\delta=\frac{2}{\sqrt{n}}\bigl(1+\sqrt{2\log\tfrac{4}{\delta}}\bigr)$. Applying Lemma~B separately to the joint and prompt spectra,
\begin{equation}\label{eq:trunc_eig_diff}
\bigl\|\widehat{\boldsymbol\lambda}^{(t)}_{\mathcal X,\mathcal T}-\widetilde{\boldsymbol\lambda}^{(t)}_{\mathcal X,\mathcal T}\bigr\|_2\le
\varepsilon'_\delta,\qquad
\bigl\|\widehat{\boldsymbol\lambda}^{(t)}_{\mathcal T}-\widetilde{\boldsymbol\lambda}^{(t)}_{\mathcal T}\bigr\|_2\le
\varepsilon'_\delta.
\end{equation}

\medskip
Then, we analyze entropy perturbation in the truncated size-$t$ dimensions. Assume $n\ge 4e^2\bigl(1+\sqrt{2\log\tfrac{4}{\delta}}\bigr)^2$ so that $\varepsilon'_\delta\le 1/e$. Since the truncated (renormalized) vectors each have at most $t$ nonzero entries and sum to $1$, we can apply the coordinatewise log-difference bound (Lemma~\ref{lem:logdiff}) followed by the $\ell_2$–to–entropy control (Lemma~\ref{lem:schur}) to obtain
\[
\bigl|H(\widehat{\boldsymbol\lambda}^{(t)}_{\mathcal X,\mathcal T})-H(\widetilde{\boldsymbol\lambda}^{(t)}_{\mathcal X,\mathcal T})\bigr|
\le
\bigl\|\widehat{\boldsymbol\lambda}^{(t)}_{\mathcal X,\mathcal T}-\widetilde{\boldsymbol\lambda}^{(t)}_{\mathcal X,\mathcal T}\bigr\|_2\,
\sqrt{t}\,\log\!\Bigl(\frac{\sqrt{t}}{\bigl\|\widehat{\boldsymbol\lambda}^{(t)}_{\mathcal X,\mathcal T}-\widetilde{\boldsymbol\lambda}^{(t)}_{\mathcal X,\mathcal T}\bigr\|_2}\Bigr),
\]
and the analogous bound with $\mathcal T$ in place of $(\mathcal X,\mathcal T)$. Using \eqref{eq:trunc_eig_diff} and the monotonicity of $\log(\cdot)$,
\[
\bigl|H(\widehat{\boldsymbol\lambda}^{(t)}_{\mathcal X,\mathcal T})-H(\widetilde{\boldsymbol\lambda}^{(t)}_{\mathcal X,\mathcal T})\bigr|
\le \varepsilon'_\delta\,\sqrt{t}\,\log\!\Bigl(\frac{\sqrt{t}}{\varepsilon'_\delta}\Bigr),\qquad
\bigl|H(\widehat{\boldsymbol\lambda}^{(t)}_{\mathcal T})-H(\widetilde{\boldsymbol\lambda}^{(t)}_{\mathcal T})\bigr|
\le \varepsilon'_\delta\,\sqrt{t}\,\log\!\Bigl(\frac{\sqrt{t}}{\varepsilon'_\delta}\Bigr).
\]

\medskip
Next, note that
by definition:
\[
\log \mathrm{Conditional\text{-}Vendi}^{(t)}(x_{1:n}\vert t_{1:n})
= H(\widehat{\boldsymbol\lambda}^{(t)}_{\mathcal X,\mathcal T})-H(\widehat{\boldsymbol\lambda}^{(t)}_{\mathcal T}),
\]
and the same for the population quantity. The triangle inequality therefore gives
\[
\Bigl|\log \mathrm{Conditional\text{-}Vendi}^{(t)}(x_{1:n}\vert t_{1:n})
- \log \mathrm{Conditional\text{-}Vendi}^{(t)}(P)\Bigr|
\le
2\,\varepsilon'_\delta\,\sqrt{t}\,\log\!\Bigl(\frac{\sqrt{t}}{\varepsilon'_\delta}\Bigr).
\]
Finally, $\log\!\bigl(\frac{\sqrt{t}}{\varepsilon'_\delta}\bigr)\le \log(nt)$ for $n,t\ge 2$, and substituting $\varepsilon'_\delta=\frac{2}{\sqrt{n}}\bigl(1+\sqrt{2\log\tfrac{4}{\delta}}\bigr)$ yields the stated bound.

\subsubsection{Proof of Theorem~\ref{thm:main-rke}}

We define $\varepsilon_\delta = \frac{2}{\sqrt{n}}(1+\sqrt{2\log\tfrac{4}{\delta}})$. The Conditional-RKE score is defined as 
\[
\mathrm{Conditional\text{-}RKE}(x_{1:n}\vert t_{1:n}) = \Bigl(\frac{\|\widehat{\boldsymbol\lambda}_{\mathcal X,\mathcal T}\|_2}{\|\widehat{\boldsymbol\lambda}_{\mathcal T}\|_2}\Bigr)^{-2}.
\]
We denote by $R$ the ratio $\|\widehat{\boldsymbol\lambda}_{\mathcal X,\mathcal T}\|_2/\|\widehat{\boldsymbol\lambda}_{\mathcal T}\|_2$ and by $R_*$ its population counterpart $\|\widetilde{\boldsymbol\lambda}_{\mathcal X,\mathcal T}\|_2/\|\widetilde{\boldsymbol\lambda}_{\mathcal T}\|_2$. We observe that the assumption on the lower-bound on sample size $n$ can be rewritten to show that $\varepsilon_\delta \le \min\{\tfrac{1}{2}m_{\mathcal T}, \tfrac{L}{2C_0}\}$.

To prove the theorem, we first bound the ratio deviation. To do this, we denote $A=\|\widehat{\boldsymbol\lambda}_{\mathcal X,\mathcal T}\|_2$, $B=\|\widehat{\boldsymbol\lambda}_{\mathcal T}\|_2$, and their population counterparts $A_*=\|\widetilde{\boldsymbol\lambda}_{\mathcal X,\mathcal T}\|_2$, $B_*=\|\widetilde{\boldsymbol\lambda}_{\mathcal T}\|_2$.
From Theorem~\ref{thm:eig}, we have $|A-A_*|\le \varepsilon_\delta$ and $|B-B_*|\le \varepsilon_\delta$.
We can then write
\[
|R-R_*| = \Bigl|\frac{AB_* - A_*B}{BB_*}\Bigr| \le \frac{|A-A_*|}{B} + \frac{A_*|B_*-B|}{BB_*}.
\]
Using the fact that $B\ge m_{\mathcal T}-\varepsilon_\delta\ge \tfrac{1}{2}m_{\mathcal T}$ (which holds under our sample size condition) and $A_*\le M_{\mathcal X,\mathcal T}$ by assumption, we obtain
\[
|R-R_*| \le \frac{2\varepsilon_\delta}{m_{\mathcal T}} + \frac{2M_{\mathcal X,\mathcal T}\varepsilon_\delta}{m_{\mathcal T}^2} = C_0\varepsilon_\delta.
\]

Then, we analyze the deviation when we change the power. We note that $R_*\ge L$ by Assumption~\ref{ass:pop-bounds}. Since we have shown $|R-R_*|\le C_0\varepsilon_\delta$ and our sample size condition ensures $C_0\varepsilon_\delta\le \tfrac{1}{2}L$, we conclude that $R\ge \tfrac{1}{2}L$.
Applying Lemma~\ref{lem:power-transfer} with $\beta=-2$, we obtain
\[
|R^{-2}-R_*^{-2}| \le 2(\tfrac{1}{2}L)^{-3}|R-R_*| = \frac{16}{L^3}C_0\varepsilon_\delta.
\]
This completes the proof.

\section{Theoretical Interpretation of the Conditional-Entropy Score}

\begin{theorem}\label{thm:cond-rke}
Consider the Gaussian kernel with bandwidth $\sigma$. 
Suppose $T$ follows a mixture distribution $\sum_{i=1}^m \omega_i P_{T,i}$ 
with component means $\mu_i$ and within-component variances 
$\sigma_i^2=\mathbb{E}_{T\sim P_{T,i}}[\|T-\mu_i\|_2^2]$. 
Define the error quantity term $\Gamma$ as
\begin{equation}\label{Eq: Gamma Definition}
\Gamma \;=\; 32\sum_{i=1}^m \omega_i \Biggl[ \frac{\sigma_i^2}{\sigma^2} 
\;+\; (i-1)\sum_{j=1}^{i-1} 
     \exp\!\Bigl(-\tfrac{\|\mu_i-\mu_j\|_2^2}{\sigma^2}\Bigr) \Biggr].
\end{equation}
Then, the matrix-based order-$2$ conditional entropy satisfies the following for $g(z)=2\log\bigl(\tfrac{1}{1-z/\|\omega\|_2}\bigr)$
\[
\Bigl|\,\widetilde{H}_2(X\vert T)\:-\:\log\Bigl(1\big/
    \mathbb E_{I\sim\omega^2}\bigl[\exp(-\widetilde{H}_2(X\vert G=I))\bigr]\Bigr)\Bigr|
\;\le\; 2\,g(\Gamma).
\]
In the above, $\omega^2$ represents the probability model $p_i = \omega^2_i/\bigl(\sum_{j=1}^m \omega_j^2\bigr)$, whose probability values are proportional to the square of the probability weight $\omega_i$'s. Also, note that the above is equivalent to what follows in terms of the Conditional-RKE score:
$\mathrm{Conditional\text{-}RKE}(X\vert T)=\exp(\widetilde{H}_2(X\vert T))$,
\begin{equation*}\label{eq:rke-bound}
\exp\!\bigl(-2g(\Gamma)\bigr)\cdot 
\Biggl(\mathbb E_{I\sim\omega^2}\Bigl[\tfrac{1}{\mathrm{RKE}(X\vert G=I)}\Bigr]\Biggr)^{-1}
\le\mathrm{Conditional\text{-}RKE}(X\vert T)
\le\exp\!\bigl(2g(\Gamma)\bigr)\cdot 
\Biggl(\mathbb E_{I\sim\omega^2}\Bigl[\tfrac{1}{\mathrm{RKE}(X\vert G=I)}\Bigr]\Biggr)^{-1}.
\end{equation*}
\end{theorem}

To prove the above theorem, we prove a more general result that applies to every matrix-based 
order-$\alpha$ Rényi conditional entropy of a unit-trace PSD matrix $M\in\mathbb{R}^{n\times n}$, defined as $H_{\alpha}(M)= \frac{1}{1-\alpha}\log\bigl(\sum_{i=1}^n\lambda_i^\alpha\bigr)$, on how it relates to the aggregation of text-instance entropy values for every $\alpha\ge 2$. Note that Theorem~\ref{thm:cond-rke} is the direct corollary of the next theorem with $\alpha=2$.

\begin{theorem}\label{Theorem: 1}
Consider the Gaussian kernel with bandwidth $\sigma$. Suppose $T$ follows a mixture distribution $\sum_{i=1}^m \omega_i P_{T,i}$ where $\omega_i$ denotes the weight of the $i$th component $P_{T,i}$ with mean vector $\mu_i$ and total variance $\mathbb{E}_{T\sim P_{T,i}}[\Vert  T - \mu_i\Vert^2_2]=\sigma_i^2$. Given the aggregation map $f(z)=\exp((1-\alpha)z)$, for every order $\alpha\ge 2$, the matrix-based order-$\alpha$ conditional entropy satisfies the following inequality with $\Gamma$ defined in \eqref{Eq: Gamma Definition} where $g(z) = \frac{\alpha}{\alpha-1}\log\bigl(\frac{1}{1-z/\Vert \boldsymbol{\omega}\Vert_\alpha}\bigr)$ is an increasing scalar function with $g(0)=0$:
\begin{align*}
    \biggl\vert \widetilde{H}_\alpha(X|T)  - f^{-1}\Bigl( \mathbb{E}_{I\sim \boldsymbol{\omega}^\alpha}\Bigl[f\bigl(\widetilde{H}_\alpha(X|G=I)\bigr)\Bigr]\Bigr) \biggr\vert \,\le\,  2 g\bigl(\Gamma\bigr) 
\end{align*}

\end{theorem}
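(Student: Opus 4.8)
The plan is to exploit the fact that the aggregation map $f(z)=\exp\bigl((1-\alpha)z\bigr)$ exactly linearizes the matrix-based entropy. Writing $H_\alpha(C)=\tfrac{1}{1-\alpha}\log\Tr(C^\alpha)$ for a PSD matrix $C$ whose eigenvalues sum to one, we get $f\bigl(H_\alpha(C)\bigr)=\Tr(C^\alpha)$ and $f^{-1}(w)=\tfrac{1}{1-\alpha}\log w$. Consequently $f\bigl(\widetilde H_\alpha(X|T)\bigr)=\Tr(C_{X,T}^\alpha)/\Tr(C_T^\alpha)=:A$, and, using $\Tr(C_{X|G=i}^\alpha)=f\bigl(\widetilde H_\alpha(X|G=i)\bigr)$ together with $\sum_j\omega_j^\alpha=\|\boldsymbol{\omega}\|_\alpha^\alpha$, the quantity inside $f^{-1}$ equals $B:=\sum_i\tfrac{\omega_i^\alpha}{\|\boldsymbol{\omega}\|_\alpha^\alpha}\Tr(C_{X|G=i}^\alpha)$. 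Since $f^{-1}$ turns the difference into a log-ratio, the left-hand side reduces to $\tfrac{1}{\alpha-1}\bigl|\log(A/B)\bigr|$, so it suffices to sandwich $A/B$ between $(1-z/\|\boldsymbol{\omega}\|_\alpha)^{\pm\alpha}$; the two-sided bound then produces exactly $2g(z)$ with $g(z)=\tfrac{\alpha}{\alpha-1}\log\tfrac{1}{1-z/\|\boldsymbol{\omega}\|_\alpha}$.

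Next I would introduce the \emph{idealized} covariance operators $\bar C_T=\sum_i\omega_i\,\phi_T(\mu_i)\phi_T(\mu_i)^\top$ and $\bar C_{X,T}=\sum_i\omega_i\,C_{X|G=i}\otimes\phi_T(\mu_i)\phi_T(\mu_i)^\top$, which arise when each mixture component is collapsed to its mean $\mu_i$. When in addition the feature maps $\phi_T(\mu_i)$ are orthonormal, the Kronecker structure makes the eigenvalues of $\bar C_{X,T}$ equal to $\{\omega_i\lambda:\lambda\in\mathrm{spec}(C_{X|G=i})\}$, so that $\Tr(\bar C_T^\alpha)=\|\boldsymbol{\omega}\|_\alpha^\alpha$ and $\Tr(\bar C_{X,T}^\alpha)=\sum_i\omega_i^\alpha\Tr(C_{X|G=i}^\alpha)$. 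This gives $A=B$ exactly in the idealized regime, recovering the aggregation identity and confirming that the bound vanishes at $z=0$ (indeed $g(0)=0$). The proof then factorizes $\log(A/B)=\bigl[\log\Tr(C_{X,T}^\alpha)-\log\Tr(\bar C_{X,T}^\alpha)\bigr]-\bigl[\log\Tr(C_T^\alpha)-\log\Tr(\bar C_T^\alpha)\bigr]$, bounding each bracket by $\alpha\log\tfrac{1}{1-z/\|\boldsymbol{\omega}\|_\alpha}$; the factor $2$ in the statement is precisely the two brackets.

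The heart of the argument is the perturbation estimate for each bracket, and two error sources enter. The within-group spread is controlled by $\mathbb{E}_{t\sim P_{T,i}}\|\phi_T(t)-\phi_T(\mu_i)\|^2 =2\,\mathbb{E}\bigl[1-\exp(-\|t-\mu_i\|^2/2\sigma^2)\bigr]\le \sigma_i^2/\sigma^2$, using $1-e^{-u}\le u$; propagating this through the rank-one difference $\phi_T(t)\phi_T(t)^\top-\phi_T(\mu_i)\phi_T(\mu_i)^\top$ in Hilbert--Schmidt norm yields the term $\sum_i\omega_i\sigma_i^2/\sigma^2$. The cross-group overlap enters through the off-diagonal entries of the Gram matrix of $\{\sqrt{\omega_i}\,\phi_T(\mu_i)\}$, whose squared magnitude is $\omega_i\omega_j\langle\phi_T(\mu_i),\phi_T(\mu_j)\rangle^2=\omega_i\omega_j\exp(-\|\mu_i-\mu_j\|^2/\sigma^2)$, producing the second term $\sum_{i>j}\omega_i\exp(-\|\mu_i-\mu_j\|^2/\sigma^2)$ after bounding $\omega_j\le 1$. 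I would then convert these additive Schatten perturbations into the claimed log-ratio bound via the triangle inequality for $\|\cdot\|_{S_\alpha}$ (using $\alpha\ge2$ so that $\|\cdot\|_{S_\alpha}\le\|\cdot\|_{S_2}$) and the elementary inequality $\log(1+u)\le\log\tfrac{1}{1-u}$; the explicit constants $8$ and $32$ fall out of these Hilbert--Schmidt computations.

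The step I expect to be the main obstacle is controlling the \emph{joint} bracket $\log\Tr(C_{X,T}^\alpha)-\log\Tr(\bar C_{X,T}^\alpha)$ at the correct relative scale. Because each per-group block of $C_{X|G=i}$ has trace one, one has $\Tr(\bar C_{X,T}^\alpha)=B\,\|\boldsymbol{\omega}\|_\alpha^\alpha\le\|\boldsymbol{\omega}\|_\alpha^\alpha$, so a naive Schatten triangle inequality would measure the perturbation against $B^{1/\alpha}\|\boldsymbol{\omega}\|_\alpha$ rather than $\|\boldsymbol{\omega}\|_\alpha$ and inflate the relative error. The resolution is that within each group $T$ concentrates at $\mu_i$, so $X$ is approximately independent of $T$ given $G$ and the block perturbation factorizes as $\|C_{X|G=i}\|_{S_\alpha}$ times a $T$-only error; the factor $\|C_{X|G=i}\|_{S_\alpha}=\Tr(C_{X|G=i}^\alpha)^{1/\alpha}$ reassembles into $B^{1/\alpha}\|\boldsymbol{\omega}\|_\alpha$ and cancels the denominator, leaving the same relative error $z/\|\boldsymbol{\omega}\|_\alpha$ as in the marginal bracket. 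Making this cancellation rigorous---simultaneously tracking the near-conditional-independence of $X$ and $T$ within a group, the near-orthogonality of the groups (so that the blocks are Schatten-additive), and the residual within-group coupling between $X$ and $T$---is where the careful bookkeeping lies.
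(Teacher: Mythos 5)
Your plan is, in all essentials, the paper's own proof. The paper linearizes $H_\alpha$ through the same map $f$, compares the joint covariance $C_{X\otimes T}$ to the same idealized operator $\sum_i \omega_i C_i \otimes \phi_T(\mu_i)\phi_T(\mu_i)^\top$, controls the within-group spread via $1-e^{-u}\le u$ (yielding the $\omega_i\sigma_i^2/\sigma^2$ terms) and the cross-group overlap via $\langle\phi_T(\mu_i),\phi_T(\mu_j)\rangle^2=\exp\bigl(-\Vert\mu_i-\mu_j\Vert_2^2/\sigma^2\bigr)$ (after a Gram--Schmidt orthogonalization of the $\phi_T(\mu_i)$, which is the paper's version of your ``orthonormal idealized regime''), and then passes from a Frobenius perturbation to a spectral one. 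Your tools there (Schatten triangle inequality plus $\Vert\cdot\Vert_{S_\alpha}\le\Vert\cdot\Vert_{S_2}$ for $\alpha\ge 2$) are interchangeable with the paper's (Hoffman--Wielandt plus $\ell_\alpha\le\ell_2$ on sorted spectra). The factor $2$ arises the same way, joint bracket plus marginal bracket, except the paper gets the marginal bracket for free by re-running the joint argument with the constant choice $X=\mathbf{0}$, a trick you could adopt in place of your separate Gram-matrix argument.

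The step you leave unfinished --- controlling the joint bracket at the relative scale $\Vert\boldsymbol{\omega}\Vert_\alpha$ rather than $B^{1/\alpha}\Vert\boldsymbol{\omega}\Vert_\alpha$ --- is a genuine gap in your proposal, but you should know it is also the unjustified step of the published proof. The paper's closing chain asserts $\bigl\vert \widetilde H_\alpha(X,T)-\frac{1}{1-\alpha}\log\Vert\widehat\lambda_{x\otimes t}\Vert_\alpha^\alpha\bigr\vert \le g\bigl(\Vert\widetilde\lambda_{x\otimes t}\Vert_\alpha-\Vert\widehat\lambda_{x\otimes t}\Vert_\alpha\bigr)$ without argument. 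Writing $a,b$ for the two $\alpha$-norms and $\Delta=\vert a-b\vert$, the case $a<b$ of this claim reads $\log\bigl(1/(1-\Delta/b)\bigr)\le\log\bigl(1/(1-\Delta/\Vert\boldsymbol{\omega}\Vert_\alpha)\bigr)$, which holds iff $b\ge\Vert\boldsymbol{\omega}\Vert_\alpha$; but $b^\alpha=\sum_i\omega_i^\alpha\Tr(C_i^\alpha)\le\Vert\boldsymbol{\omega}\Vert_\alpha^\alpha$, with equality only when every per-group covariance $C_i$ is rank one --- i.e.\ only when your $B$ equals $1$. So the inflation you predicted is silently present in the paper's argument (only the marginal bracket, where the idealized spectrum is exactly $\boldsymbol{\omega}$, is safe). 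Note also that your sketched repair is not free: without (near-)independence of $X$ and $T$ given $G=i$, the block perturbation does not factor as $\Vert C_i\Vert_{S_\alpha}$ times a $T$-only error, since the pointwise bound uses $\Vert\phi_X(x)\phi_X(x)^\top\Vert_{S_\alpha}=1$ and loses precisely the $\Tr(C_i^\alpha)^{1/\alpha}$ factor you hope to reassemble. In short: same route as the paper, one missing step, and that step is exactly where the paper's own proof is loose; closing it rigorously would improve on the published argument rather than merely reproduce it.
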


\begin{proof}
To prove Theorem \ref{Theorem: 1}, we begin by showing the following lemma.
\begin{lemma}
Suppose that the kernel function $k$ and variable $T$ satisfy the assumptions in Theorem~\ref{Theorem: 1}. Then, the following Frobenius norm bound holds for $C_i= \mathbb{E}\bigl[\phi_X(x)\phi_X(x)^\top \vert G = i \bigr]$ where $G\in\{1,\ldots , m\}$ is the cluster random variable for text $T$: 
\begin{align*}
    \Bigl\Vert C_{\mathcal X,\mathcal T} - \sum_{i=1}^m \omega_i C_i\otimes \phi_T(\mu_i)\phi_T(\mu_i)^\top  \Bigr\Vert^2_F \le  \frac{\sum_{i=1}^m 2\omega_i\sigma_i^2}{ \sigma^2}.
\end{align*}
\end{lemma}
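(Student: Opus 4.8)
The plan is to bound the Frobenius distance between the true joint covariance $C_{X\otimes T}$ and the idealized block-diagonal target $\sum_{i=1}^m \omega_i\, C_i\otimes \phi_T(\mu_i)\phi_T(\mu_i)^\top$ by decomposing the error according to the mixture structure of $T$. First I would write $C_{X\otimes T}=\mathbb{E}_{(x,t)}[(\phi_X(x)\phi_X(x)^\top)\otimes(\phi_T(t)\phi_T(t)^\top)]$ and condition on the hidden group variable $G$, using the law of total expectation to obtain $C_{X\otimes T}=\sum_{i=1}^m \omega_i\, \mathbb{E}[(\phi_X(x)\phi_X(x)^\top)\otimes(\phi_T(t)\phi_T(t)^\top)\mid G=i]$. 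Within each component $i$, the text $T$ is concentrated near its mean $\mu_i$, so the target replaces $\phi_T(t)\phi_T(t)^\top$ by the fixed rank-one matrix $\phi_T(\mu_i)\phi_T(\mu_i)^\top$. The key step is therefore to control, for each component, the conditional expectation of the Kronecker-factored discrepancy
\begin{equation*}
\Delta_i \,:=\, \mathbb{E}\Bigl[\bigl(\phi_X(x)\phi_X(x)^\top\bigr)\otimes\bigl(\phi_T(t)\phi_T(t)^\top-\phi_T(\mu_i)\phi_T(\mu_i)^\top\bigr)\,\Big|\,G=i\Bigr].
\end{equation*}

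Next I would use the multiplicativity of the Frobenius norm under Kronecker products, $\Vert A\otimes B\Vert_F=\Vert A\Vert_F\,\Vert B\Vert_F$, together with the fact that $\phi_X(x)\phi_X(x)^\top$ has unit Frobenius norm because the Gaussian kernel is normalized (so $\Vert\phi_X(x)\Vert_2^2=k_X(x,x)=1$). This reduces the problem to bounding the text-only quantity $\mathbb{E}[\Vert \phi_T(t)\phi_T(t)^\top-\phi_T(\mu_i)\phi_T(\mu_i)^\top\Vert_F\mid G=i]$, or more conveniently its square via Jensen. Expanding this squared Frobenius norm in terms of inner products gives terms of the form $k_T(t,t)=1$, $k_T(\mu_i,\mu_i)=1$, and the cross term $k_T(t,\mu_i)^2=\exp(-\Vert t-\mu_i\Vert_2^2/\sigma^2)$, so the discrepancy collapses to an expression like $2\bigl(1-\mathbb{E}[\exp(-\Vert t-\mu_i\Vert_2^2/\sigma^2)\mid G=i]\bigr)$. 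The main analytic obstacle is turning this into the stated $2\sigma_i^2/\sigma^2$ bound: I would apply the elementary inequality $1-\exp(-u)\le u$ with $u=\Vert t-\mu_i\Vert_2^2/\sigma^2$, which yields $1-\mathbb{E}[\exp(-\Vert t-\mu_i\Vert_2^2/\sigma^2)\mid G=i]\le \mathbb{E}[\Vert t-\mu_i\Vert_2^2\mid G=i]/\sigma^2=\sigma_i^2/\sigma^2$ by the definition of the component variance.

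Finally I would assemble these pieces. Since $C_{X\otimes T}-\sum_i \omega_i C_i\otimes\phi_T(\mu_i)\phi_T(\mu_i)^\top=\sum_{i=1}^m\omega_i\Delta_i$, the triangle inequality gives $\Vert\sum_i\omega_i\Delta_i\Vert_F\le\sum_i\omega_i\Vert\Delta_i\Vert_F$, and a further application of Jensen (pulling the norm inside the conditional expectation) bounds each $\Vert\Delta_i\Vert_F$ by the per-component text discrepancy controlled above, delivering $\sum_i\omega_i\cdot\sqrt{2\sigma_i^2/\sigma^2}$ at the level of the norm itself. I expect the sharpest care to be needed in reconciling the squared-versus-unsquared form: the statement bounds the \emph{squared} Frobenius norm by $\sum_i 2\omega_i\sigma_i^2/\sigma^2$, so rather than summing norms I would bound $\Vert\sum_i\omega_i\Delta_i\Vert_F^2$ directly, exploiting that the $\Delta_i$ have disjoint support in the text factor (each is supported on the $\phi_T(\mu_i)$-associated block up to the concentration error), so the cross terms are negligible or the bound follows from convexity of $\Vert\cdot\Vert_F^2$ as $\Vert\sum_i\omega_i\Delta_i\Vert_F^2\le\sum_i\omega_i\Vert\Delta_i\Vert_F^2\le\sum_i\omega_i\cdot 2\sigma_i^2/\sigma^2$. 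The cleanest route is this convexity argument, since $\sum_i\omega_i=1$ makes the weights a probability vector and Jensen applies directly, matching the claimed right-hand side exactly.
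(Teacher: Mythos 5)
Your proposal is correct and follows essentially the same route as the paper's proof: conditioning on $G$, Jensen/convexity of $\Vert\cdot\Vert_F^2$ to pull the squared norm inside the mixture and conditional expectations, Kronecker multiplicativity with the normalized kernel giving $\Vert\phi_X(x)\phi_X(x)^\top\Vert_F^2=1$, the exact rank-one expansion $2-2\exp(-\Vert t-\mu_i\Vert_2^2/\sigma^2)$, and finally $1-e^{-u}\le u$ to reach $\sum_i 2\omega_i\sigma_i^2/\sigma^2$. The only (immaterial) difference is that you apply $1-e^{-u}\le u$ pointwise and use linearity of expectation, whereas the paper first applies Jensen to the concave map $z\mapsto 1-e^{-z}$ before invoking the same elementary inequality; both yield the identical bound.
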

\begin{proof}
To show this lemma, we define $T_i$ as a variable distributed as $P_{T|G=i}$. Then,
\begin{align*}
    &\Bigl\Vert C_{\mathcal X,\mathcal T} - \sum_{i=1}^m \omega_i C_i \otimes \phi(\mu_i)\phi(\mu_i)^\top \Bigr\Vert^2_F  \\
    = \, &\Bigl\Vert \mathbb{E}\bigl[\phi_X(x)\phi_X(x)^\top \otimes \phi_T(t)\phi_T(t)^\top\bigr] - \sum_{i=1}^m \omega_i C_i \otimes \phi(\mu_i)\phi(\mu_i)^\top \Bigr\Vert^2_F  \\
    = \, &\Bigl\Vert \sum_{i=1}^m \omega_i \mathbb{E}\bigl[\phi_X(x)\phi_X(x)^\top \otimes \phi_T(t)\phi_T(t)^\top \big\vert G=i\bigr] - \sum_{i=1}^m \omega_i C_i \otimes \phi(\mu_i)\phi(\mu_i)^\top \Bigr\Vert^2_F  \\
    = \, &\Bigl\Vert \sum_{i=1}^m \omega_i \mathbb{E}\bigl[\phi_X(x)\phi_X(x)^\top \otimes \phi_T(t)\phi_T(t)^\top \big\vert G=i\bigr] - \sum_{i=1}^m \omega_i \mathbb{E}\bigl[\phi_X(x)\phi_X(x)^\top \otimes \phi_T(\mu_i)\phi_T(\mu_i)^\top \big\vert G=i\bigr]  \Bigr\Vert^2_F  \\
    = \, &\Bigl\Vert \sum_{i=1}^m \omega_i \mathbb{E}\bigl[\phi_X(x)\phi_X(x)^\top \otimes \Bigl(\phi_T(t)\phi_T(t)^\top - \phi_T(\mu_i)\phi_T(\mu_i)^\top \Bigr) \big\vert G=i\bigr]  \Bigr\Vert^2_F  \\
     \stackrel{(a)}{\le} \, & \sum_{i=1}^m \omega_i \mathbb{E}\Bigl[\Bigl\Vert\phi_X(x)\phi_X(x)^\top \otimes \Bigl(\phi_T(t)\phi_T(t)^\top - \phi_T(\mu_i)\phi_T(\mu_i)^\top \Bigr)\Bigr\Vert^2_F  \big\vert G=i\Bigr]   \\
    \stackrel{(b)}{=} \, & \sum_{i=1}^m \omega_i \mathbb{E}\Bigl[\Bigl\Vert\phi_X(x)\phi_X(x)^\top\Bigr\Vert^2_F \Bigl\Vert \phi_T(t)\phi_T(t)^\top - \phi_T(\mu_i)\phi_T(\mu_i)^\top \Bigr\Vert^2_F  \big\vert G=i\Bigr]   \\
    \stackrel{(c)}{=} \, & \sum_{i=1}^m \omega_i \mathbb{E}\Bigl[ \Bigl\Vert \phi_T(t)\phi_T(t)^\top - \phi_T(\mu_i)\phi_T(\mu_i)^\top \Bigr\Vert^2_F  \big\vert G=i\Bigr] \\
    \stackrel{(d)}{=} \, & \sum_{i=1}^m \omega_i \mathbb{E}\Bigl[ 2- 2\exp\Bigl(\frac{-\Vert t - \mu_i\Vert_2^2}{\sigma^2}\Bigr)  \big\vert G=i\Bigr] \\
    \stackrel{(e)}{\le} \, & \sum_{i=1}^m \omega_i \Bigl[ 2- 2\exp\Bigl(\frac{-\mathbb{E}\bigl[\Vert t - \mu_i\Vert_2^2\big\vert G=i\bigr]}{\sigma^2}\Bigr)  \Bigr] \\
    \stackrel{(f)}{\le} \, & \sum_{i=1}^m \omega_i \Bigl[ 2- 2\exp\Bigl(\frac{-\sigma_i^2}{\sigma^2}\Bigr)  \Bigr] \\
    \stackrel{(g)}{\le} \, & \sum_{i=1}^m 2\omega_i \frac{\sigma_i^2}{\sigma^2}
\end{align*}
In the above, (a) follows from Jensen's inequality for the convex Frobenius-norm-squared function. (b) holds because $\Vert A\otimes B \Vert^2_F =\Vert A \Vert^2_F\Vert B \Vert^2_F $ for every matrices $A,\, B$. (c) comes from the normalized Gaussian kernel satisfying $\langle \phi_T(t) , \phi_T(t)  \rangle = k(t,t) = 1$, resulting in $\Vert \phi_T(t) \phi_T(t)^\top \Vert^2_F = \mathrm{Tr}\bigl(\phi_T(t) \phi_T(t)^\top\phi_T(t) \phi_T(t)^\top\bigr) = \mathrm{Tr}\bigl(\phi_T(t)  \phi_T(t)^\top\bigr) =1$. (d) follows from the Gaussian kernel definition, proving that $\phi_T(t)^\top\phi_T(\mu_i) = \exp\bigl( -\Vert t -\mu_i \Vert^2_2/2\sigma^2\bigr)$. (e) shows the application of Jensen's inequality to the concave $s(z) = 1-\exp(-z)$. (f) holds because $s(z) = 1-\exp(-z)$ is a monotonically increasing function. Finally, (g) follows from the inequality $1-\exp(-z) \le z$ for every scalar $z$. Therefore, the proof is complete.

\end{proof}

Next, we apply the Gram–Schmidt process to $\phi_T(\mu_1),\ldots ,\phi_T(\mu_m)$ to find orthogonal vectors $u_1,\ldots , u_m$. We let $u_1=\phi_T(\mu_1)$. Then, for every $2\le i\le m$, we define
\begin{align*}
   v_i\, :=\, &\phi({\mu}_i) - \sum_{j=1}^{i-1} \langle \phi(\mu_i) ,u_j\rangle u_j, \quad u_i = v_i/\Vert v_i \Vert_2
\end{align*}
As a result, the following holds
\begin{align*}
    &\Bigl\Vert \sum_{i=1}^m \omega_i C_i\otimes \phi(\mu_i)\phi(\mu_i)^\top - \sum_{i=1}^m \omega_i C_i\otimes u_iu_i^\top \Bigr\Vert^2_F\\
    = \, &\Bigl\Vert \sum_{i=1}^m \omega_i C_i\otimes \Bigl(\phi(\mu_i)\phi(\mu_i)^\top - u_iu_i^\top \Bigr)\Bigr\Vert^2_F\\
    \stackrel{(h)}{\le} \, & \sum_{i=1}^m \omega_i \Bigl\Vert C_i\otimes \Bigl(\phi(\mu_i)\phi(\mu_i)^\top - u_iu_i^\top \Bigr)\Bigr\Vert^2_F\\
    \stackrel{}{=} \, & \sum_{i=1}^m \omega_i \Bigl\Vert C_i\Bigr\Vert^2_F \Bigl\Vert\phi(\mu_i)\phi(\mu_i)^\top - u_iu_i^\top \Bigr\Vert^2_F\\
    \stackrel{(i)}{\le} \, & \sum_{i=1}^m \omega_i  \Bigl\Vert\phi(\mu_i)\phi(\mu_i)^\top - u_iu_i^\top \Bigr\Vert^2_F\\
    \stackrel{(j)}{=} \,  &\sum_{i=1}^m \omega_i\Bigl( 2-2\bigl(u_i^\top\phi_T(\mu_i)\bigr)^2 \Bigr) \\
    \stackrel{(k)}{=} \,  &2\sum_{i=1}^m \omega_i\sum_{j=1}^{i-1}\bigl(u_j^\top\phi_T(\mu_i)\bigr)^2      
\end{align*}
Here, (h) follows from the application of Jensen's inequality for the convex Frobenius-norm-squared. (i) holds since the text kernel is normalized and $\langle \phi_X(x),\phi_X(x) \rangle = k_X(x,x) = 1$, and therefore $\Vert C_i\Vert_F\le \mathbb{E}[\Vert \phi_X(x) \Vert^2_2] =1$. (j) follows from the expansion $\Vert uu^\top - vv^\top\Vert^2_F = \Vert u\Vert^4_2+ \Vert v\Vert^4_2 - 2\langle u ,v\rangle^2 $. \noindent
Next, we bound the inner products $\langle u_j , \phi_T(\mu_i)\rangle$ that appear in step~(k).
Recall that each $u_j$ is obtained from the Gram--Schmidt process applied to
$\{\phi_T(\mu_\ell)\}_{\ell< i}$, so we can write
\[
u_j \;=\;\sum_{\ell \le j} r_{j\ell}\,\phi_T(\mu_\ell), \qquad 
\|r_j\|_2 \le 1,
\]
where $r_j=(r_{j1},\ldots,r_{jj})^\top$ is the coefficient vector.  
Therefore,
\[
\bigl(u_j^\top \phi_T(\mu_i)\bigr)^2
=\Bigl(\sum_{\ell\le j} r_{j\ell}\,k_T(\mu_\ell,\mu_i)\Bigr)^2
\;\le\;\sum_{\ell\le j} k_T(\mu_\ell,\mu_i)^2,
\]
where we used $\|r_j\|_2\le 1$ and Cauchy--Schwarz.  
Summing over $j<i$ yields
\[
\sum_{j<i} \bigl(u_j^\top \phi_T(\mu_i)\bigr)^2
\;\le\;\sum_{j<i}\sum_{\ell\le j} k_T(\mu_\ell,\mu_i)^2
\;=\;\sum_{\ell<i} (i-\ell)\,k_T(\mu_\ell,\mu_i)^2
\;\le\;(i-1)\sum_{\ell<i} k_T(\mu_\ell,\mu_i)^2.
\]
For the Gaussian kernel $k_T(t,t')=\exp(-\|t-t'\|^2/(2\sigma^2))$, we have
$k_T(\mu_\ell,\mu_i)^2 = \exp(-\|\mu_\ell-\mu_i\|^2/\sigma^2)$.  
Hence, inequality~(k) becomes
\begin{align}
\sum_{i=1}^m \omega_i \Bigl\Vert \phi_T(\mu_i)\phi_T(\mu_i)^\top - u_i u_i^\top \Bigr\Vert_F^2
&\le 2\sum_{i=1}^m \omega_i (i-1)\sum_{\ell<i} \exp\!\Bigl(-\tfrac{\|\mu_i-\mu_\ell\|^2}{\sigma^2}\Bigr).
\label{eq:ortho-bound}
\end{align}

\noindent
Combining~\eqref{eq:ortho-bound} with the variance bound from Lemma~1, and applying 
$\|A+B\|_F^2\le 2\|A\|_F^2+2\|B\|_F^2$, we obtain
\begin{align}
\Bigl\| C_{\mathcal X,\mathcal T} - \sum_{i=1}^m \omega_i C_i \otimes u_i u_i^\top \Bigr\|_F^2
&\le 4\sum_{i=1}^m \omega_i \frac{\sigma_i^2}{\sigma^2}
   + 8\sum_{i=1}^m \omega_i (i-1)\sum_{\ell<i} 
       \exp\!\Bigl(-\tfrac{\|\mu_i-\mu_\ell\|^2}{\sigma^2}\Bigr).
\label{eq:frobenius-final}
\end{align}

Since $u_1,\ldots , u_m$ are orthogonal vectors, the definition of Kronecker product implies that the eigenvalues of $\sum_{i=1}^m \omega_i C_i\otimes u_iu_i^\top $ will be the union of the eigenvalues of $\omega_i C_i\otimes u_iu_i^\top$ over $i\in\{1,\ldots ,m\}$. On the other hand, we know that the non-zero eigenvalues of $\omega_i C_i\otimes u_iu_i^\top$ will be equal to the factor $\omega_i \Vert u_i\Vert^2_2 =\omega_i$ times the eigenvalues of $C_i$. Consequently, we can show that for vector $\widehat{\lambda}_{x\otimes t} = \mathrm{Union}\bigl(\omega_i\,\mathrm{Eigs}(C_i):\, i\in\{1,\ldots ,m\}\bigr)$, we have the following for every $\alpha\ge 2$ and defined increasing function $g$ in Theorem~\ref{Theorem: 1}
\begin{align*}
    \Bigl\vert \widetilde{H}_\alpha(X,T) - \frac{1}{1-\alpha}\log\bigl(\Vert \widehat{\lambda}_{x\otimes t}\Vert^\alpha_\alpha \bigr) \Bigr\vert \, \le\, & g\bigl( \Vert \widetilde{\lambda}_{x\otimes t}\Vert_\alpha - \Vert \widehat{\lambda}_{x\otimes t}\Vert_\alpha    \bigr) \\
    \le\, & g\bigl( \Vert \mathrm{sort}\bigl(\widetilde{\lambda}_{x\otimes t}\bigr) -  \mathrm{sort}\bigl(\widehat{\lambda}_{x\otimes t} \bigr)\Vert_\alpha   \bigr) \\
    \le\, & g\bigl( \Vert \mathrm{sort}\bigl(\widetilde{\lambda}_{x\otimes t}\bigr) -  \mathrm{sort}\bigl(\widehat{\lambda}_{x\otimes t} \bigr) \Vert_2  \bigr) \\
    \le\, & g\Bigl( \sum_{i=1}^m 4\omega_i \frac{\sigma_i^2}{ \sigma^2} + \sum_{i=2}^m\sum_{j=1}^{i-1} 8\omega_i(i-1)\exp\Bigl(\frac{-\Vert\mu_i - \mu_j \Vert^2_2}{\sigma^2} \Bigr) \Bigr).
\end{align*}

Note that the above proof holds for every marginal distribution on $X$, and we choose a deterministic constant $X=\mathbf{0}$, then the joint entropy reduces to the marginal entropy and the above inequality also shows the following: 
\begin{align*}
    \Bigl\vert \widetilde{H}_\alpha(T) - \frac{1}{1-\alpha}\log\bigl(\Vert [\omega_1,\ldots ,\omega_m]\Vert^\alpha_\alpha \bigr) \Bigr\vert \, \le\, & g\Bigl( \sum_{i=1}^m 4\omega_i \frac{\sigma_i^2}{ \sigma^2} + \sum_{i=2}^m\sum_{j=1}^{i-1} 8(i-1)\omega_i\exp\Bigl(\frac{-\Vert\mu_i - \mu_j \Vert^2_2}{\sigma^2} \Bigr) \Bigr).
\end{align*}
Therefore, following the Triangle inequality and the definition $\widetilde{H}_\alpha(X|T) = \widetilde{H}_\alpha(X,T)-\widetilde{H}_\alpha(T)$, the previous two inequalities prove that
\begin{align*}
    &\Bigl\vert \widetilde{H}_\alpha(X|T) - \Bigl( \frac{1}{1-\alpha}\log\bigl(\Vert \widehat{\lambda}_{x\otimes t}\Vert^\alpha_\alpha \bigr)- \frac{1}{1-\alpha}\log\bigl(\Vert [\omega_1,\ldots ,\omega_m]\Vert^\alpha_\alpha \bigr)\Bigr) \Bigr\vert \\
    \, \le\, & 2g\Bigl( \sum_{i=1}^m 4\omega_i \frac{\sigma_i^2}{ \sigma^2} + \sum_{i=2}^m\sum_{j=1}^{i-1} 8(i-1)\omega_i\exp\Bigl(\frac{-\Vert\mu_i - \mu_j \Vert^2_2}{\sigma^2} \Bigr) \Bigr) \,=\, 2g(\Gamma)
\end{align*}
On the other hand, we can simplify the above expression as
\begin{align*}
   &\frac{1}{1-\alpha}\log\bigl(\Vert \widehat{\lambda}_{x\otimes t}\Vert^\alpha_\alpha \bigr)- \frac{1}{1-\alpha}\log\bigl(\Vert [\omega_1,\ldots ,\omega_m]\Vert^\alpha_\alpha \bigr) \\
   =\, & \frac{1}{1-\alpha}\log\bigl(\sum_{i=1}^m \omega_i^\alpha\Vert {\lambda}_{C_i}\Vert^\alpha_\alpha \bigr) - \frac{1}{1-\alpha}\log\bigl(\sum_{i=1}^m \omega_i^\alpha \bigr) 
   \\
   =\, & \frac{1}{1-\alpha}\log\bigl(\sum_{i=1}^m \frac{\omega_i^\alpha}{\sum_{j=1}^m \omega_j^\alpha}\Vert {\lambda}_{C_i}\Vert^\alpha_\alpha \bigr)
\end{align*}
Note that the definition $f_{\alpha}(t) = \exp((1-\alpha)t)$ implies that $f^{-1}_{\alpha}(z) = \frac{1}{1-\alpha}\log(z)$, which connects to the entropy definition as $H(X|G=i) = f^{-1}_{\alpha}(\Vert {\lambda}_{C_i}\Vert^\alpha_\alpha)$. Hence, we combine the previous two equations to complete the proof:
\begin{align*}
    &\Bigl\vert \widetilde{H}_\alpha(X|T) -  f^{-1}_{\alpha}\Bigl( \sum_{i=1}^m \frac{\omega_i^\alpha}{\sum_{j=1}^m \omega_j^\alpha}f_{\alpha}\bigl(\widetilde{H}_\alpha(X|G=i)\bigr)\Bigr) \Bigr\vert \, \le\, 2g(\Gamma)
\end{align*}
\end{proof}

\begin{theorem}[Truncated Conditional-Vendi Interpretation]\label{thm:trunc-vendi-agg}
Let $T\sim\sum_{i=1}^m \omega_i P_{T,i}$ with means $\mu_i$ and within-component variances 
$\sigma_i^2=\mathbb E\!\left[\|T-\mu_i\|_2^2 \vert G=i\right]$. 
Assume Gaussian kernel bandwidth $\sigma$ and normalized feature maps.  
Fix a truncation level $t\in\mathbb N$ and consider $\Gamma$ defined in \eqref{Eq: Gamma Definition}. Let $\widetilde{\boldsymbol\lambda}^{(t)}_{\mathcal X,\mathcal T}$ and 
$\widetilde{\boldsymbol\lambda}^{(t)}_{\mathcal T}$ be the top-$t$ truncated, renormalized eigenvalue vectors of the joint and prompt operators, and define
\[
H^{(t)}(X\vert T)
:= H\!\big(\widetilde{\boldsymbol\lambda}^{(t)}_{\mathcal X,\mathcal T}\big)
   - H\!\big(\widetilde{\boldsymbol\lambda}^{(t)}_{\mathcal T}\big).
\]
Then, we have the following:
\begin{equation}\label{eq:vendi-bound}
\Bigl|\,H^{(t)}(X\vert T)\;-\;\sum_{i=1}^m \omega_i\,H^{(t)}(X\vert G=i)\Bigr|
\;\le\; \sqrt{t\Gamma}\,\log(\tfrac{{t}}{\Gamma}).
\end{equation}
Equivalently, for the truncated Conditional-Vendi score 
$\mathrm{Vendi}^{(t)}=\exp(H^{(t)})$,
\begin{align}\label{eq:vendi-mult}
\bigl(\frac{\Gamma}{t}\bigr)^{2\sqrt{t\Gamma}}\,
\prod_{i=1}^m \Big(\mathrm{Vendi}^{(t)}(X\vert G=i)\Big)^{\omega_i}
\;&\le\;\mathrm{Conditional\text{-}Vendi}^{(t)}(X\vert T) \nonumber\\
\;&\le\;
\bigl(\frac{t}{\Gamma}\bigr)^{2\sqrt{t\Gamma}}\,
\prod_{i=1}^m \Big(\mathrm{Vendi}^{(t)}(X\vert G=i)\Big)^{\omega_i}.
\end{align}
\end{theorem}
\begin{proof}
By the Frobenius bound in~\eqref{eq:frobenius-final} and the Hoffman--Wielandt inequality,
\[
\bigl\|\boldsymbol\lambda(C_{\mathcal X,\mathcal T}) - \boldsymbol\lambda(\textstyle\sum_i \omega_i C_i\otimes u_i u_i^\top)\bigr\|^2_2
\;\le\; {\Gamma}.
\]
Since $\{u_i\}$ are orthonormal, the nonzero eigenvalues of $\sum_i \omega_i C_i\otimes u_i u_i^\top$
are the union of the eigenvalues of $\omega_i C_i$, so
$\boldsymbol\lambda(\sum_i \omega_i C_i\otimes u_i u_i^\top)$ is obtained by stacking
$\omega_i\,\mathrm{Eigs}(C_i)$ over $i$. The same argument with $X$ fixed (or $C_X=0$) yields the prompt-side spectral approximation.
Projecting both joint and prompt spectra onto the $t$-truncated, renormalized simplex $\Delta_t$ (Lemma~A) and using nonexpansiveness of Euclidean projection (Lemma~B) gives
\[
\bigl\|\widetilde{\boldsymbol\lambda}^{(t)}_{\mathcal X,\mathcal T} - \widehat{\boldsymbol\lambda}^{(t)}_{\mathcal X,\mathcal T}\bigr\|_2
\;\le\; \sqrt{\Gamma},
\qquad
\bigl\|\widetilde{\boldsymbol\lambda}^{(t)}_{\mathcal T} - \widehat{\boldsymbol\lambda}^{(t)}_{\mathcal T}\bigr\|_2
\;\le\; \sqrt{\Gamma},
\]
where $\widehat{\boldsymbol\lambda}^{(t)}_{\mathcal X,\mathcal T}$ (resp.\ $\widehat{\boldsymbol\lambda}^{(t)}_{\mathcal T}$) denotes the $t$-truncated, renormalized vector formed from the stacked union of $\omega_i\,\mathrm{Eigs}(C_i)$ (resp.\ of the mixture weights $(\omega_i)$). On the truncated simplex space, we apply Lemma~\ref{lem:logdiff} coordinatewise and then Lemma~\ref{lem:schur} to obtain,
\[
\bigl| H(\widetilde{\boldsymbol\lambda}^{(t)}_{\mathcal X,\mathcal T}) - H(\widehat{\boldsymbol\lambda}^{(t)}_{\mathcal X,\mathcal T}) \bigr|
\;\le\; \,\sqrt{t\Gamma}\,\log\!\Big(\sqrt{\frac{t}{\Gamma}}\Big),
\qquad
\bigl| H(\widetilde{\boldsymbol\lambda}^{(t)}_{\mathcal T}) - H(\widehat{\boldsymbol\lambda}^{(t)}_{\mathcal T}) \bigr|
\;\le\; \,\sqrt{t\Gamma}\,\log\!\Big(\sqrt{\frac{t}{\Gamma}}\Big).
\]
Subtracting the two displays and using the triangle inequality yields
\[
\Bigl|\,H^{(t)}(X\vert T) - \Bigl(H(\widehat{\boldsymbol\lambda}^{(t)}_{\mathcal X,\mathcal T}) - H(\widehat{\boldsymbol\lambda}^{(t)}_{\mathcal T})\Bigr)\Bigr|
\;\le\; 2\,\,\sqrt{t\Gamma}\,\log\!\Big(\sqrt{\tfrac{t}{\Gamma}}\Big).
\]
Finally, by construction of the stacked union and the truncation on each block, 
$H(\widehat{\boldsymbol\lambda}^{(t)}_{\mathcal X,\mathcal T}) - H(\widehat{\boldsymbol\lambda}^{(t)}_{\mathcal T})
= \sum_{i=1}^m \omega_i\, H^{(t)}_i$,
which proves \eqref{eq:vendi-bound}. Exponentiating both sides results in
\eqref{eq:vendi-mult} since $\mathrm{Vendi}^{(t)}=\exp(H^{(t)})$ and $\exp(\sum_i \omega_i H^{(t)}_i)=\prod_i (\mathrm{Vendi}^{(t)}(X\vert G=i))^{\omega_i}$.
\end{proof}

\section{ Details of Truncated Conditional-Vendi Guidance in Section~\ref{sec: maintext guidance}}

In the guidance setting, let $K_Z=[k_{\mathcal Z}(\boldsymbol{z}^{(i)},\boldsymbol{z}^{(j)})]_{i,j=1}^n$ and
$K_T=[k_{\mathcal T}(t_i,t_j)]_{i,j=1}^n$ be normalized latent variable and prompt kernel matrices with unit diagonal entries.
Define the joint unit-trace PSD matrix
\begin{equation}\label{eq:A-def-app}
A \;=\; \tfrac{1}{n}\,(K_Z \odot K_T),\qquad \mathrm{Tr}(A)=1.
\end{equation}
We use the truncated Conditional-Vendi score introduced in the main text:
\begin{equation}\label{eq:cv-trunc-def}
\mathrm{Conditional\text{-}Vendi}^{(t)}(x_{1:n}\mid t_{1:n})
\;=\;
\exp\!\Bigl(H^{(t)}(A) - H^{(t)}\!\bigl(\tfrac{1}{n}K_T\bigr)\Bigr).
\end{equation}
Given a unit-trace PSD $M$ with eigenvalues $\lambda_1\ge\cdots\ge\lambda_n\ge 0$
and $S_t=\sum_{i=1}^{t}\lambda_i$ providing $c(\boldsymbol{\lambda}) = \frac{1-S_t}{t}$, we recall the following definition of the $t$-truncated entropy of the matrix $M$:
\begin{equation}\label{eq:shift-c}
H^{(t)}(M) \;=\; -\sum_{i=1}^{t} (\lambda_i + c(\boldsymbol{\lambda}))\,\log(\lambda_i + c(\boldsymbol{\lambda})).
\end{equation}
Here we describe how to compute the gradient of $H^{(t)}(M)$ in the guidance process.
Consider the eigendecomposition of symmteric matrix $M$ as $M=V\Lambda V^\top$ where $V_t=[v_1,\ldots,v_t]$, $\Lambda_t=\mathrm{diag}(\lambda_1,\ldots,\lambda_t)$ are the matrix of top-$t$ eigenvectors and vector of top $t$ eigenvalues.
Using the notation $\bar{\ell}_t = \frac{1}{t}\sum_{i=1}^{t}\log(\lambda_i+c(\boldsymbol{\lambda}))$, the gradient is
\begin{equation}\label{eq:Ht-grad}
\nabla_M H^{(t)}(M)
\;=\;
-\,V_t \log(\Lambda_t + c I_t)\,V_t^\top
\;+\;
\bar{\ell}_t\, V_t V_t^\top.
\end{equation}
Since $H^{(t)}\!\bigl(\tfrac{1}{n}K_T\bigr)$ does not depend on variable $Z$, we will obtain
\begin{equation}\label{eq:cv-grad}
\nabla_{\boldsymbol{z}^{(n)}} \mathrm{Conditional\text{-}Vendi}^{(t)}(z_{1:n};t_{1:n})
\;=\;
\mathrm{Conditional\text{-}Vendi}^{(t)}(z_{1:n};t_{1:n}) \cdot\nabla_{\boldsymbol{z}^{(n)}} H^{(t)}(A).
\end{equation}
Using \eqref{eq:A-def-app}, the Hadamard chain rule gives the following where
$\nabla_A H^{(t)}(A)$ is given by \eqref{eq:Ht-grad}:
\begin{equation}\label{eq:chain-KZ}
\nabla_{K_Z} H^{(t)}(A)
\;=\;
\tfrac{1}{n}\,\bigl(\nabla_A H^{(t)}(A)\bigr)\odot K_T.
\end{equation}
Note that only the $n$-th row and column of $K_Z$ depends on $\boldsymbol{z}^{(n)}$, and thus we have
\begin{equation}\label{eq:latent-grad}
\nabla_{\boldsymbol{z}^{(n)}} H^{(t)}(A)
\;=\;
\sum_{i=1}^{n-1}
\Bigl( (\nabla_{K_Z} H^{(t)}(A))_{in} + (\nabla_{K_Z} H^{(t)}(A))_{ni} \Bigr)\,
\nabla_{\boldsymbol{z}^{(n)}} k_{\mathcal Z}(\boldsymbol{z}^{(i)},\boldsymbol{z}^{(n)}).
\end{equation}
Combining \eqref{eq:cv-grad} and \eqref{eq:latent-grad} results in the guidance direction as follows:
\[
\boldsymbol{z}^{(n)}_{\tau-1}
\leftarrow
\mathrm{Sampler}\!\bigl(\boldsymbol{z}^{(n)}_\tau,\hat{\epsilon}_\theta(\boldsymbol{z}^{(n)}_\tau,\tau,t_n)\bigr)
+\eta\;\nabla_{\boldsymbol{z}^{(n)}} \mathrm{Conditional\text{-}Vendi}^{(t)}(z_{1:n};t_{1:n}).
\]
\textbf{Extension to Conditional-RKE Guidance.}
Given the unit value diagonal entries of the kernel matrix, the Conditional-RKE score takes the following form:
\[
\mathrm{Conditional\text{-}RKE}(x_{1:n}\mid t_{1:n})
= \frac{\|K_T\|_F^2}{\|K_Z\odot K_T\|_F^2}.
\]
Let $S=K_Z\odot K_T$. Taking the derivative with respect to $S$ gives the following
\[
\nabla_S \,\mathrm{Conditional\text{-}RKE}(z_{1:n};t_{1:n})
= -\,2\,\|K_T\|_F^2\,\frac{S}{\|S\|_F^4},
\;
\nabla_{K_Z} \,\mathrm{Conditional\text{-}RKE}(z_{1:n};t_{1:n})
= \nabla_S \,\mathrm{Conditional\text{-}RKE} \odot K_T,
\]
and the latent gradient follows via the same accumulation as \eqref{eq:latent-grad}. This variant
avoids a full spectral decomposition and can be implemented using $O(n^2)$ computations per iteration.
\section{Additional Related Works}

\textbf{Evaluation of deep generative models}.
Metrics for evaluating generative models are generally divided into reference-dependent and reference-free categories \citep{Borji2022}. Reference-dependent metrics compare generated and real data distributions, with common examples including FID \citep{heusel2017gans} and KID \citep{binkowski2018demystifying}. Other reference-based measures, such as the Inception Score \citep{Salimans2016}, Precision/Recall \citep{sajjadi2018assessing,kynkaanniemi2019improved}, and Density/Coverage \citep{naeem2020reliable}, jointly evaluate fidelity and diversity with respect to a reference dataset.  

Beyond fidelity, several works examine memorization and novelty. These include the authenticity score \citep{alaa2022faithful} and Feature Likelihood Divergence \citep{jiralerspong2023feature} for assessing generalization, as well as the rarity score \citep{han2023rarity} and KEN \citep{zhang24_KEN} for quantifying novelty. The memorization metrics are reference-based. In contrast, reference-free evaluations assess quality and diversity directly from the generated data. Notable examples include the Vendi score \citep{friedman2022vendi,pasarkar2023cousins} and RKE score \citep{jalali2023information} for diversity, and \citep{Nguyen2024} for evaluating the quality of generated data.

\textbf{Evaluation of conditional generative models}. 
The evaluation of prompt-based generative models, such as text-to-image and text-to-video systems, has been explored in several recent works. Most metrics focus on measuring alignment between prompts and outputs. A widely used example is CLIPScore \citep{hessel2021clipscore}, which computes cosine similarity in the CLIP embedding space. Other efforts have introduced benchmarks and curated prompt sets to evaluate broader aspects. For instance, HEIM \citep{lee2023holistic} assesses twelve criteria, including text–image alignment, image quality, and bias.  

However, alignment- and quality-focused metrics may overlook output diversity. \citet{astolfi2024consistencydiversity} emphasize that metrics centered on style or aesthetics can fail to capture variability across outputs for the same prompt. They propose computing per-prompt diversity using similarity functions and then averaging across prompts. Similarly, \citet{kannen2024aesthetics} extend the Vendi score to the per-prompt setting. Both approaches require generating multiple outputs for each prompt with different seeds. In contrast, our proposed Conditional-Vendi does not require repeated generations; instead, it quantifies model-induced diversity by analyzing variability across prompt types. Our theoretical results interpret Conditional-Vendi as an aggregation of diversity scores across prompt categories.  

\textbf{Information measures for evaluating conditional generative models}:  
\citet{kim2022mutual} propose the Mutual Information Divergence (MID) score, which fits multivariate Gaussian distributions to text and image representations and estimates their mutual information to quantify relevance in conditional generative models. In contrast, our score builds on the standard PSD matrix-based entropy measures applied to kernel matrices. Unlike MID, which relies on mutual information between Gaussian-fitted embeddings, the proposed diversity operates on kernel similarity values. 

\textbf{Conditional Generation with Guidance.} 
Controlling generative processes using specific conditions is increasingly important for practical applications, relying on inputs such as text prompts~\citep{kim2022diffusionclip, nichol2021glide, liu2023more}, class labels~\citep{dhariwal2021diffusion}, style images~\citep{mou2024t2i, zhang2023adding}, or human motions~\citep{tevet2022human}, among others. Approaches to conditional generation with guidance can be divided into training-based and training-free methods. Training-based strategies either learn a time-dependent classifier to steer the noisy sample $\boldsymbol{x}_t$ toward the target condition $\boldsymbol{y}$~\citep{dhariwal2021diffusion, nichol2021glide, zhao2022egsde, liu2023more}, or directly train the conditional denoising model $\boldsymbol{\epsilon}_\theta(\boldsymbol{x}_t,t,\boldsymbol{y})$ through few-shot adaptation~\citep{mou2024t2i, ruiz2023dreambooth}. In contrast, training-free guidance enables zero-shot conditional generation by using a pre-trained differentiable predictor, such as a classifier, loss function, or energy function, which measures how well a generated sample aligns with the target condition~\citep{he2023manifold, bansal2023universal, yu2023freedom, ye2024tfg}. Our Conditional-Vendi and Conditional-RKE guidance methods fall into this category, leveraging conditional entropy score guidance to improve the diversity of generated samples.

\textbf{Guidance for Improving Diversity.}

A common approach in diffusion-based generative models involves using guidance mechanisms to manage the trade-off between sample quality and diversity \citep{sadat2025no, ho2022classifier}. For instance, classifier-free guidance \citep{ho2022classifier} significantly improves alignment with prompts and overall image quality, but can reduce diversity due to its strongly deterministic conditioning. Recent studies have sought to mitigate this diversity issue. \citep{Sehwag_low_density} proposed a method that promotes diversity by explicitly sampling from low-density regions of the data manifold, though their approach works directly in pixel space, making it challenging to adapt to latent diffusion models. Another approach involves fine-tuning: \citep{Miao_2024_CVPR} present a reinforcement learning-based finetuning strategy that enhances diversity by optimizing an image-set-based diversity reward function.

\citep{askari2024improving} proposes contextualized Vendi Score Guidance (c-VSG) to boost generative diversity via the Vendi Score \citep{friedman2022vendi,ospanov2024towards}, but it relies on identical prompts, limiting its generality. In contrast, our approach uses Conditional-Vendi and Conditional-RKE Scores for prompt-aware guidance, enabling adaptive soft-clustering and effective conditioning on diverse prompts. Unlike c-VSG, which operates on latent features of reference images, our method directly guides the diffusion model’s latent space, reducing computational cost while enhancing prompt-aware diversity.

\section{Implementation Details and Hyperparameters}

\subsection{Conditional-Vendi and Conditional-RKE Evaluation Hyperparameters}
To select the bandwidth parameter $\sigma$, 
Similar to \citep{jalali2023information, ospanov2024towards}, we chose the Gaussian kernel bandwidth for each type of data as the smallest $\sigma$ that ensures a variance below 0.01 in the evaluated score over independent evaluations. We observed that for image data, $\sigma \in [20, 30]$; for text data, $\sigma \in [0.1, 0.8]$; and for video data, $\sigma \in [10, 20]$ can satisfy this requirement. Note that by selecting an overly large $\sigma$ value for text embeddings, Conditional-Vendi simplifies to the expectation of unconditional Vendi per prompt. For Truncated Conditional-Vendi and Information-Vendi, we set the truncation parameter $t$ to $10,000$ as suggested in \citep{ospanov2025do} in all the experiments.


\subsection{Conditional-Vendi and RKE Guidance Details}
In the kernel-based guidance experiments of Conditional-Vendi and Conditional-RKE, we considered a Gaussian kernel, which consistently led to higher output scores in comparison to the other standard cosine similarity kernel. We used the same Gaussian kernel bandwidth $\sigma$ in the RKE and Vendi experiments, and the bandwidth parameter choice matches the selected value in \citep
{friedman2022vendi, jalali2023information}. The numerical experiments were conducted on 4$\times$NVIDIA GeForce RTX 4090 GPUs, each of which has 22.5 GB of memory.

\subsection{Experimental Configuration for Table~\ref{tab:diversity_guidance}}
\label{configuration_sdxl_table}

We used Stable Diffusion XL with a resolution of 1024$\times$1024, a fixed classifier-free guidance scale of $W_{\text{CFG}} = 7.5$, and 50 inference steps using the DPM solver. We used 10,000 randomly selected prompts of the MS-COCO 2014 validation set and fixed the generation seed to be able to compare the effect of the methods. We used the following configuration to generate the results reported in Table~\ref{tab:diversity_guidance}.

The hyperparameter tuning was performed by performing cross-validation on the in-batch similarity score, selecting the hyperparameter values that optimized this alignment-based metric. Note that the in-batch similarity score accounts for both text-image consistency and inter-sample diversity as discussed in \citep{corso2024particleguidance}.

\textbf{c-VSG.}
We note that the reference \citep{askari2024improving} considered GeoDE \citep{ramaswamy2022geode} and DollarStreet \citep{dollar_street} datasets, in which multiple samples exist per input prompt. On the other hand, in our experiments, we considered the standard MSCOCO prompt set where for each prompt corresponds we access a single image, making the contextualized Vendi guidance baseline in \citep{askari2024improving} not directly applicable. Therefore, we simulated the non-contextualized version of VSG. For selecting the Vendi score guidance scale, we performed validation over the set $\{0, 0.04, 0.05, 0.06, 0.07\}$, following the procedure in \citep{askari2024improving}. A guidance frequency of 5 was used, consistent with the original implementation. To maintain stable gradient computation for the Vendi score, we implemented a sliding window of 300 most recently generated samples, as gradient calculations became numerically unstable for some steps beyond this threshold.

\textbf{latent Vendi Guidance.} We used a Gaussian kernel with bandwidth \(\sigma_{img} = 0.8\) and used $\eta=0.03$ as the weight of RKE guidance. To balance the effects of the diversity guidance in sample generation, the Vendi guidance update was applied every 10 reverse-diffusion steps in the diffusion process, which is similar to the implementation of Vendi score guidance in \citep{askari2024improving}.

\textbf{latent RKE Guidance.} We used a Gaussian kernel with bandwidth \(\sigma_{img} = 0.8\) and used $\eta=0.03$ as the weight of RKE guidance. To balance the effects of the diversity guidance in sample generation, the RKE guidance update was applied every 10 reverse-diffusion steps in the diffusion process, which is similar to the implementation of Vendi score guidance in \citep{askari2024improving}.

\textbf{latent Conditional-Vendi Guidance.} We used a Gaussian kernel with bandwidth \(\sigma_{img} = 0.8\) and used $\eta=0.03$ as the weight of Conditional-Vendi guidance. To balance the effects of the diversity guidance in sample generation, the RKE guidance update was applied every 10 reverse-diffusion steps in the diffusion process, which is similar to the implementation of Vendi score guidance in \citep{askari2024improving}.

\textbf{latent Conditional-RKE Guidance.} We considered the same Gaussian kernel for the image generation with bandwidth \(\sigma_{img} = 0.8\) and used bandwidth parameter \(\sigma_{text} = 0.3\) for the text kernel. The guidance hyperparameter was set to $\eta=0.03$, as in RKE guidance. Similar to the RKE and Vendi guidance, the Conditional-RKE diversity guidance was applied every 10 reverse-diffusion steps.
3

\section{Additional Numerical Results on Conditional-Vendi and Conditional-RKE Guidance}

To demonstrate the advantages of prompt-aware metrics over unconditional Vendi score, we explored their potential to enhance sample diversity in PixArt. We guided the model using both Truncated-Conditional-Vendi and standard Vendi score, following the methodology from \citep{askari2024improving}.

In our implementation, we applied guidance to PixArt in the latent space rather than the ambient space. This approach substantially reduced memory requirements from over 50 GB to approximately 20 GB while maintaining performance. We observed that latent-space guidance not only improves image diversity and quality but also offers significant computational efficiency gains. A comprehensive comparison between latent and ambient-space guidance is included in the Appendix.

Figure~\ref{fig:qualitative_pixart_complete} presents qualitative results using PixArt, demonstrating how prompt-aware guidance generates more relevant and contextually diverse images. Quantitative comparisons between Vendi and Conditional-Vendi guidance methods on PixArt are provided in Table~\ref{tab:diversity_guidance_pixart}, showing that Conditional-Vendi guidance enhances sample diversity (as measured by Vendi score and in-batch similarity) while preserving text-image alignment through competitive CLIPScore and KD metrics.

\begin{figure*}[t]
    \centering
    \includegraphics[width=\linewidth]{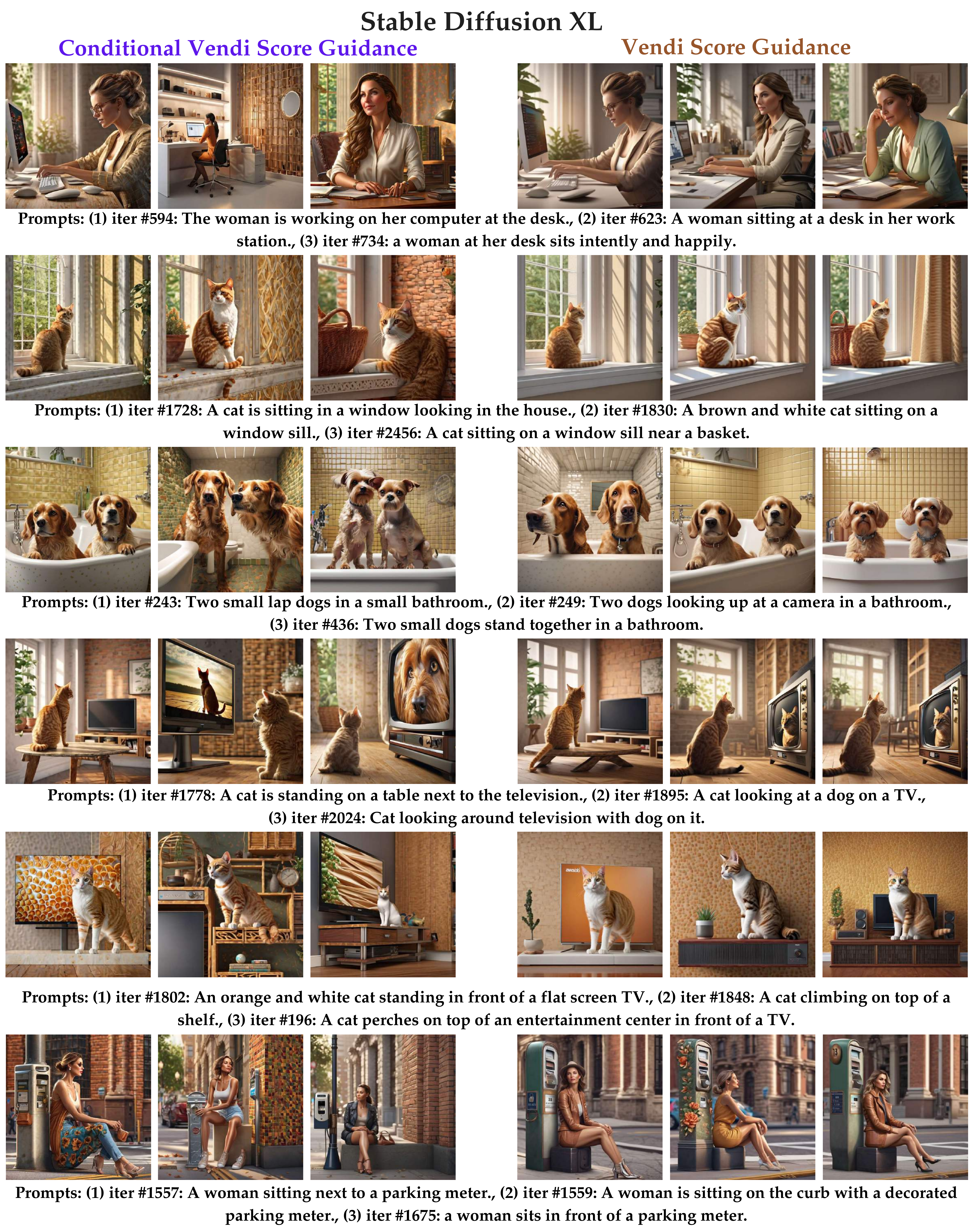}
    \caption{Qualitative comparison of Conditional-Vendi score guidance vs. Vendi score guidance using SD-XL.}
    \label{fig:qualitative_sdxl_complete}
\end{figure*}

\begin{figure*}[t]
    \centering
    \includegraphics[width=\linewidth]{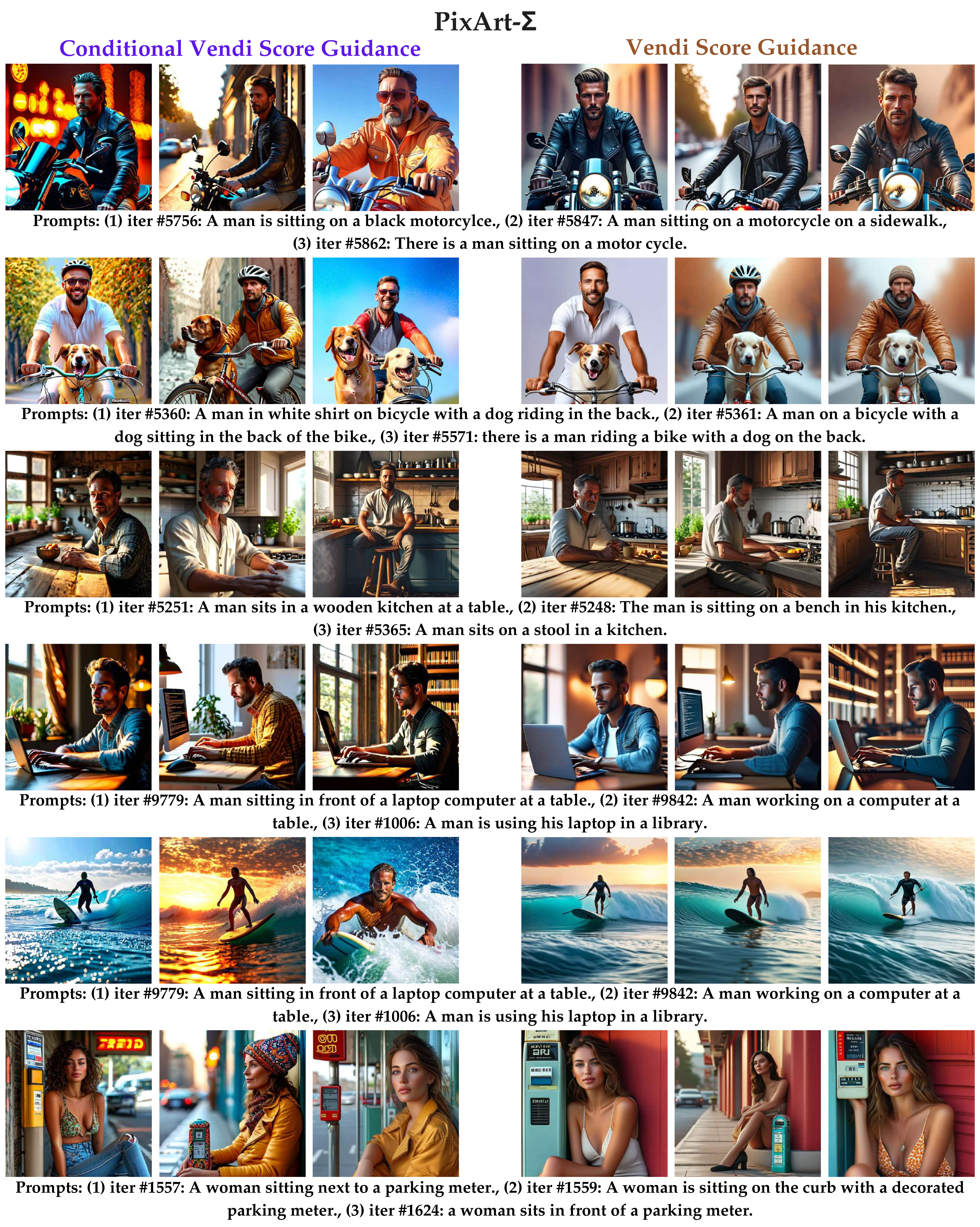}
    \caption{Qualitative comparison of Conditional-Vendi score guidance vs. Vendi score guidance using PixArt-$\Sigma$.}
    \label{fig:qualitative_pixart_complete}
\end{figure*}

\begin{table*}[h]
\centering
\caption{Quantitative comparison of guidance methods on PixArt}
\resizebox{0.95\linewidth}{!}{%
\begin{tabular}{lcccccc}
\toprule
\textbf{Guidance Method} & \textbf{CLIPScore $\uparrow$} & \textbf{KD$\times 10^2\downarrow$} & \textbf{Cond-Vendi$_\text{DINOv2
}$ $\uparrow$} & \textbf{Vendi$_\text{DINOv2
}$ $\uparrow$} & \textbf{In-batch Sim.$\times 10^2\downarrow$} \\
\midrule
Vendi$_\text{CLIP
}$& 29.63 & 38.14 & 26.28 & 261.73 & 83.26 \\
Vendi$_\text{Latent
}$& 30.39 & 36.20 & 28.95 & 298.15 & 81.50 \\
Conditional-Vendi$_\text{Latent
}$& 30.44 & 29.80 & 31.50 & 312.80 & 79.45 \\
\bottomrule
\end{tabular}
} 
\label{tab:diversity_guidance_pixart}
\end{table*}

\section{Additional Numerical Results on the Evaluation of Generative Models}

\subsection{Ablation Studies}

\textbf{Toy example on Gaussian Mixture Models.}
To validate that Conditional-Vendi and Information-Vendi accurately quantify model-induced diversity and prompt correlation, we evaluate these metrics on multiple Gaussian Mixture datasets. As illustrated in Figure~\ref{fig:gaussian_vendi_rke}, we generated separate 2D Gaussian distributions to represent text and image modalities, which we then paired through minimum weight bipartite graph matching.
In the first row, we fix the number of image Gaussian distributions (X) while increasing the number of text modes from 1 (less correlated) to 4 (highly correlated). As shown in the figure, Information-Vendi increases from 2.97 to 4.61, whereas Conditional-Vendi decreases from 2.21 to 1.34. These results indicate that conditional sample diversity is higher when paired with a single text mode compared to scenarios where images are fully aligned with prompts. The correlation between text and images is maximized when there is one cluster of images for each group of prompts.
In the second row, we used two Gaussian distributions for text (T) while varying the number of image modes (X) from 2 to 6. The results show that Conditional-Vendi score increases from 1.55 to 2.46, while Information-Vendi decreases from 4.01 to 3.42. This suggests that when the model generates more modes for a group of prompts, it produces greater model-induced diversity.

\begin{figure}
    \centering
    \includegraphics[width=\linewidth]{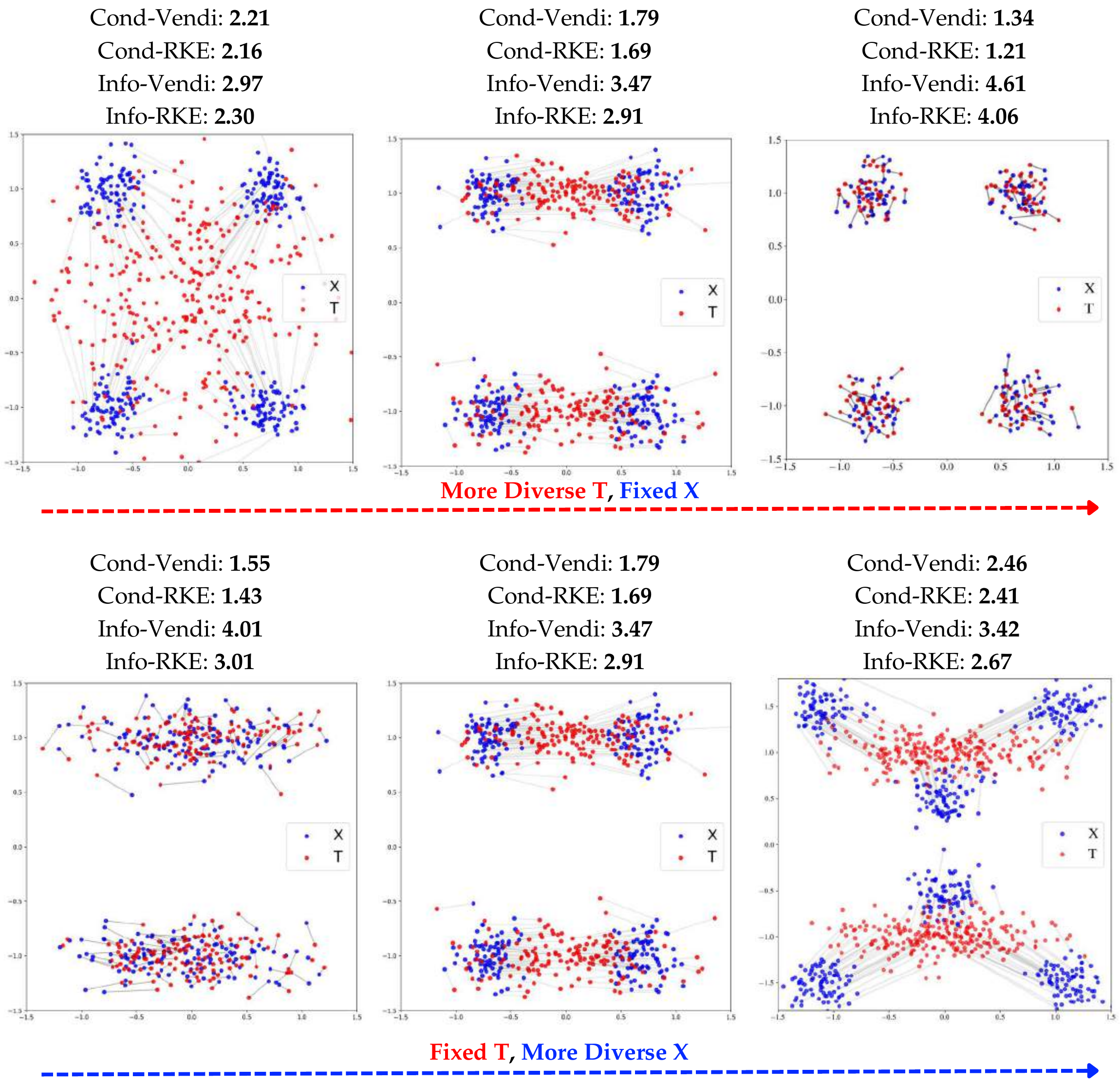}
    \caption{Comparing Conditional-Vendi and Information-Vendi on 2-D Gaussian Distribution. We used 1000 pair of points and used a Gaussian Kernel with bandwidth $\sigma = 0.6$.}
    \label{fig:gaussian_vendi_rke}
\end{figure}


\textbf{Correlation between prompts and generated output.}
To measure the correlation between text and image using Information-Vendi, we used MS-COCO captions to generate images with Stable Diffusion XL and Flux. We gradually substituted the generated images with random ones for the same prompts at different substitution rates. As the substitution rate increased, the correlation between the text and image pairs decreased. In Figure~\ref{fig:substitution_coco}, we measured Information-Vendi at various substitution rates and observed that as the substitution rate increased, Information-Vendi decreased, demonstrating that our score can successfully measure the correlation between text and image.
Unlike other correlation metrics, such as CLIPScore, which require the same embedding for both text and image, our method places no such restriction. This allows for the use of different embeddings for text and image. Furthermore, our approach can be easily generalized to other conditional models, such as text-to-text or text-to-video generation.

\begin{figure}[t]
    \centering
    \includegraphics[width=\linewidth]{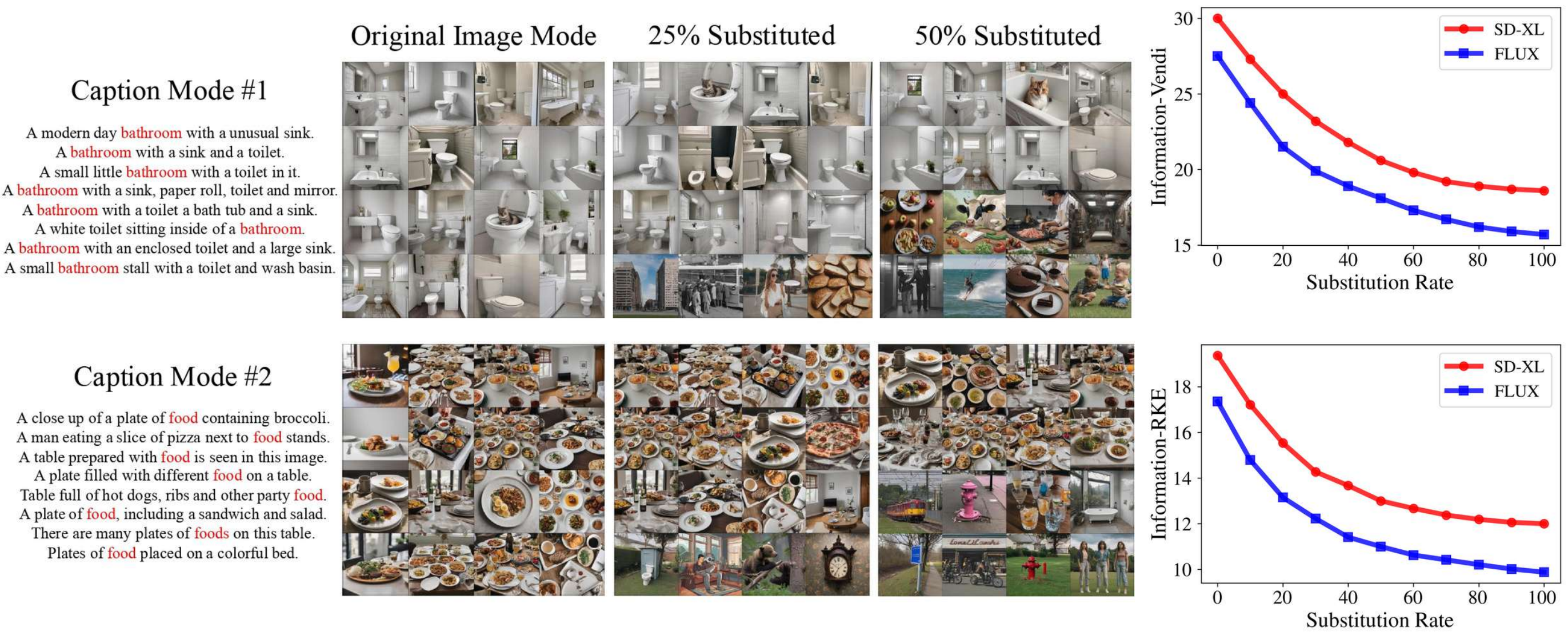}
     \caption{Substituting images generated from models trained on MS-  dataset.}
    \label{fig:substitution_coco}
\end{figure}

\textbf{Correlation between GroundTruth-Cluster-Vendi and Conditional-Vendi Scores.}
To validate the theoretical connection between the Vendi and Conditional-Vendi scores, we performed an experiment and evaluated a baseline metric called GroundTruth-Cluster-Vendi score. To measure the GroundTruth-Cluster-Vendi score, we utilize the side knowledge of the ground-truth clusters of the input prompts and then compute and average the regular Vendi scores for the data generated within each cluster. Mathematically, given $t$ sample cluster sets in $\mathcal{S}=\{S_1,\ldots, S_t\}$, which partition the input text indices $\{1,\ldots, n\}$, we define the Cluster-Vendi score as follows, where $|S_j|$ denotes the cardinality of subset $S_j$:
\begin{align*}
\text{Cluster-Vendi}\bigl(x_1,\ldots,x_n \,|\, \mathcal{S}\bigr) &:= \sum_{i=1}^t \frac{|S_i|}{n} \cdot \mathrm{Vendi}\Bigl(\{x_j : j \in S_i\}\Bigr), \\
\text{Cluster-RKE}\bigl(x_1,\ldots,x_n \,|\, \mathcal{S}\bigr) &:= \sum_{i=1}^t \frac{|S_i|}{n} \cdot \mathrm{RKE}\Bigl(\{x_j : j \in S_i\}\Bigr).
\end{align*}
Note that the above definition requires the knowledge of the clusters, which could be given by an oracle in the case of the GroundTruth-Cluster-Vendi score, or computed by a clustering algorithm such as K-Means to obtain the KMeans-Cluster-Vendi score. Observe that given the knowledge of the clusters revealed by an oracle, the GroundTruth-Cluster-Vendi score is a sensible definition of internal model diversity, which, as shown in Theorem~\ref{Theorem: 1}, is expected to correlate with our defined Conditional-Vendi score.

In the numerical settings of Section~\ref{specified_vs_unspecified}, where we know the ground-truth clusters based on the type of animal or fruit in the texts, we computed the value of the GroundTruth-Cluster-Vendi and GroundTruth-Cluster-RKE score and compared it with the evaluated Conditional-Vendi and RKE scores. As demonstrated in Figures~\ref{fig:animals-sdxl},\ref{fig:animals-pixart-complete}, \ref{fig:fruits-sdxl-complete}, \ref{fig:fruits-pixart-complete}, the two diversity scores, Conditional-Vendi, Cluster-Vendi and Conditional-RK,E and Cluster-RKE, highly correlate for the 4 simulated generative models in the experiments.

However, note that in a real-world scenario, we do not have access to the ground-truth clusters. To estimate the score, we should use a clustering algorithm such as K-Means to find the clusters and compute the Cluster-Vendi score. We note that the optimization problem addressed by standard clustering algorithms represents a challenging non-convex optimization, which, depending on the algorithm's initial point, could converge to different solutions.

\textbf{Measuring Conditional-Vendi across prompt types}
In this section, we conducted additional experiments similar to those in Figure~\ref{fig:inequities_pixart}. We created 10,000 prompts across different categories using GPT-4o and generated corresponding images with text-to-image models. 
We reported Conditional-Vendi and RKE for the top 3 groups in the text data on PixArt-$\alpha$, Stable Diffusion XL
and FLUX 
text-to-image generative models. 

As shown in Figure~\ref{fig:inequities_sdxl}, Figure~\ref{fig:inequities_pixart_appendix}, and Figure~\ref{fig:inequities_flux}, we observed the same behavior during these experiments: the Conditional-Vendi score for "dog" prompts was significantly higher than for the "airplane" and "sofa" categories. This observation suggests that the outputs of generative models are unbalanced when presented with different groups of text prompts.

\begin{figure}[h]
    \centering
    \includegraphics[width=\linewidth]{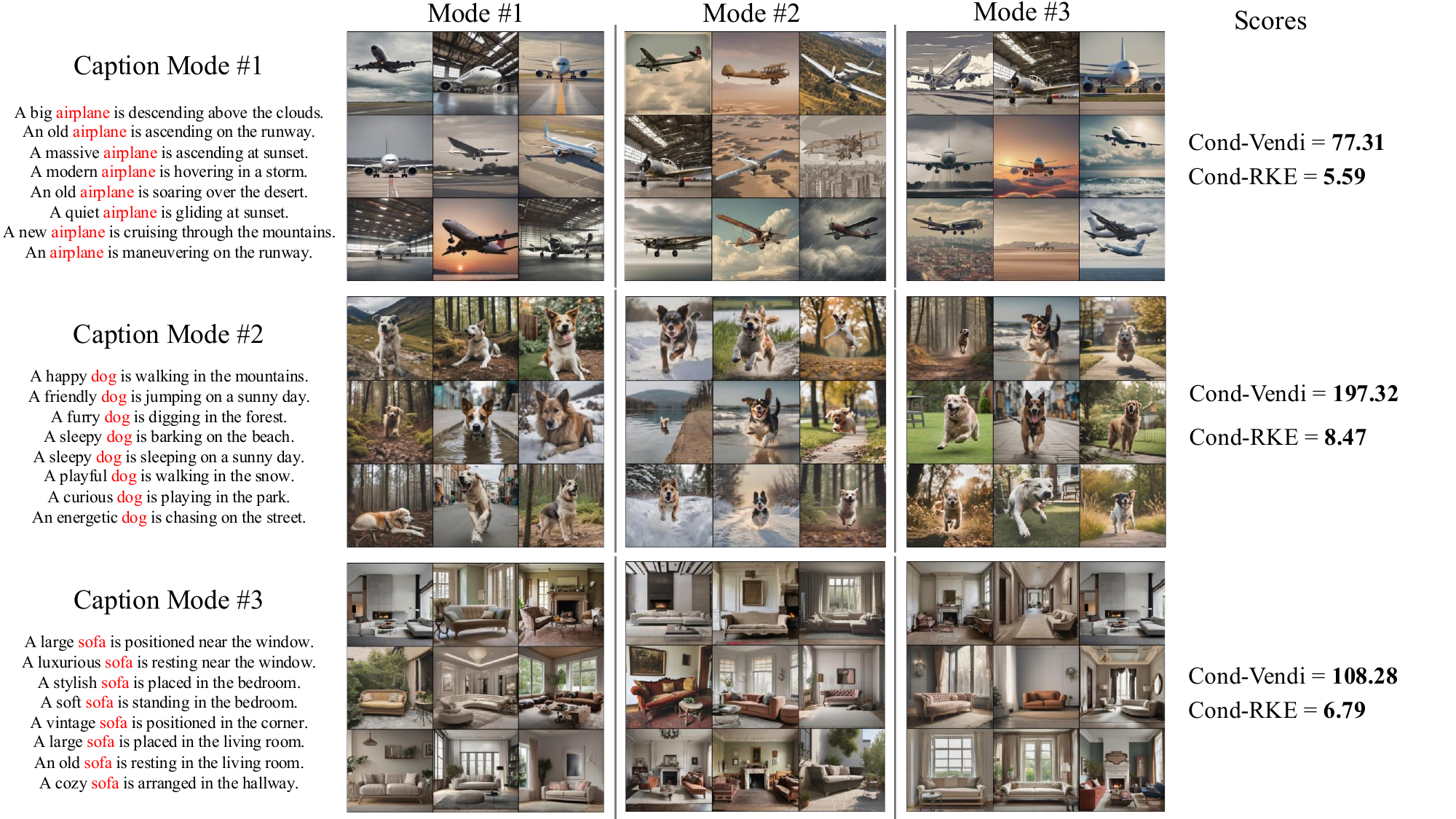}
    \caption{Quantifying image diversity for different clusters of text prompts. Images are generated using the Stable Diffusion XL model.}
    \label{fig:inequities_sdxl}
\end{figure}

\begin{figure}[t]
    \centering
    \includegraphics[width=\linewidth]{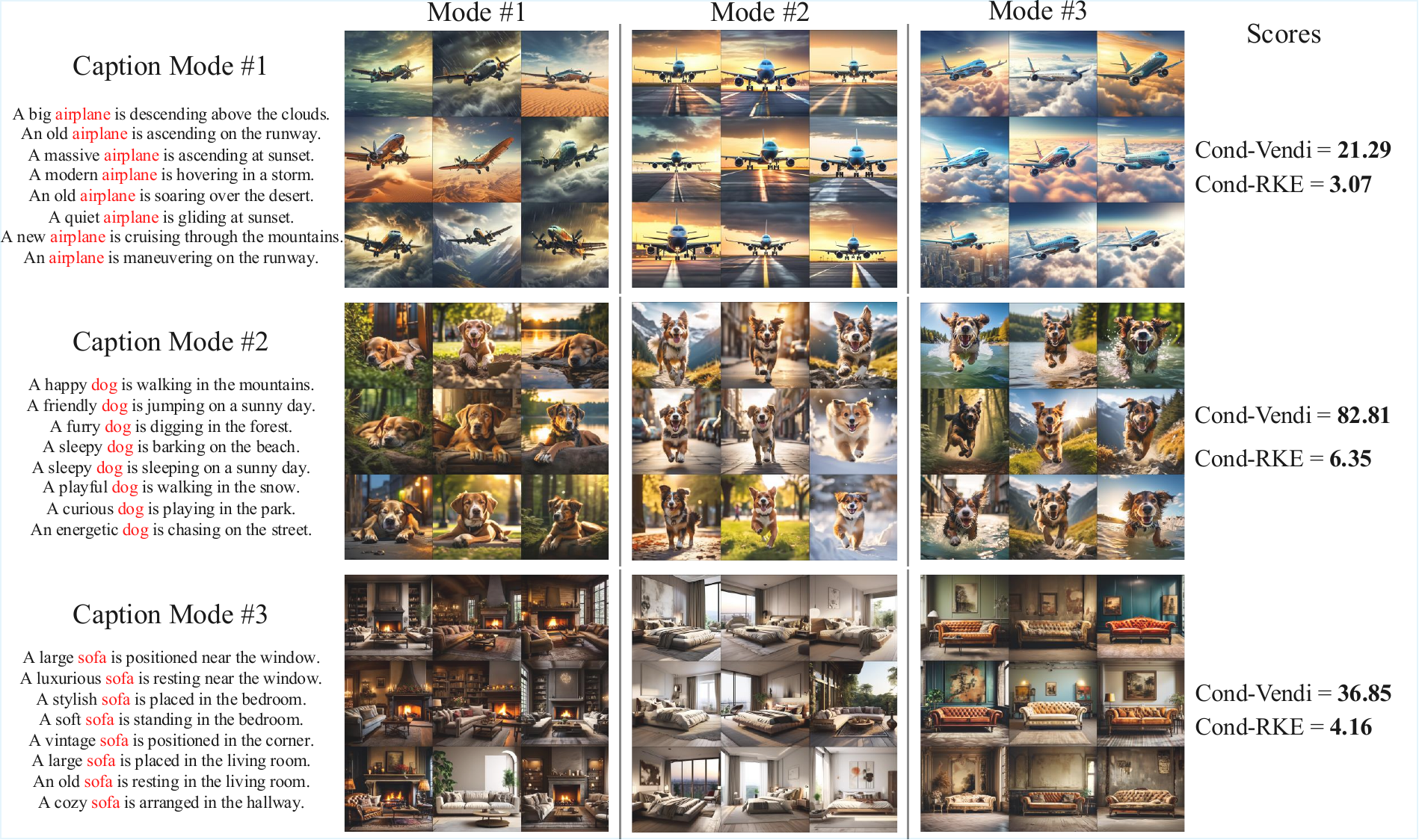}
    \caption{Quantifying image diversity for different clusters of text prompts. Images are generated using the PixArt-$\alpha$ model.}
    \label{fig:inequities_pixart_appendix}
\end{figure}

\begin{figure}[t]
    \centering
    \includegraphics[width=\linewidth]{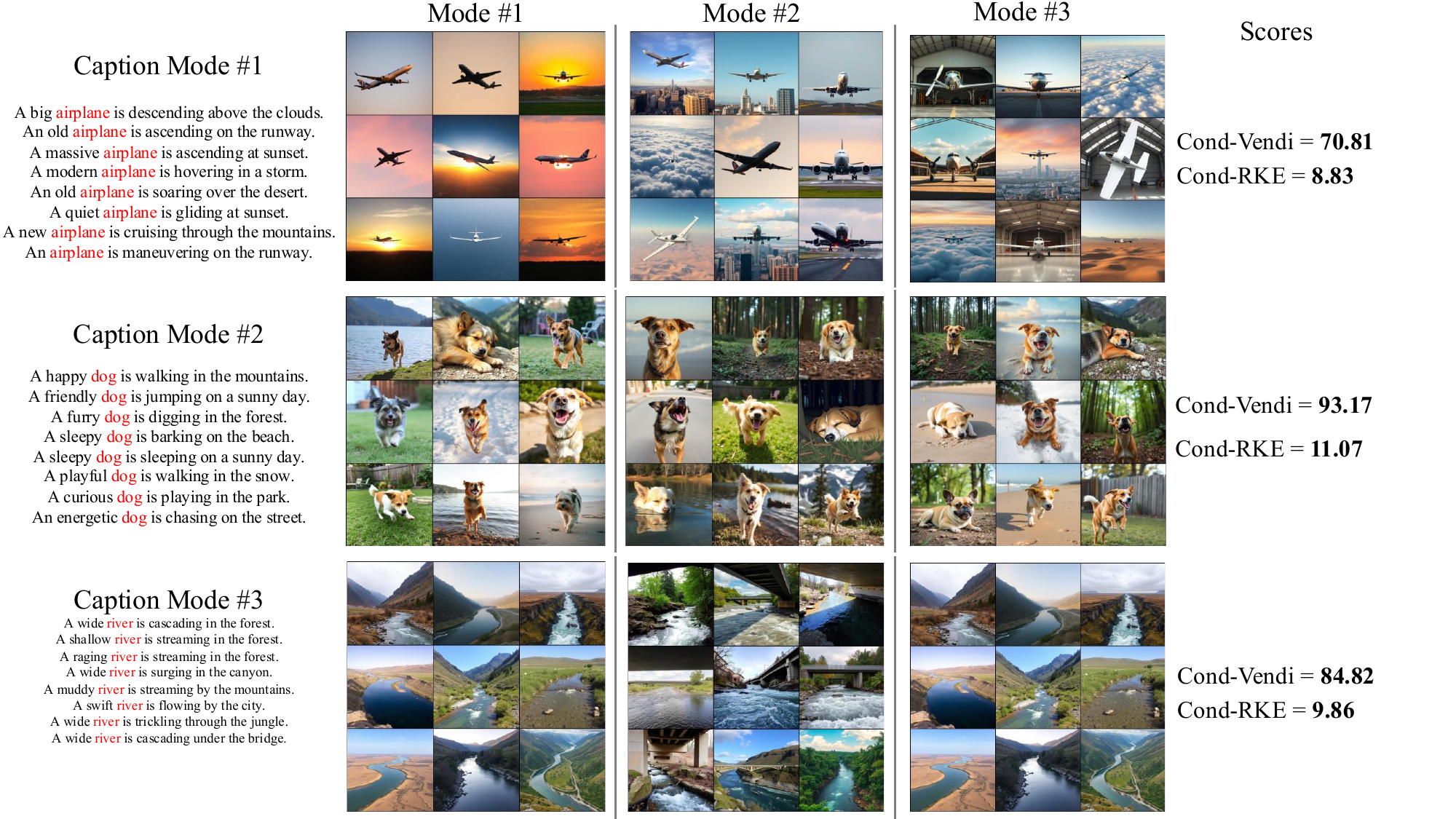}
    \caption{Quantifying image diversity for different clusters of text prompts. Images are generated using the Flux model.}
    \label{fig:inequities_flux}
\end{figure}

\subsection{Quantifying model-induced diversity via Conditional-Vendi and RKE.}
\label{specified_vs_unspecified}

To examine Conditional-Vendi in text-to-image models, we considered 10 types of animals generated by Stable Diffusion XL, as shown in Figure~\ref{fig:animals-sdxl}. We found that Conditional-Vendi and RKE increased more rapidly when the prompts did not specify the type of animal, indicating that the model-generated diversity was driven by its internal variability. In contrast, when the animal types were specified in the prompts, the increase in Conditional-Vendi and RKE was minimal, suggesting that the diversity in the outputs largely followed the constraints imposed by the text prompts. This demonstrates that Conditional-Vendi and Conditional-RKE effectively captures the difference between intrinsic model diversity and prompt-driven diversity. We further extended this experiment to different types of fruits and using a different generative model, PixArt-$\Sigma$ (Figures~\ref{fig:animals-pixart-complete}, \ref{fig:fruits-sdxl-complete}, \ref{fig:fruits-pixart-complete}), and observed the same trend. Additionally, we evaluated Cluster-Vendi and Cluster-RKE as ground-truth measures and observed the same pattern of Conditional-Vendi and RKE, confirming that Conditional-Vendi effectively captures model-intrinsic versus prompt-driven diversity.

To examine Conditional-Vendi in text-to-image models, we performed experiments quantifying diversity scores for unspecified and type-specified prompts across multiple categories and models. We considered nine experimental combinations consisting of two category types: animals and fruits and two state-of-the-art text-to-image models: Stable Diffusion XL (SDXL), and PixArt-$\Sigma$ \citep{pixart-sigma}. For each combination, we generated prompts for 10 different types within the category and created image samples by inputting the prompts into the respective model. In each experiment, we simulated 10 prompt-based generative models by considering image samples from $j$ types for $j \in {1,\ldots, 10}$.

For animals generated by SDXL, as shown in Figure~\ref{fig:animals-sdxl}, Conditional-Vendi and RKE increased more rapidly when the prompts did not specify the type of animal, indicating that model-generated diversity was driven by intrinsic variability. In contrast, when the animal types were specified in the prompts, the increase in Conditional-Vendi and RKE was minimal, suggesting that diversity largely followed the constraints imposed by the text prompts.

We further extended this analysis to fruits and objects and to the PixArt-$\Sigma$ model (Figures~\ref{fig:animals-pixart-complete}, \ref{fig:fruits-sdxl-complete}, \ref{fig:fruits-pixart-complete}). Across all categories and models, we observed the same trend: Conditional-Vendi and RKE increased rapidly for unspecified prompts but grew slowly when the type was specified, validating the correlation between Conditional-Vendi and intrinsic model diversity.

To further validate these results, we evaluated Cluster-Vendi and Cluster-RKE as ground-truth measures of non-prompt-induced diversity. The observed patterns mirrored those of Conditional-Vendi and RKE, confirming that Conditional-Vendi effectively captures the difference between intrinsic model diversity and prompt-driven diversity.


\begin{figure}[t]
    \centering
    \includegraphics[width=0.9\linewidth]{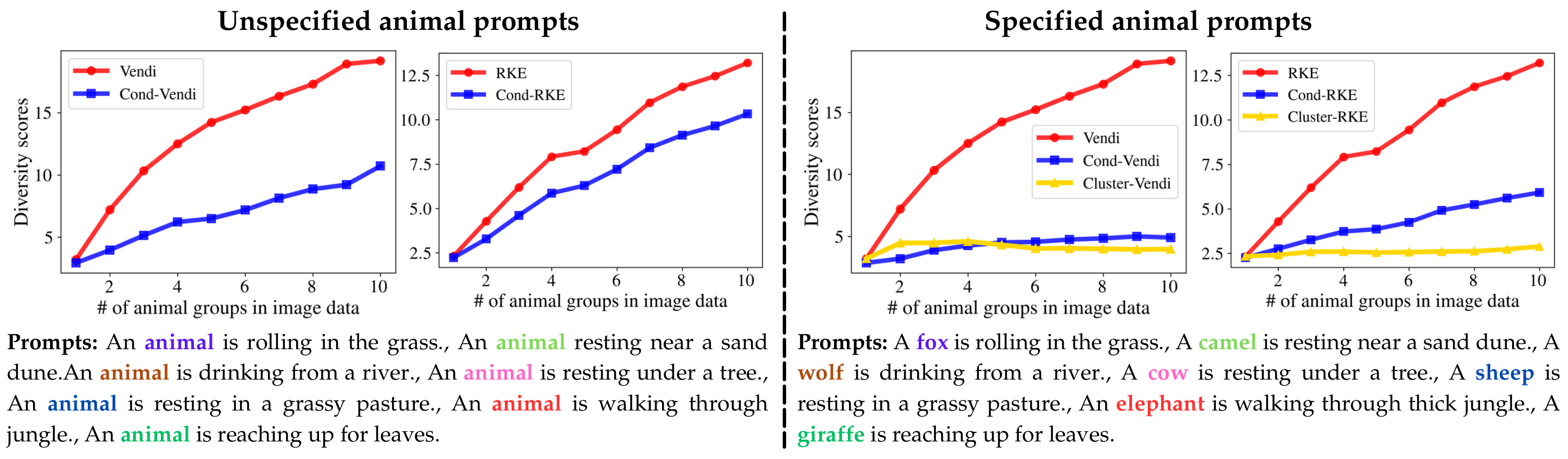}    \includegraphics[width=\linewidth]{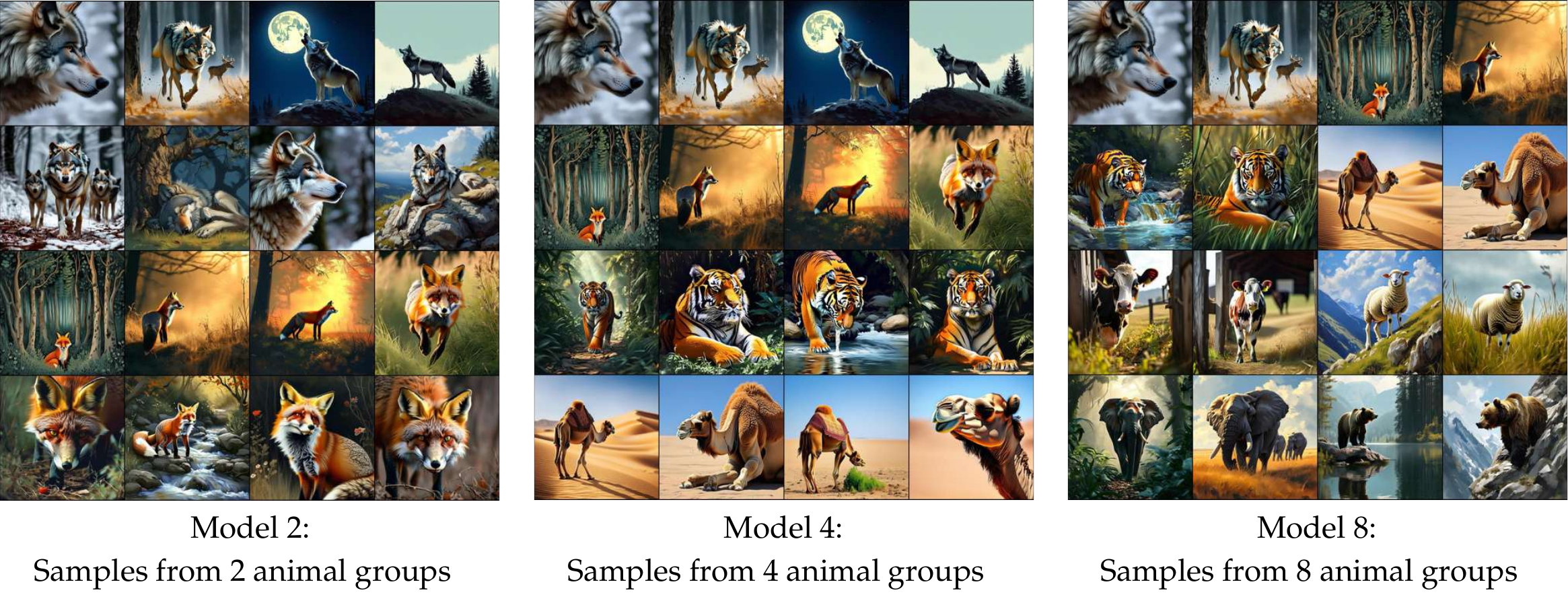}
    \caption{Evaluated Conditional-Vendi, Vendi, Conditional-RKE, and RKE scores on animal samples generated by PixArt$\Sigma$. (Left Plot) We do not specify the animal types in the prompt (Right Plot) we specify the animal types in the prompt.}
    \label{fig:animals-pixart-complete}
    \vspace*{-3mm}
\end{figure}


\begin{figure}[t]
    \centering
    \includegraphics[width=\linewidth]{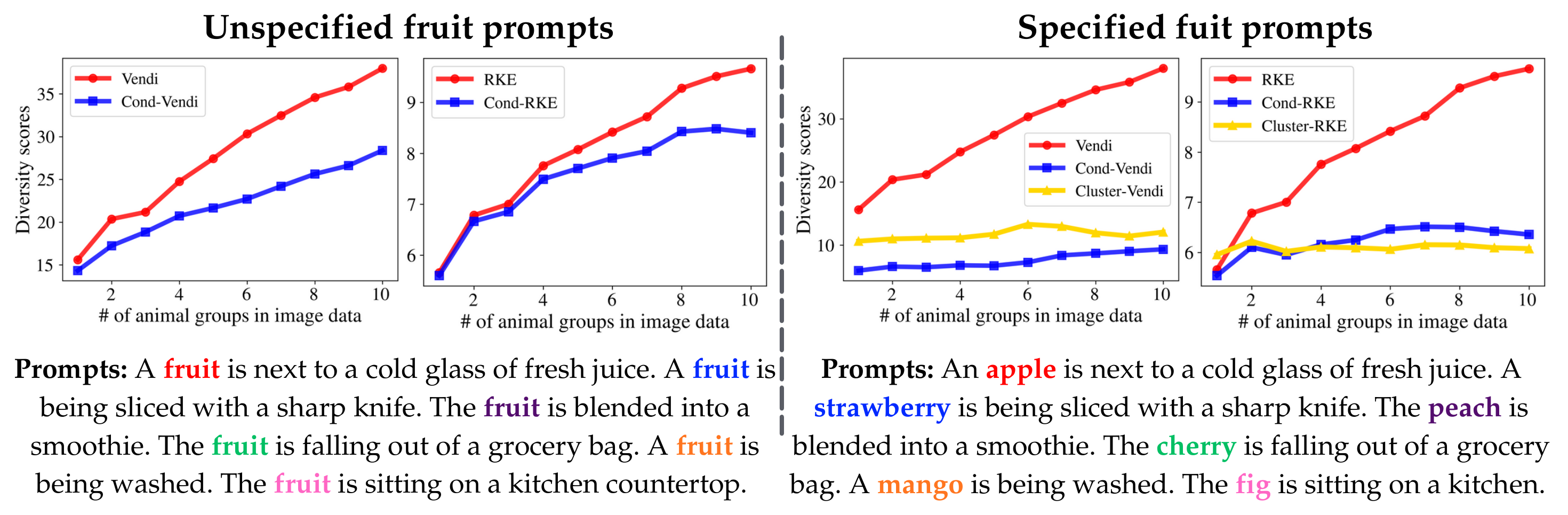}
    \includegraphics[width=\linewidth]{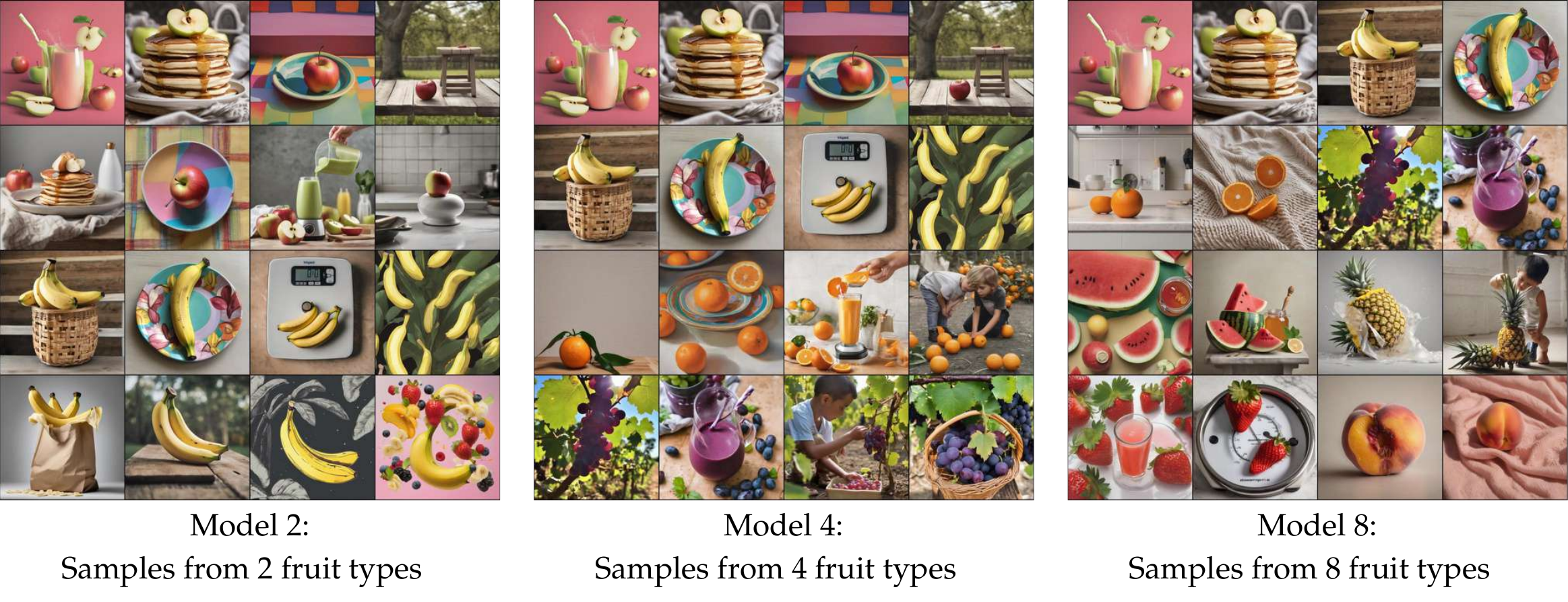}
    \caption{Comparing Conditional-Vendi with Vendi on different fruit types generated by Stable Diffusion-XL.}
    \label{fig:fruits-sdxl-complete}
    \vspace*{-3mm}
\end{figure}

\begin{figure}
    \centering
    \includegraphics[width=\linewidth]{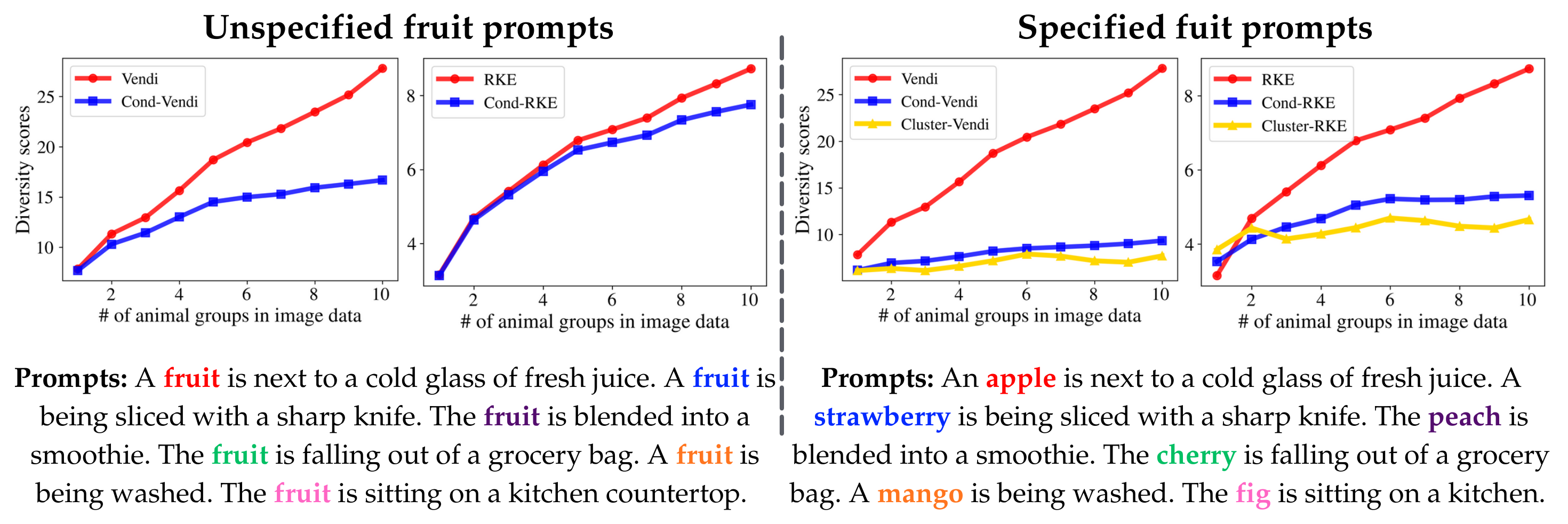}
    \includegraphics[width=\linewidth]{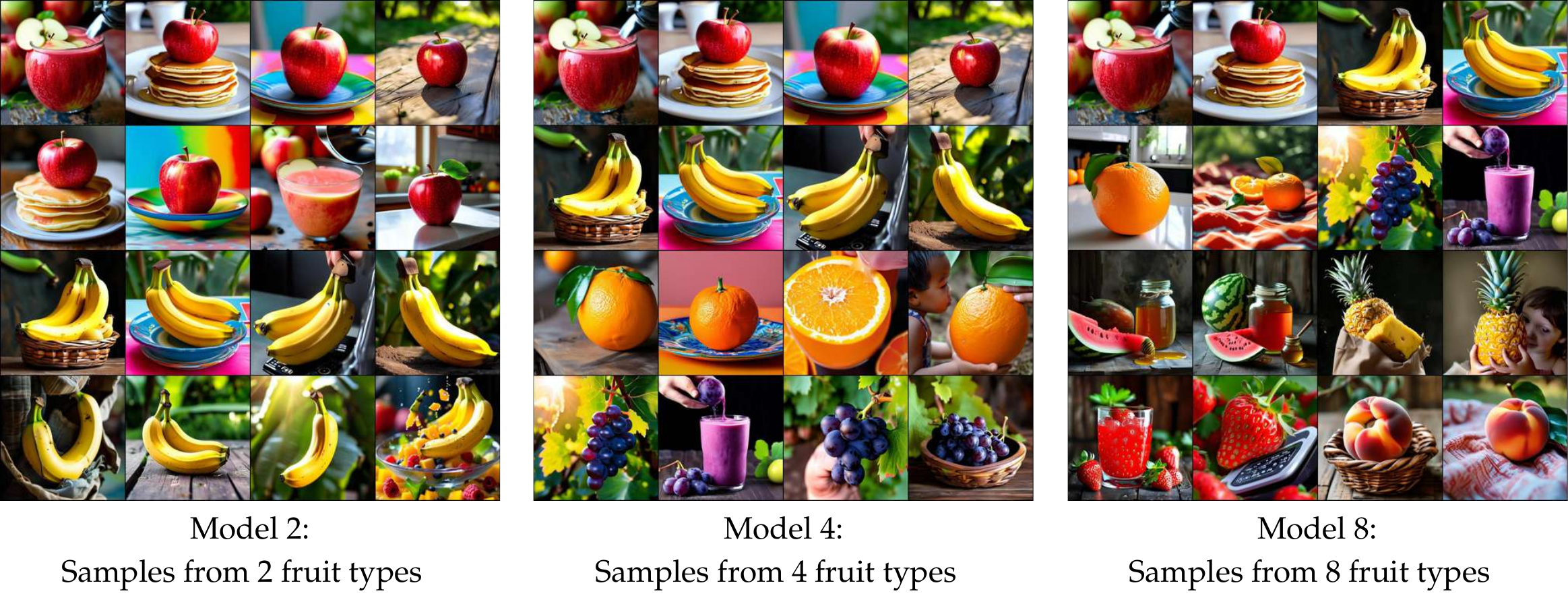}
    \caption{Comparing Conditional-Vendi with Vendi on different fruit types generated by PixArt-$\Sigma$.}
    \label{fig:fruits-pixart-complete}
\end{figure}






\subsection{Convergence Analysis of Conditional-Vendi Score}

To assess the convergence of the Conditional-Vendi and Conditional-RKE scores, we conducted experiments for different sample sizes on samples generated with SDXL and Kandinsky using prompts from the MS-COCO 2014 validation set. We used the cosine similarity for the finite-dimensional kernel and the Gaussian kernel for the infinite-dimensional kernel.
Our results, presented in Figure~\ref{fig:convergence_finite_infinite_kernels_compelte}, show that for RKE, Conditional-RKE converged, while for Conditional-Vendi, the non-truncated score did not converge; our proposed truncated Conditional-Vendi converged with~15000 samples.

\begin{figure*}[t]
    \centering
    \includegraphics[width=\linewidth]{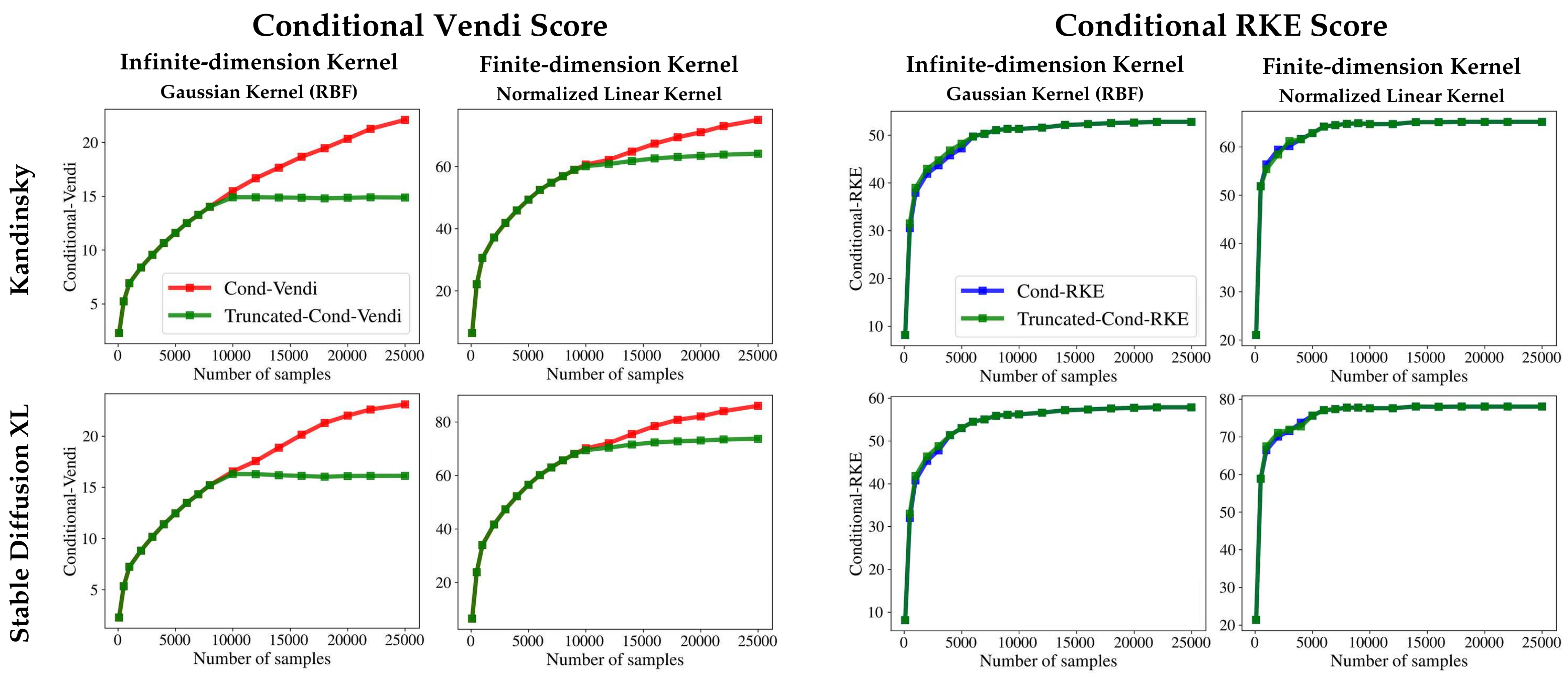}
    
    \caption{Statistical convergence of Conditional-Vendi scores with different sample sizes on data generated by Stable Diffusion-XL and Kandinsky using MS-COCO validation set prompts with finite-dimension cosine similarity and infinite-dimension Gaussian kernel. DINOv2 and CLIP embeddings are used for image and text modalities, respectively.}
    \label{fig:convergence_finite_infinite_kernels_compelte}
\end{figure*}

\subsection{Additional Numerical Evaluation of the Conditional-Vendi Score}
\label{additional_model_diversity_exp}


\begin{figure*}[t]
    \centering
    \includegraphics[width=0.96\linewidth]{figs/image_captioning_v2.pdf}
    \caption{ Conditional-Vendi and Information-Vendi of image-captioning models for 3 image types}
    \label{fig:image-captioning}
\end{figure*}

\textbf{Text-to-Video Model Evaluation.}
For the experiments on video data, to ensure the fairness of our evaluation, we used VBench samples \citep{huang2023vbench}, which generated samples belonging to the 8 content categories. In Figure~\ref{fig:text-to-video-clustering-complete}, we used VideoCrafter-1, Show-1, and Open-Sora-1.2. We observed that VideoCrafter videos look less diverse and, in some cases, may not correlate significantly with the captions when compared to Open-Sora. Confirming this observation, the Conditional-Vendi and Information-Vendi scores were lower for VideoCrafter than those for Open-Sora.

\textbf{Image-Captioning Evaluation.}
For image captioning, we used 10 classes from the ImageNet dataset as input for BLIP-2, GIT and GPT4o-mini. In Figure~\ref{fig:image-captioning}, we compared captions for the top three groups of images: gas pump, church, and cassette player. GIT generated more diverse captions compared to BLIP, which was confirmed by the Conditional-Vendi scores. On the other hand, GPT4o-mini generated longer and more detailed captions compared to GIT, which was also reflected in the evaluated Conditional-Vendi and Information-Vendi scores.

\textbf{Large Language Models Evaluation.}
To evaluate Conditional-Vendi and RKE on LLMs, we varied the temperature parameter and generated 20K short stories with Llama 2 for each temperature setting as shown in Table~\ref{tab:llama_temperature}. We also provided a comparison of the generated prompts in Figure~\ref{fig:llama_samples}. The dataset covered 10 genres, each with 20 distinct subjects and themes. We further tested Conditional-Vendi and Conditional-RKE scores on Gemma 3 \cite{} and Phi 4 Mini \cite{microsoft2025phi4mini}. As shown in Tables~\ref{tab:gemma_temperature} and \ref{tab:qwq_temperature}, both Conditional-Vendi and Conditional-RKE increase with higher temperatures, indicating that the outputs become more diverse.

\begin{figure*}[t]
    \centering
    \includegraphics[width=0.96\linewidth]{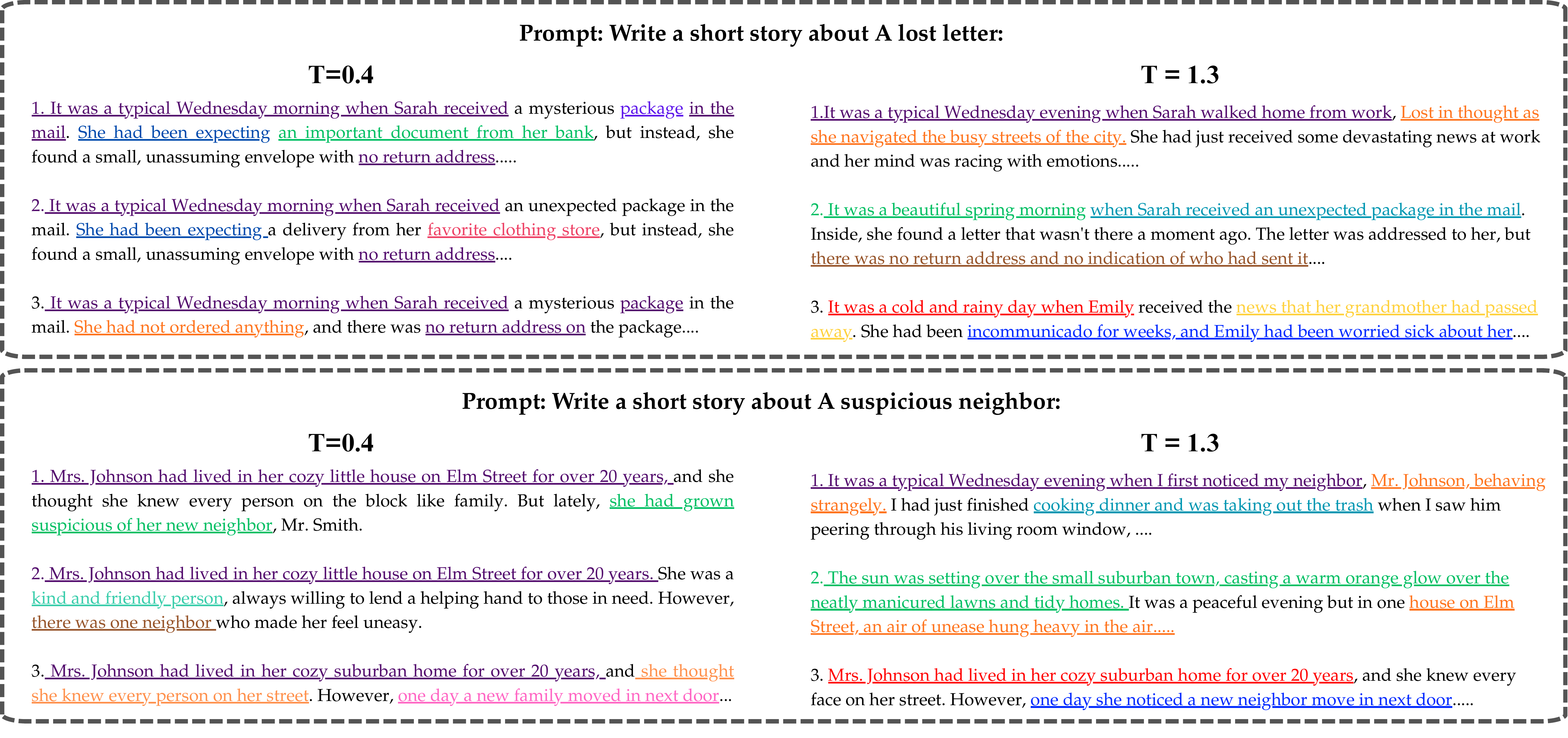}
    \caption{Llama 2 Samples generated with different temperature parameters.}
    \label{fig:llama_samples}
\end{figure*}

\begin{table}[h]
\centering
\caption{Conditional Vendi and RKE Scores evaluated for Gemma 3 with different temperature parameters.}
\begin{tabular}{lcccc}
\toprule
\textbf{Method} & \textbf{$T=0.4$} & \textbf{$T=0.7$} & \textbf{$T=1.0$} & \textbf{$T=1.3$}\\
\midrule
Conditional-Vendi & 40.82 & 42.82 & 44.03 & 48.23 \\
Conditional-RKE & 38.42 & 41.82 & 43.16 & 45.93 \\
\bottomrule
\end{tabular}
\label{tab:gemma_temperature}
\end{table}

\begin{table}[h]
\centering
\caption{Conditional Vendi and RKE Scores evaluated for Phi 4 Mini with different temperature parameters.}
\begin{tabular}{lcccc}
\toprule
\textbf{Method} & \textbf{$T=0.4$} & \textbf{$T=0.7$} & \textbf{$T=1.0$} & \textbf{$T=1.3$}\\
\midrule
Conditional-Vendi & 41.02 & 45.93 & 49.93 & 51.60 \\
Conditional-RKE   & 39.43 & 41.74 & 47.27 & 49.82 \\
\bottomrule
\end{tabular}
\label{tab:qwq_temperature}
\end{table}


    

    

\begin{figure}
    \centering
    \includegraphics[width=1\linewidth]{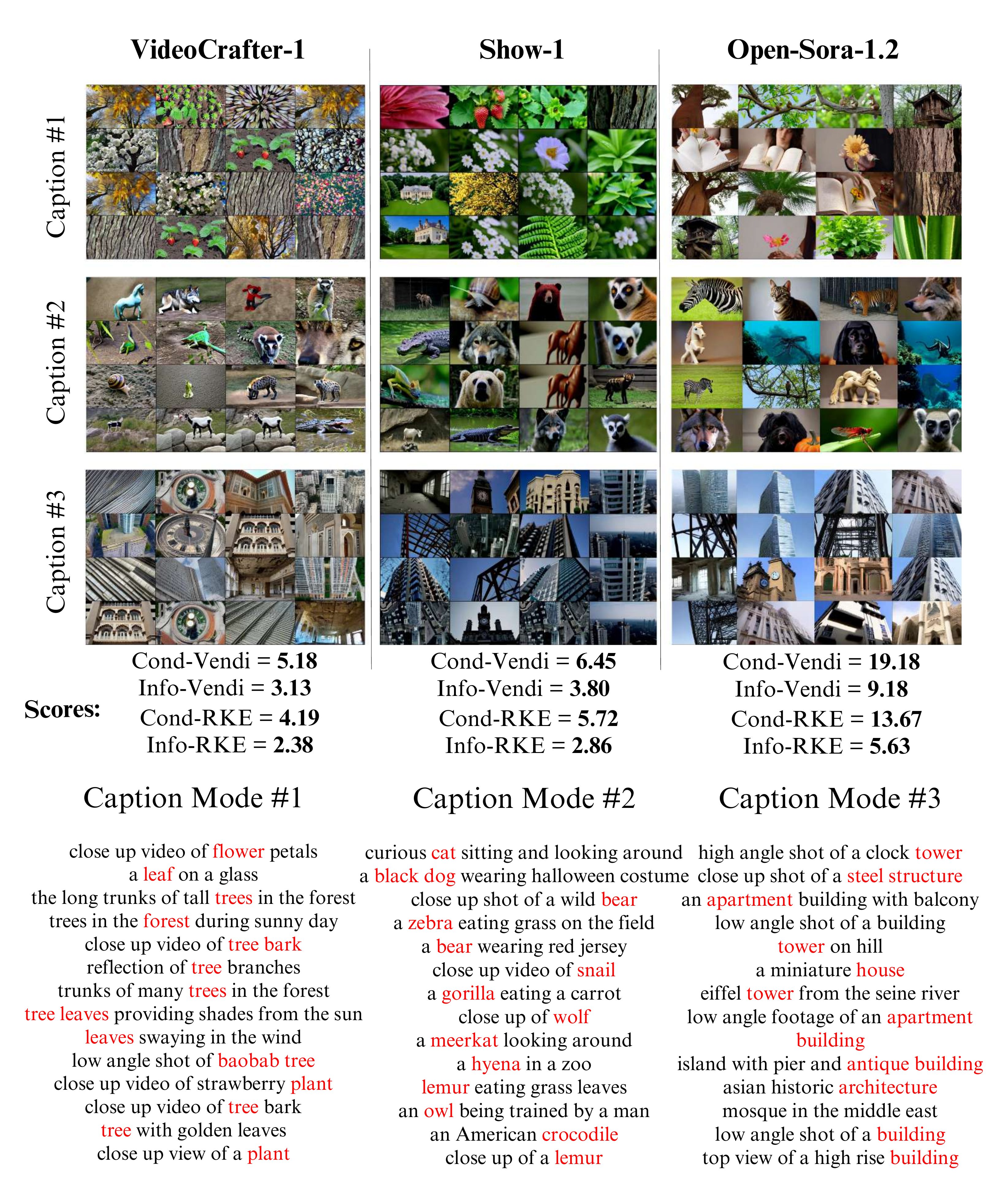}
    \caption{Measuring Conditional-Vendi and Information-Vendi for text-to-video models}
    \label{fig:text-to-video-clustering-complete}
\end{figure}

\textbf{Qualitative results for generative models trained on MS-COCO dataset}
In this section, we provide images and prompts corresponding to Figure~\ref{fig:clustering-text-vendi}. Figure~\ref{fig:mscoco_qualitive} illustrates three clusters obtained by applying KMeans to cluster MS-COCO validation set prompts into 1000 clusters. The images are presented for four generative models. Comparing the prompts with the generated images reveals that FLUX exhibits the highest alignment between text and image, while GigaGAN demonstrates greater diversity but misses some features of the prompts. These observations are further supported by the Conditional-Vendi and Information-Vendi metrics.

\begin{figure*}[t]
    \centering
    \begin{subfigure}{\linewidth}
        \centering
        \includegraphics[width=0.9\linewidth]{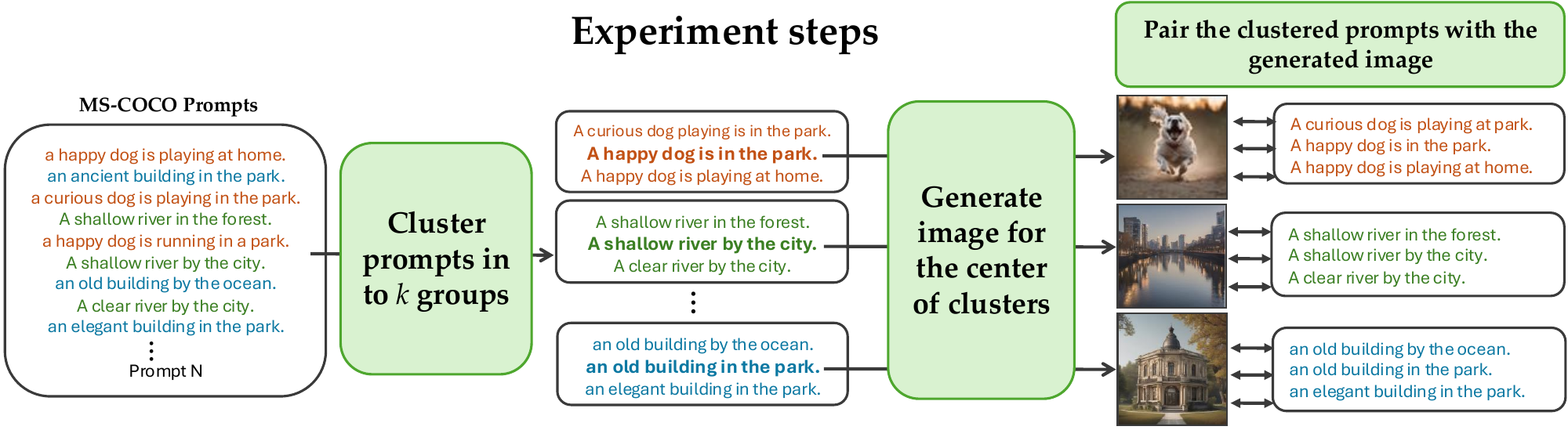}
    \end{subfigure}
    \begin{subfigure}{\linewidth}
        \centering
        \includegraphics[width=\linewidth]{figs/text_clustering_sdxl.pdf}
    \end{subfigure}
    \caption{Conditional and Information Vendi and RKE score comparison across text-to-image models. We clustered MS-COCO prompts into k groups and generated images for each cluster center. Within each cluster, we paired prompts with identical images. The results show increasing diversity and stronger correlation as the number of clusters grows, indicating that clusters become more relevant and diverse with finer partitioning.}
    \label{fig:clustering-text-vendi}
\end{figure*}

\begin{figure}
    \centering
    \includegraphics[width=0.6\linewidth]{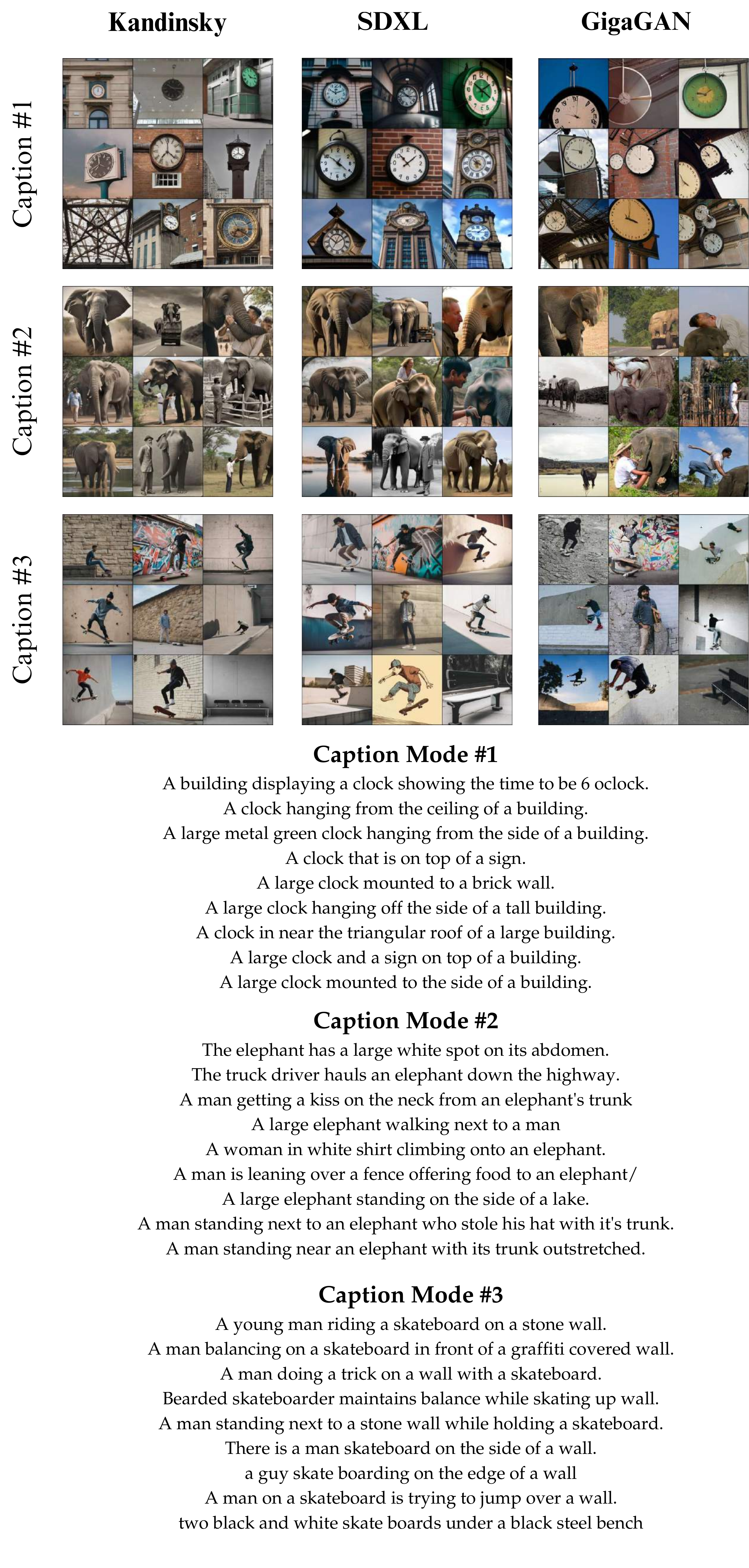}
    \caption{3 clusters of MS-COCO generated samples}
    \label{fig:mscoco_qualitive}
\end{figure}






\thispagestyle{empty}

\end{document}